\title{
Practical estimation of the optimal classification error with soft labels and calibration
}
\author{
  Ryota Ushio$^{1,2}$,
  Takashi Ishida$^{2,1}$,
  Masashi Sugiyama$^{2,1}$
  \\
  $^1$~The University of Tokyo, Tokyo, Japan
  \\
  $^2$~RIKEN AIP, Tokyo, Japan
  \\
  \texttt{\{ushio@ms.,ishida@,sugiyama@\}k.u-tokyo.ac.jp} \\
}
\begin{document}

\maketitle

\begin{abstract}
While the performance of machine learning systems has experienced significant improvement in recent years, relatively little attention has been paid to the fundamental question: to what extent can we improve our models? 
This paper provides a means of answering this question in the setting of binary classification, which is practical and theoretically supported. 
We extend a previous work that utilizes soft labels for estimating the Bayes error, the optimal error rate, in two important ways.
First, we theoretically investigate the properties of the bias of the hard-label-based estimator discussed in the original work.
We reveal that 
the decay rate of the bias
is adaptive to how well the two class-conditional distributions are separated,
and it can decay significantly faster than the previous result suggested
as the number of hard labels per instance grows.
Second, we tackle a more challenging problem setting:
estimation with \emph{corrupted} soft labels.
One might be tempted to use calibrated soft labels instead of clean ones.
However, we reveal that \emph{calibration guarantee is not enough}, that is, even perfectly calibrated soft labels can result in a 
substantially inaccurate estimate. 
Then, we show that isotonic calibration
can provide a statistically consistent estimator under an assumption weaker than that of the previous work.
Our method is \emph{instance-free}, i.e., we do not assume access to any input instances. This feature allows it to be adopted in practical scenarios where the instances are not available due to privacy issues. Experiments with synthetic and real-world datasets show the validity of our methods and theory.
The code is available at \url{https://github.com/RyotaUshio/bayes-error-estimation}.
\end{abstract}

\section{Introduction}
\label{sec:intro}

It is a common practice in the field of machine learning research
to assess the performance of a newly proposed algorithm
using one or more metrics and compare
them to the previous state-of-the-art (SOTA) performance to show its effectiveness \citep{neurips2024, iclr2025, icml2025}.
In classification, arguably the most common one is the error rate, i.e., the expected frequency of misclassification for future data.

While the SOTA performance continues to improve for a wide range of benchmarks over time, 
there is a limit on the prediction performance that any machine learning model can achieve, which is determined by the underlying data distribution.
It is important to know this limit, or the best achievable performance.
For example, if the current SOTA performance is close enough to the limit, 
there is no point in seeking further improvement.
It is not only wasteful in terms of time and financial resources
but also harmful to the environment, since large-scale machine learning models are notorious for their high energy consumption \citep{strubell2020energy, luccioni2023estimating}.
Knowing the best possible performance also provides a practical check for test-set overfitting \citep{recht2018cifar, ishida2023is}: if a model's score on the test set approaches or even exceeds the upper bound, it may be a signal of the model directly training on the test set.

In classification, 
the best achievable error rate for a given data distribution is called the \emph{Bayes error}, and the estimation of the Bayes error has a rich history of research \citep{fukunaga1975k, devijver1985multiclass, berisha2014empirically, moon2018ensemble, noshad2019learning, theisen2021evaluating, ishida2023is, jeong2023demystifying}.
In the case of binary classification, the existing approaches can be roughly categorized into two groups:
the majority of estimation from instance-label pairs $(x_1, y_1), \dots, (x_n, y_n) \in \cX \times \set{0, 1}$ \citep{fukunaga1975k, devijver1985multiclass, berisha2014empirically, moon2018ensemble, noshad2019learning, theisen2021evaluating}, where $\cX$ is the space of instances, and the recently proposed methods of estimation from \emph{soft labels} $\eta_1, \dots, \eta_n \in [0, 1]$ \citep{ishida2023is, jeong2023demystifying}.
A soft label is a special type of supervision that represents the posterior class probability $\eta_i \coloneqq \bbP(y=1 \mid x = x_i), \ i \in \set{1, \dots, n}$, and it quantifies the uncertainty of class labels associated with each instance $x_i$.
The strength of the methods proposed
in \citep{ishida2023is, jeong2023demystifying} based on soft labels is that they are \emph{instance-free}, i.e., they do not require access to the instances $\set{x_i}_{i=1}^n$.
Since the instances themselves are not used for estimation, these methods do not suffer
from the curse of dimensionality even when dealing with very high-dimensional data.
Moreover, the instance-free property 
is practically valuable since it makes the methods easy to apply to real-world problems, such as medical diagnoses, where the instances are often inaccessible due to privacy concerns.
However, these methods have a crucial limitation: they assume that we have direct access to \emph{clean} soft labels $\eta_i = \bbP(y=1 \mid x = x_i)$, which only an oracle would know.
\citet{ishida2023is} also discussed
a scenario where each soft label $\eta_i$ is approximated by an average 
$\widehat{\eta}_i = \frac{1}{m} \sum_{j=1}^m y_i^{(j)}$ of $m$ hard labels $y_i^{(1)}, \dots, y_i^{(m)}$ per instance.
They showed that, for a fixed number $n$ of samples, the bias of the resulting estimator approaches zero as $m$ tends to infinity.
However, their bound on the bias is prohibitively large for practical values of $m$, and thus the theoretical guarantee for this estimator is weak.

\begin{wrapfigure}{r}{0.5\textwidth}
    \centering
    \includegraphics[width=1.0\linewidth]{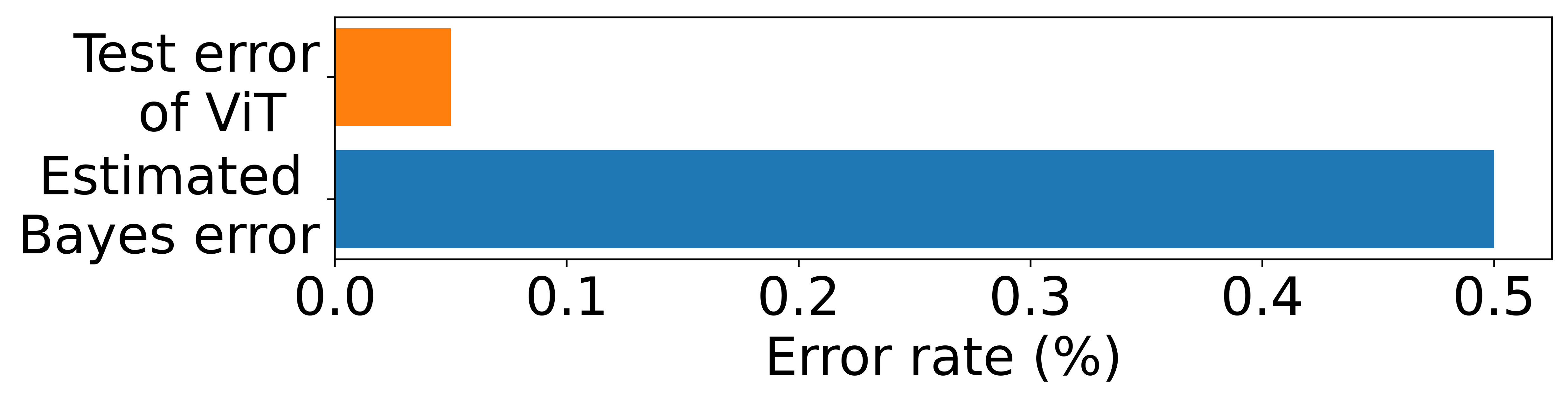}
    \caption{The Bayes error estimated with the method of \citet{ishida2023is} is larger than the test error of a Vision Transformer \citep{dosovitskiy2021an}.}
    \label{fig:intro}
\end{wrapfigure}  
Another issue is \emph{labeling distribution shift}.
    For example, whereas the images in the original CIFAR-10 dataset \citep{cifar10} can be regarded as though they were annotated before they were downscaled, the images in the CIFAR-10H dataset \citep{cifar10h} were annotated after downscaling, making the task more challenging and thus increasing label uncertainty.
    \footnote{For more information, CIFAR-10 was curated as follows. First, images for each class were collected by searching for the class label or its hyponym on the Internet.
    Second, the images were downscaled to $32 \times 32$. Finally, human labelers were asked to filter out mislabeled images.
    On the other hand, CIFAR-10H is a soft-labeled version of CIFAR-10, i.e., it consists of 10,000 test images of the CIFAR-10 test set along with their soft labels. Each soft label was obtained as an average of 47--63 hard labels, which were collected by asking human labelers to answer which class the downscaled image belongs to.
    As a result, the labeling processes are significantly different between these two datasets.}
    Given that the Bayes error can be interpreted as the average label uncertainty, we will get an unreasonably high estimate of the Bayes error if we just plug the soft labels in CIFAR-10H into their estimator, as shown in \Cref{fig:intro}.
    In general, a similar issue can arise due to subjectivity of human soft labelers, or the bias of using large language models (LLMs) as annotators \citep{gilardi2023chatgpt, tjuatja-etal-2024-llms}. Recent work has explored a range of techniques to obtain soft labels and confidence scores from LLMs, but constructing a high-quality soft label remains to be a challenge \citep{xie2024neurips, kadavath2022language, argyle2023out}.
    This distortion issue due to the distribution shift was also mentioned by \citet{ishida2023is}, but no solution was shown in their paper.

\textbf{Contribution of this paper}
We extend the previous work by \citet{ishida2023is} that utilizes soft labels for estimating the Bayes error, the optimal error rate, in two important ways.

First, we deepen the theoretical understanding of the bias of the hard-label-based estimator discussed in the original work.
Specifically, we show that 
the decay speed of the bias
depends on how well the two class-conditional distributions are separated,
and it can approach zero significantly faster than the previous result suggested
as the number $m$ of hard labels per instance grows.

Second,
we discuss a new, more challenging problem of estimation from \emph{corrupted} soft labels.
In this scenario, 
we are given a distorted version of the clean soft labels.
The distortion can arise from a shifted labeling distribution or subjectivity of human/LLM soft labelers.
It reflects many real-world problems including the estimation of best achievable performances for benchmark datasets
and estimation from soft labels obtained from subjective confidence.
One might be tempted to use calibrated soft labels in place of clean ones.
However, we reveal that \emph{calibration guarantee is not enough}, i.e., even perfectly calibrated soft labels can result in a 
substantially inaccurate Bayes error estimate,
which highlights the importance of choosing appropriate calibration algorithms.
Then, we show that a classical calibration algorithm called \emph{isotonic calibration} \citep{zadrozny2002transforming}
can provide a statistically consistent estimator 
as long as the original soft labels are correctly ordered.

\section{Fine-grained analysis of the bias}
\label{chap:method1}

We first explain the preliminaries for this section and then present our main results.

\subsection{Preliminaries}
\label{subsec:method1-preliminaries}

\textbf{Formulation and notations}
\label{subsec:notations}
\quad
Let $\cX \subset \R^d$ and $\cY$ be the spaces of instances and output labels, respectively.
In this paper, we confine ourselves to binary classification problems and thus we set $\cY = \set{0, 1}$. 
Classes 1 and 0 are also called the positive and negative classes, respectively.
Let $\bbP$ be a distribution over $\cX \times \cY$ and $(x, y)$ be a pair of random variables following $\bbP$, where $x \in \R^d$ is an input instance and $y \in \set{0, 1}$ is the corresponding class label.
Throughout this paper, $x_1, \ldots, x_n$ are $n$ i.i.d.\ samples drawn from the marginal distribution $\bbP_\cX$ over the input space $\cX$.
We denote by $\bbP_1, \ \bbP_0$ two class-conditional distributions over the input space $\cX$ given $y = 1$ and $y = 0$, respectively. Note that $\bbP_\cX = \theta \bbP_1 + (1 - \theta) \bbP_0$, where $\theta \coloneqq \bbP(y = 1) \in (0, 1)$ is  the base rate. 
$\eta(x)$ is the \emph{clean soft label} for an instance $x$, i.e., the posterior probability $\bbP(y = 1 \mid x)$ of $y = 1$ given $x$.
The expectation and the variance with respect to the marginal $\bbP_\cX$ are denoted by $\bbE$ and $\mathrm{Var}$, respectively.

For any positive integer $n$, we denote $[n] := \set{1, \ldots, n}$. 
An indicator function is denoted by $\indic{\cdot}$. 
Given a sequence of $n$ random variables $z_1, \ldots, z_n$, 
we use a shorthand $z_{1:n} = (z_1, \ldots, z_n)$.
For $\mu \in \R^d$ and $\Sigma \in \R^{d \times d}$, 
$N(\mu, \Sigma)$ is the Gaussian distribution with mean $\mu$ and covariance $\Sigma$.

\textbf{Estimating the best possible performance with soft labels}\quad
\label{subsec:pre-soft-labels}
Among the most commonly used 
performance measures
would be the error rate.
The error rate
of a classifier $h: \cX \to \cY$ is defined as 
$\err(h) := \E[(x, y) \sim \bbP]{\indic{y \neq h(x)}}$,
where $(x, y)$ is a test instance-label pair drawn independently of training data. 
The best possible error rate $\besterr := \inf_{h: \cX \to \cY} \err(h)$ is called the \emph{Bayes error} \citep{mohri2018foundations}.\footnote{The infimum is taken over all measurable functions.}

Recall that $\eta(x) \coloneqq \bbP(y = 1 \mid x)$. 
\citet{ishida2023is} proposed a direct approach to estimating the Bayes error $\besterr$ assuming access to the soft labels $\set{\eta(x_i)}_{i=1}^n$ rather than instance-label pairs $\set{(x_i, y_i)}_{i=1}^n$. Its derivation is outlined as follows.
First, it is well-known that the Bayes error can be expressed as $\besterr = \E[x \sim \bbP_\cX]{
\min{\set{\eta(x), \ 1 - \eta(x)}}
}$ \citep{cover1968nearest}.
Replacing the expectation with a sample average over $\set{x_i}_{i=1}^n$, 
they obtained an unbiased estimator 
\begin{equation}
\label{eq:ishida-clean-estimator}
\besterrhat\paren{\eta_{1:n}} \coloneqq \frac{1}{n} \sum_{i=1}^n \min \set{\eta_i, 1 - \eta_i}
\text{,}
\end{equation}
where $\eta_i \coloneqq \eta(x_i)$.
It is also statistically consistent, or more specifically, for any $\delta \in (0, 1)$, with probability at least $1 - \delta$, we have
$\abs{\besterrhat\paren{\eta_{1:n}} - \besterr}
\leq 
\sqrt{
\frac{\log (2/\delta)}{8n}
}$.

In practical terms, the \emph{instance-free} nature of this method exhibits considerable advantages over existing methods described in \Cref{sec:related-work} despite its simplicity. 
It can be applied to settings where input instances themselves are unavailable, e.g., due to privacy issues.
On the other hand, one of the crucial drawbacks of this method is that clean soft labels are usually inaccessible in practice. Therefore, 
we have no choice but to substitute some estimates for them. 
\citet{ishida2023is} also considered a setting where $\eta_i$ is approximated by an average of hard labels
$\widehat{\eta}_i := \frac{1}{m} \sum_{j=1}^m y_i^{(j)}$,
where $y_i^{(1)}, \ldots, y_i^{(m)}$ are $m$ hard labels each of which is drawn independently from the posterior distribution of the class labels given the instance $x_i$.
In practice, $y_i^{(1)}, \ldots, y_i^{(m)}$ could be collected by asking $m$ different human labelers to 
answer whether $x_i$ belongs to class $1$. 
CIFAR-10H \citep{cifar10h} and Fashion-MNIST-H \citep{ishida2023is} are examples of a dataset constructed as such.
By plugging $\widehat{\eta}_i$ in place of $\eta_i$, they obtained the  estimator
$\besterrhat(\widehat{\eta}_{1:n}) = 
\frac{1}{n} \sum_{i=1}^n \min \set{\widehat{\eta}_i, 1 - \widehat{\eta}_i}$.
They showed that
the bias is bounded as 
\begin{equation}
\label{eq:ishida-bias}
\abs{\E{\besterrhat(\widehat{\eta}_{1:n})} - \besterr}
\leq 
\frac{1}{2 \sqrt{m}} + \sqrt{\frac{\log (2 n \sqrt{m})}{m}}
\text{,}
\end{equation}
and thus vanishes as $m \to \infty$ given $n$ fixed. 
However, the rate
$\tilde{\cO}\paren{1/\sqrt{m}}$
is quite slow given that $m$ is typically much smaller than $n$.
For example, in the case of the CIFAR-10H dataset, each image is given only around $50$ hard labels.
Later in \Cref{subsection:method1-results}, we will show that this rate can be significantly improved for well-conditioned distributions.
In addition, the second term on the right-hand side of \eqref{eq:ishida-bias} increases as $n$ grows, which appears to be unnatural.
We also show that the bias can be upper-bounded by a quantity irrelevant to $n$.

\subsection{Main results}
\label{subsection:method1-results}

\textbf{Improved bound on the bias}\quad
The following theorem provides a new bound on the bias of the estimator $\besterrhat(\widehat{\eta}_{1:n})$.
The proofs for all results in this section can be found in \Cref{sec:method1-proofs}.

\begin{theorem}
[restate=avghardlabelsasymptoticunbiasednes
]
\label{thm:avg-hard_-labels}
We have
\begin{equation}
\label{eq:bias-bound-err}
- 
\E[x \sim \marginal]{\min \set{
\frac{L_\err(\eta(x))}{m}, \ 
\sqrt{\frac{\pi}{2m}}%
}}
\leq
\E{\besterrhat(\widehat{\eta}_{1:n})} - \besterr
\leq 0 
\text{,}
\end{equation}
where
$L_\err(q)$ is
$\frac{q(1-q)}{\abs{2q - 1}}$ if $q \neq 0.5$
and $\infty$ if $q = 0.5$.
\end{theorem}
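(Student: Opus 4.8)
The plan is to work conditionally on a fixed instance $x$ and analyze the single-instance bias $\E{\min\set{\widehat\eta, 1-\widehat\eta}} - \min\set{\eta(x), 1-\eta(x)}$, where $\widehat\eta = \frac{1}{m}\sum_{j=1}^m y^{(j)}$ with $y^{(j)} \sim \mathrm{Bernoulli}(\eta(x))$ i.i.d., so that $m\widehat\eta \sim \mathrm{Binomial}(m, \eta(x))$. Since the map $q \mapsto \min\set{q, 1-q}$ is concave, Jensen's inequality immediately gives $\E{\min\set{\widehat\eta,1-\widehat\eta}} \le \min\set{\eta(x),1-\eta(x)}$, which yields the upper bound $\le 0$ in \eqref{eq:bias-bound-err} after taking the expectation over $x \sim \marginal$. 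The work is entirely in the lower bound, i.e., in showing the concavity gap is small.

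For the lower bound I would split into two estimates and take the better (hence the $\min$ in the bound). The $\sqrt{\pi/(2m)}$ term is the crude, distribution-free estimate: write $\min\set{q,1-q} = \frac12 - \frac12\abs{2q-1}$, so $\min\set{\eta,1-\eta} - \E{\min\set{\widehat\eta,1-\widehat\eta}} = \frac12\paren{\E{\abs{2\widehat\eta-1}} - \abs{2\eta-1}} \le \frac12\E{\abs{2\widehat\eta - 2\eta}} = \E{\abs{\widehat\eta - \eta}}$, and then bound $\E{\abs{\widehat\eta-\eta}} \le \sqrt{\mathrm{Var}(\widehat\eta)} = \sqrt{\eta(1-\eta)/m} \le \sqrt{1/(4m)} = 1/(2\sqrt m)$; a slightly sharper constant $\sqrt{\pi/(2m)}$ should come from a more careful treatment of $\E{\abs{\widehat\eta - \eta}}$ for the binomial (e.g., via the known mean absolute deviation of the binomial, or a Gaussian-type comparison), which is the kind of routine-but-delicate estimate I would defer to a lemma. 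The $L_\err(\eta(x))/m$ term is the adaptive estimate, and this is where the paper's promised generalization of Jensen's inequality enters: for a concave function $\phi$, one wants a \emph{reverse} Jensen bound of the form $\phi(\E{Z}) - \E{\phi(Z)} \le C \cdot \mathrm{Var}(Z)$ with a constant $C$ governed by the local curvature/modulus of $\phi$ near $\E{Z}$. Here $\phi(q) = \min\set{q,1-q}$ is piecewise linear with a single kink at $q=1/2$; away from the kink it is locally linear, so the only contribution to the gap comes from the event that $\widehat\eta$ lands on the opposite side of $1/2$ from $\eta$. Quantifying this: if $\eta < 1/2$ (say), then $\min\set{q,1-q} = q$ for $q \le 1/2$ and $=1-q$ for $q>1/2$, so the gap is $\E{(2\widehat\eta - 1)\indic{\widehat\eta > 1/2}}$, which one bounds by a second-moment / Chebyshev-type argument using that crossing $1/2$ requires a deviation of size $\abs{\eta - 1/2}$; carrying the bookkeeping through produces exactly $\eta(1-\eta)/(\abs{2\eta-1}\,m)$, i.e., $L_\err(\eta)/m$. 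When $\eta = 1/2$ the kink cannot be avoided and $L_\err = +\infty$, consistent with the statement.

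The main obstacle I anticipate is getting the \emph{sharp constants} in both pieces — in particular deriving the generalized Jensen inequality (the extension of \citep{liao2018sharpening}) in a form that (a) handles the non-smooth kink of $\min\set{\cdot, 1-\cdot}$ rather than a twice-differentiable $\phi$, and (b) localizes the curvature so that the bound depends on $\eta(x)$ through $L_\err(\eta(x))$ rather than through a global worst case. The Bernoulli-specific structure ($\widehat\eta$ is a scaled binomial, so deviations come in increments of $1/m$ and one has exact control of tail probabilities) is what makes the $1/m$ rate — as opposed to $1/\sqrt m$ — attainable, and threading that through the generalized-Jensen machinery while keeping constants clean will be the delicate part. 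Once the per-instance bound $\min\set{\eta(x),1-\eta(x)} - \E{\min\set{\widehat\eta,1-\widehat\eta}} \le \min\set{L_\err(\eta(x))/m,\ \sqrt{\pi/(2m)}}$ is established, I would finish by taking $\E[x\sim\marginal]{\cdot}$ of both sides and invoking the tower property (the hard labels across instances are independent given $x_{1:n}$, and $\besterr = \E[x\sim\marginal]{\min\set{\eta(x),1-\eta(x)}}$) to get \eqref{eq:bias-bound-err}.
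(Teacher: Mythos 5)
Your outline is essentially correct and uses the same decomposition as the paper: condition on $x$, bound the per-instance Jensen gap for $\phierr(q) = \min\set{q, 1-q}$ in two ways (one adaptive in $\eta(x)$, one distribution-free), take the $\min$, and integrate over $x \sim \marginal$. What differs is the packaging. The paper first develops an abstract generalized Jensen inequality, bounding $\E{\phi(Z)} - \phi(\mu)$ via $\inf_z h(z;a)$ and $\sup_z h(z;a)$ where $h(z;a) = \paren{\phi(z) - \phi(\mu) - a^\top(z-\mu)}/\enorm{z-\mu}^2$, and then specializes to $\phi = \phierr$ with $a = \pm 1$; you instead work directly with the scalar function. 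Your identification of the gap as $\E{(2\widehat{\eta} - 1)\indic{\widehat{\eta} > 1/2}}$ when $\eta < 1/2$ is exactly right, and the ``bookkeeping'' you defer amounts to the elementary inequality $(2\widehat{\eta} - 1)(1-2\eta) \le (\widehat{\eta} - \eta)^2$ on the crossing event — equivalently $4uv \le (u+v)^2$ with $u = \widehat{\eta} - 1/2$ and $v = 1/2 - \eta$ — which gives $(2\widehat{\eta} - 1)\indic{\widehat{\eta} > 1/2} \le (\widehat{\eta} - \eta)^2/(1-2\eta)$ and, after taking expectations, exactly $L_\err(\eta)/m$. This is the same algebra sitting inside the paper's computation of $\inf_z h(z;1) \ge -1/(1-2p)$. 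The trade-off is that the paper's framework is built to handle multivariate non-smooth $\phi$ and is presented for independent interest, while your direct argument is shorter and keeps the constants in plain view. One small slip worth flagging: $\sqrt{\pi/(2m)} \approx 1.25/\sqrt{m}$ is \emph{looser}, not sharper, than your Cauchy--Schwarz bound $\E{\abs{\widehat{\eta} - \eta}} \le \sqrt{\mathrm{Var}(\widehat{\eta})} \le 1/(2\sqrt{m})$; the paper obtains $\sqrt{\pi/(2m)}$ by integrating Hoeffding's tail, so your Cauchy--Schwarz route would in fact prove a slightly stronger version of the distribution-free piece rather than a weaker one.
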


First of all, we note that our upper bound does not contain $n$ unlike the existing result \eqref{eq:ishida-bias} by \citet{ishida2023is}.
More importantly, our bound
\eqref{eq:bias-bound-err} indeed improves upon 
theirs (\Cref{prop:bias-bound-is-tighter}).
We only need a weaker version of \eqref{eq:bias-bound-err} to show the improvement:
$
-\sqrt{\frac{\pi}{2m}}
\leq
\E{\besterrhat(\widehat{\eta}_{1:n})} - \besterr
\leq 0
$.

Although the $\frac{L_\err(\eta(x))}{m}$ term in the left-hand side of \eqref{eq:bias-bound-err} is unnecessary to improve the previous result \eqref{eq:ishida-bias}, it provides further insights.
\Cref{fig:L_err} in \Cref{sec:method1-proofs} shows the graph of the function $L_\err(p)$.
It takes a near-zero value when $p$ is close to $0$ or $1$ and diverges to infinity as $p \to 0.5$.
This means that $L_\err(\eta(x))$ is large for instances $x$ close to the Bayes-optimal decision boundary $\eta(x) = 0.5$, while
it is close to zero for instances far away from it.
Therefore, 
the rate at which the bias decays can be understood as 
a mixture of the fast rate $1/m$ and the slower rate $1/\sqrt{m}$, whose weights are determined by how well the two classes are separated. 
We validate this with numerical expeirments in \Cref{sec:toy-exp-method1}.

\textbf{Well-separated cases}\quad Here we note that real-world datasets often have well-separated classes; see, e.g., \Cref{fig:eta_dist} in \Cref{sec:method1-proofs}.
If the two classes are perfectly separated,
the bias can decay at the fast rate $\cO(1/m)$,
as opposed to the worst cases rate $\cO(1/\sqrt{m})$.

\begin{corollary}[restate=avghardlabelsasymptoticunbiasednessseparated, name=]
\label{cor:avg-hard-labels-bias-separated} 
Suppose there exists a constant $c > 0$ such that 
$\abs{\eta(x) - 0.5} \geq c$ holds almost surely. 
Then, we have 
\begin{equation}
\label{eq:avg-hard-labels-bias-separated}
- \frac{1-4c^2}{8cm} \leq \E{\besterrhat\paren{\widehat{\eta}_{1:n}}} - \besterr \leq 0
\text{.}
\end{equation}
\end{corollary}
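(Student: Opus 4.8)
The plan is to derive the corollary directly from \Cref{thm:avg-hard_-labels}. Since the upper bound $\E{\besterrhat(\widehat{\eta}_{1:n})} - \besterr \le 0$ is already supplied by the theorem, I only need to sharpen the lower bound. Starting from the inequality $-\E[x \sim \marginal]{\min\set{L_\err(\eta(x))/m,\ \sqrt{\pi/(2m)}}} \le \E{\besterrhat(\widehat{\eta}_{1:n})} - \besterr$, I would drop the $\sqrt{\pi/(2m)}$ branch using $\min\set{a,b} \le a$, which reduces the whole task to establishing the pointwise bound $L_\err(\eta(x)) \le (1-4c^2)/(8c)$ almost surely; monotonicity of the expectation then finishes the argument.

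To prove that pointwise bound, I would reparametrize by writing $q = \eta(x) = \tfrac12 + s$, so that the hypothesis $\abs{\eta(x) - 0.5} \ge c$ becomes $\abs{s} \ge c$, and since $q \in [0,1]$ we have $\abs{s} \in [c, \tfrac12]$ (in particular the assumption only makes sense for $c \in (0, \tfrac12]$). A direct computation gives $q(1-q) = \tfrac14 - s^2$ and $\abs{2q-1} = 2\abs{s}$, hence $L_\err(q) = (1 - 4s^2)/(8\abs{s})$. It then remains to maximize $g(t) \coloneqq (1-4t^2)/(8t) = 1/(8t) - t/2$ over $t \in [c, \tfrac12]$. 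Since $g'(t) = -1/(8t^2) - 1/2 < 0$, the function is strictly decreasing on this interval, so its maximum is $g(c) = (1-4c^2)/(8c)$, giving $L_\err(\eta(x)) \le (1-4c^2)/(8c)$ almost surely.

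Putting the pieces together yields $\E[x \sim \marginal]{\min\set{L_\err(\eta(x))/m,\ \sqrt{\pi/(2m)}}} \le \E[x \sim \marginal]{L_\err(\eta(x))}/m \le (1-4c^2)/(8cm)$, and combining this with \Cref{thm:avg-hard_-labels} establishes \eqref{eq:avg-hard-labels-bias-separated}.

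I do not anticipate a real obstacle here: the statement is essentially a one-line consequence of \Cref{thm:avg-hard_-labels} together with an elementary single-variable optimization. The only point worth flagging is the degenerate endpoint $c = \tfrac12$, at which the bound collapses to $0$; this is consistent with the observation that $\eta(x) \in \set{0,1}$ almost surely forces the hard-label average $\widehat{\eta}_i$ to equal $\eta_i$ exactly, so the estimator is unbiased.
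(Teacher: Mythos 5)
Your proof is correct and follows essentially the same route as the paper: apply Theorem~\ref{thm:avg-hard_-labels}, drop the $\sqrt{\pi/(2m)}$ branch, and bound $L_\err(\eta(x)) \le (1-4c^2)/(8c)$ pointwise under the separation assumption. The paper asserts that last inequality directly "by the assumption," whereas you spell it out via the substitution $s = \eta(x) - 1/2$ and the monotonicity of $g(t) = 1/(8t) - t/2$; this is just a more explicit rendering of the same one-line calculation.
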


The assumption of \Cref{cor:avg-hard-labels-bias-separated} is satisfied by, for example, the following distribution.

\begin{example}[Perfectly separated distributions with label noise]
\label{exm:sep-label-noise}
Consider two continuous distributions $\cF_0, \cF_1$ over $\cX$ with disjoint supports.
An instance-label pair $(x, y)$ is generated as follows. First, an index $k$ is selected from $\set{0, 1}$ with equal probability. Given $k$, $x$ and $y$ are generated conditionally independently as follows:
\begin{enumerate*}
    \item The instance $x$ is sampled from $\cF_k$.
    \item The label $y$ is set to $1 - k$ with conditional probability $\nu$ and $k$ with conditional probability $1 - \nu$, where $\nu \neq 0.5$.
\end{enumerate*}
Then, the assumption of \Cref{cor:avg-hard-labels-bias-separated} is satisfied for $c = \abs{\nu - 0.5}$.
\end{example}
\textbf{Computable bound for general cases}\quad
Our results so far (\Cref{thm:avg-hard_-labels} and \Cref{cor:avg-hard-labels-bias-separated})
provide tigher bounds and a more detailed perspective on the bias.
However, a downside of those results is that, in order to compute the numerical values of the lower bounds directly,
we need to know some characteristics of the data distribution that might not be available in most practical scenarios.
The good news is that we can still derive a computable bound that only requires an upper bound on the Bayes error, e.g., the error rate of the SOTA model, from \Cref{thm:avg-hard_-labels}.
\begin{corollary}
\label{cor:computable-bias-bound}
Assume that $\besterr \leq E$. Then, we have
$- B(E, m) \leq \E{ \besterrhat(\widehat{\eta}_{1:n}) }
- \besterr
\leq 0$, where
\begin{equation}
B(E, m) 
\coloneqq
\inf_{t \in (0, 1/2)}
\left( 
\frac{t(1-t)}{1-2t} \frac{1}{m} + \min\set{1, \frac{E}{t}} \sqrt{ \frac{\pi}{2m} }
\right)
\text{.}
\end{equation}
\end{corollary}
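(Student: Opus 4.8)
The plan is to read off everything from \Cref{thm:avg-hard_-labels}. Its upper bound $\E{\besterrhat(\widehat{\eta}_{1:n})} - \besterr \le 0$ is already what we want, so the entire task reduces to showing that the lower-bound term there is at most $B(E,m)$, i.e.
\[
\E[x \sim \marginal]{\min\set{\tfrac{L_\err(\eta(x))}{m}, \ \sqrt{\tfrac{\pi}{2m}}}} \ \le\ B(E,m).
\]
The starting observation is that $L_\err(q)$ depends on $q$ only through $g(q) \coloneqq \min\set{q, 1-q} \in [0, 1/2]$: since $\abs{2q-1} = 1 - 2\min\set{q,1-q}$ and $q(1-q) = \min\set{q,1-q}\max\set{q,1-q}$, one gets $L_\err(q) = \ell(g(q))$, where $\ell(s) \coloneqq \frac{s(1-s)}{1-2s}$ for $s \in [0, 1/2)$ and $\ell(1/2) = +\infty$. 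A one-line derivative computation shows $\ell$ is strictly increasing on $[0,1/2)$, which is the monotonicity we exploit below. Recall also the identity $\besterr = \E[x\sim\marginal]{\min\set{\eta(x), 1-\eta(x)}} = \E[x\sim\marginal]{g(\eta(x))}$ from \Cref{subsec:pre-soft-labels}.

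Next I would fix an arbitrary $t \in (0, 1/2)$ and split $\cX$ into $S_t \coloneqq \set{x : g(\eta(x)) \le t}$ and its complement. On $S_t$, bound $\min\set{L_\err(\eta(x))/m, \sqrt{\pi/(2m)}} \le L_\err(\eta(x))/m = \ell(g(\eta(x)))/m \le \ell(t)/m = \frac{t(1-t)}{1-2t}\,\frac{1}{m}$ using monotonicity of $\ell$; integrating and using $\bbP(S_t) \le 1$ bounds this part by $\frac{t(1-t)}{1-2t}\,\frac{1}{m}$. On $S_t^c$, use $\min\set{\cdot} \le \sqrt{\pi/(2m)}$ together with Markov's inequality, $\bbP(g(\eta(x)) > t) \le \E{g(\eta(x))}/t = \besterr/t \le E/t$, and the trivial bound $\bbP(S_t^c)\le 1$, so that $\bbP(S_t^c) \le \min\set{1, E/t}$. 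Adding the two contributions gives
\[
\E[x \sim \marginal]{\min\set{\tfrac{L_\err(\eta(x))}{m}, \ \sqrt{\tfrac{\pi}{2m}}}} \ \le\ \frac{t(1-t)}{1-2t}\,\frac{1}{m} + \min\set{1, \tfrac{E}{t}}\,\sqrt{\tfrac{\pi}{2m}},
\]
and taking the infimum over $t \in (0,1/2)$ yields exactly $B(E,m)$.

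There is no genuine analytic obstacle here; the only thing one has to spot is the reduction of $L_\err$ to the increasing scalar function $\ell$ of $g(\eta(x))$, after which the threshold split plus Markov's inequality (fed by $\besterr = \E{g(\eta(x))} \le E$) finishes the argument. The one bookkeeping point worth stating carefully is the endpoint $\eta(x) = 0.5$, where $L_\err = +\infty$: such $x$ lie in $S_t^c$ for every $t < 1/2$, hence only ever contribute through the $\sqrt{\pi/(2m)}$ branch, so they cause no difficulty.
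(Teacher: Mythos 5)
Your proof is correct and is essentially the paper's own argument: the paper also sets $z = \phierr(\eta(x))$, observes $L_\err(\eta(x)) = \frac{z(1-z)}{1-2z}$, splits at the threshold $t$, and uses Markov's inequality on $\phierr(\eta(x))$ to bound the tail probability by $\min\{1, E/t\}$. Your reformulation via the explicitly increasing function $\ell(s) = \frac{s(1-s)}{1-2s}$ and the set $S_t$ is just a cleaner bookkeeping of the same steps.
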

\begin{wrapfigure}[22]{r}{0.43\textwidth}
    \centering
    \includegraphics[width=0.98\linewidth]{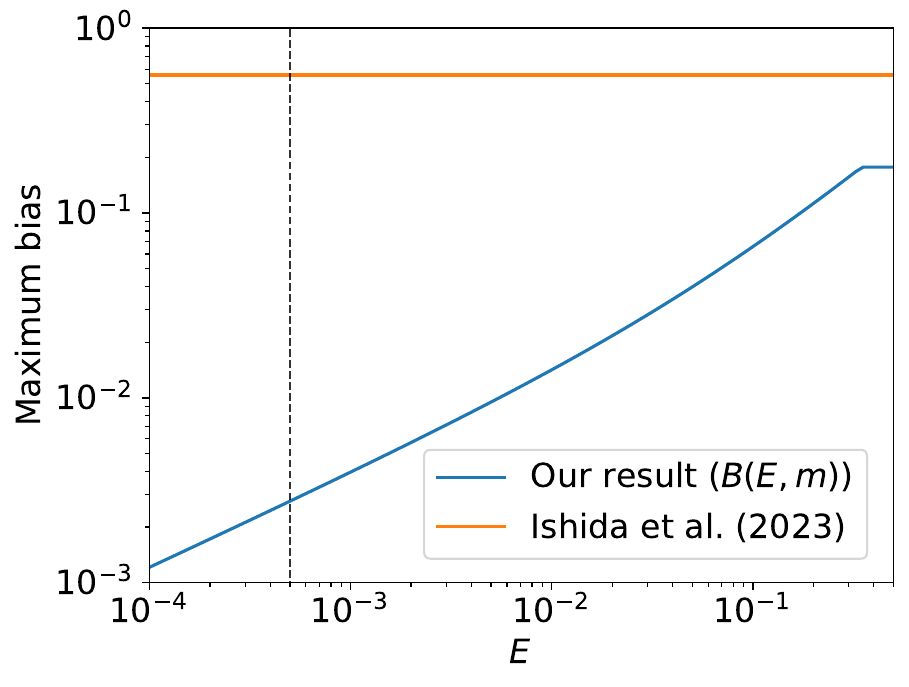}
    \caption{
    A comparison of our bias bound (\Cref{cor:computable-bias-bound}) and the existing bound by \citet{ishida2023is} with $n=10000, \ m = 50$.
    The dashed line indicates the test error ($0.05$\%) of the SOTA model for the binarized CIFAR-10 dataset, which we can use as $E$ in \Cref{cor:computable-bias-bound}.
    Our bound is over 200 times tighter than the existing one in this setup.
    }
    \label{fig:bias-bounds}
\end{wrapfigure}
The function $B(E, m)$ can be computed numerically without any information about the data distribution, except for an upper bound of the Bayes error, $E$.
\Cref{fig:bias-bounds} shows the magnitude of our lower bound, $B(E, m)$,
for various values of $E$ (the blue line).
It also shows the existing bias bound by \citet{ishida2023is} (the orange line). 
This comparison demonstrates that \Cref{cor:computable-bias-bound}
is a substantial improvement over the existing result
across the entire range of the parameter $E$.

Let us take the binarized\footnote{See \Cref{subsec:exp_setting_estimation} for details.} CIFAR-10 test set as an example.
It consists of $n = 10000$ instances, each of which has a soft label obtained as the average of around $m = 50$ hard labels from the CIFAR-10H dataset.
As the parameter $E$, we can use the Vision Transformer \citep{dosovitskiy2021an}'s empirical error rate of $0.0005$ reported by \citet{ishida2023is}, which is shown by the black dashed line in \Cref{fig:bias-bounds}.
While the existing bound suggests the bias 
 of the hard-label-based estimator $\besterrhat(\widehat{\eta}_{1:n})$ could be as large as $0.557$, our bound reveals the estimator is not that bad; indeed, it implies the bias is never larger than $0.00276$. In this case, our result is over $200$ times tighter than theirs.

\Cref{thm:avg-hard_-labels} also implies the estimator $\besterrhat(\widehat{\eta}_{1:n})$ is statistically consistent;
see \Cref{cor:consistency} for details.

\section{Estimation from {\iclrrebuttal corrupted} soft labels}

\label{chap:method2}
This section tackles a more challenging setting where 
we do \emph{not} have access to the true posterior probability $\eta$. This setting reflects many real-world problems, such as in medical diagnosis where a doctor's subjective confidence in their decision can be regarded as a soft label, or when practitioners provide automated soft labels with LLMs in place of human annotators. However, there is no guarantee that
it exactly reflects the true underlying probability.

In this section, we consider a problem setting where
each instance $x_i$ is given a real number $\tilde{\eta}_i \in [0, 1]$, which
is expected to approximate the clean soft label $\eta_i$ in some sense but not necessarily identical to $\eta_i$.
We call $\set{\tilde{\eta}_i}_{i=1}^n$ \emph{corrupted} soft labels.
How can we estimate the Bayes error when
only corrupted soft labels are available instead of clean ones?
Estimation with a provable guarantee will be impossible without some assumption on the quality of the soft labels.
Then, what guarantee can be provided under what assumption?

\subsection{Preliminaries}

In addition to the preliminaries introduced in \Cref{subsec:method1-preliminaries}, we briefly review a few more required for the discussion in this section.

\textbf{Calibration}\quad
\label{subsec:related-work-calibration}
Predicting accurate class labels is not always sufficient in classification problems. It is often crucial to obtain reliable probability estimates, especially in high-stakes applications including personalized medicine \citep{jiang2012calibrating} and meteorological forecasting \citep{murphy1973new, degroot1983comparison}.

A popular notion to capture the quality of probability estimates is \emph{calibration}.
A probabilistic classifier $c$ is said to be \emph{well-calibrated} if the predicted probabilities closely match the actual frequencies of the class labels \citep{kull2017beta}, i.e., $c(x) = \E{y \mid c(x)}$ almost surely.
While one might hope that a perfectly calibrated output $c(x)$ matches the posterior probability $\E{y \mid x} = \bbP(y = 1 \mid x)$, it is not necessarily true.
Indeed, calibration is a weaker notion and just a necessary condition for $c(x) = \E{y \mid x}$.
For example, even a constant predictor $c(x) \equiv \P{y = 1}$ is well-calibrated 
\footnote{%
If $c(x)$ takes the constant value $\P{y=1}$ for all $x$, $\E{y \mid c(x)}$ is equal to $\E{y} = \P{y=1}$ since it is a conditional expectation conditioned by a constant. 
}
although it can be far from $\P{y=1 \mid x}$.

It is known that many machine learning models, including modern neural networks, are not calibrated out of the box \citep{zadrozny2001obtaining, guo2017calibration}.
Therefore, their outputs have to be recalibrated in post-processing, and various methods have been proposed to achieve this goal.
They can be roughly categorized into two groups, namely parametric and nonparametric methods.
The former includes \emph{Platt scaling} \citep{platt1999probabilistic}, also known as \emph{logistic calibration} \citep{kull2017beta}, and \emph{beta calibration} \citep{kull2017beta}.
Among the latter category are
\textit{histogram binning}
 \citep{zadrozny2001obtaining} and 
\emph{isotonic calibration} \citep{zadrozny2002transforming}.
Each of these methods requires a dataset $\set{(c_i, y_i)}_{i=1}^n$
to obtain a function that takes the output of an uncalibrated predictor $c$ and transforms it into a reliable probability estimate. This function is sometimes called a \emph{calibration map} \citep{kull2017beta, kull2019beyond}.
Here, $c_i = c(x_i)$ is the output of $c$ for an instance $x_i$, and $y_i$ is the corresponding class label.
To avoid overfitting, each $(x_i, y_i)$ needs to be sampled independently of the training set used to obtain the predictor $c$.

\textbf{Isotonic calibration}\quad
Here, we briefly describe the algorithm of isotonic calibration \citep{zadrozny2002transforming}, arguably one of the most commonly used nonparametric recalibration methods, as it plays an important role in this paper.
Suppose a dataset $\set{(c_i, y_i)}_{i=1}^n$ is given. The algorithm proceeds as follows.
    First, the dataset $(c_1, y_1), \dots, (c_n, y_n)$ is reordered into $\paren{c_{(1)}, y_{(1)}}, \dots, \paren{c_{(n)}, y_{(n)}}$ so that the resulting sequence $c_{(1)}, \dots, c_{(n)}$ of outputs becomes non-decreasing.
    Then, we
    find a non-decreasing sequence $0 \leq c'_{(1)} \leq \dots \leq c'_{(n)} \leq 1$ such that it minimizes the squared error 
    $\frac{1}{n} \sum_{i=1}^n \paren{y_{(i)} - c'_{(i)}}^2$.
    Finally, 
    for each $i \in [n]$, $c'_{(i)}$ is assigned as the calibrated version of $c_{(i)}$.
This procedure is a special case of \emph{isotonic regression}, one of the most well-studied shape-constrained regression problems.

\subsection{Proposed method}
\label{subsec:method2-algo}

We propose a simple approach where we first calibrate the corrupted soft labels and then plug them into the formula \eqref{eq:ishida-clean-estimator} for clean soft labels.
Although calibration was originally developed for transforming the output scores of classifiers into reliable probability estimates, 
here we suggest using it for corrupted soft labels.
We assume that, for each $i \in [n]$, 
we are given 
    a corrupted soft label $\tilde{\eta}_i \in [0, 1]$
    and
    a single hard label $y_i \in \set{0, 1}$ sampled from the true posterior distribution $\bbP(y \mid x = x_i)$.
We use the hard labels $\set{y_i}_{i=1}^n$ to calibrate 
the soft labels $\set{\tilde{\eta}_i}_{i=1}^n$ using some calibration algorithm $\cA$.
We write $\widehat{\eta}_i^\cA$ to represent the resulting calibrated soft labels.
Finally, we estimate the Bayes error $\besterr$ by
$\besterrhat\paren{\widehat{\eta}_{1:n}^\cA} = 
\frac{1}{n} \sum_{i=1}^n \min \set{\widehat{\eta}_i^\cA, 1 - \widehat{\eta}_i^\cA}$.

However, as we mentioned 
earlier,
even perfect calibration does not necessarily imply that the resulting soft labels are accurate estimates of the clean soft labels.
A simple example illustrates this limitation:
\begin{example}
\label{ex:calibration-is-not-enough}
Consider drawing instances from a mixture of two distributions over $\cX$ with disjoint supports, and let us set the mixture rate $\theta$ to be $0.5$.
The true Bayes error is trivially $0$.
If $\cA$ is a calibration algorithm that produces constant soft labels  $\widehat{\eta}_i^\cA = \theta = 0.5$ for all $i \in [n]$, it indeed achieves perfect calibration.
However, the resulting estimate of the Bayes error is $\min \set{\theta, 1 - \theta} = 0.5$, which deviates significantly from the true value $0$.
\end{example}
Therefore, estimation with a provable guarantee will not be possible for arbitrary calibration algorithms or without any assumptions on the soft labels.
What calibration algorithm can achieve
reliable estimation under what assumption?
In \Cref{subsec:method2-theory}, 
we provide the first answer to this question.

\subsection{Theoretical guarantee for isotonic calibration}
\label{subsec:method2-theory}

Here, we propose choosing isotonic calibration \citep{zadrozny2002transforming} as the calibration algorithm $\cA$ and indentify a condition under which we can consistently estimate the Bayes error with our method.
Specifically, we estimate the Bayes error by the following procedure:
\begin{enumerate}
    \item 
    Reorder $(\tilde{\eta}_1, y_1), \dots, (\tilde{\eta}_n, y_n)$ into $\paren{\tilde{\eta}_{(1)}, y_{(1)}}, \dots, \paren{\tilde{\eta}_{(n)}, y_{(n)}}$ so that the resulting sequence $\tilde{\eta}_{(1)}, \dots, \tilde{\eta}_{(n)}$ of outputs becomes non-decreasing.
    \item 
    Find a non-decreasing sequence $0 \leq \widehat{\eta}^{\iso}_{(1)} \leq \dots \leq \widehat{\eta}^{\iso}_{(n)} \leq 1$ such that it minimizes the squared error 
    $\frac{1}{n} \sum_{i=1}^n \paren{y_{(i)} - \widehat{\eta}^{\iso}_{(i)}}^2$.
    This gives us isotonic-calibrated soft labels
$\widehat{\eta}_{(1)}^{\iso}, \dots, \widehat{\eta}_{(n)}^{\iso}$.
    \item 
    Estimate the Bayes error as $\besterrhat(\widehat{\eta}_{1:n}^{\iso}) = \frac{1}{n} \sum_{i=1}^n \min \set{\widehat{\eta}_{(i)}^{\mathrm{iso}}, 1 - \widehat{\eta}_{(i)}^{\mathrm{iso}}}$.
\end{enumerate}

The next theorem is the main theoretical result of this section, which
states that we can construct a consistent estimator of the Bayes error using isotonic calibration
as long as the soft labels' order is preserved.
The use of isotonic regression allows us to
provide a solid theoretical guarantee without making
parametric assumptions on how corruption occurs.
See \Cref{sec:app-for-section3}
for the proof.

\begin{theorem}
\label{thm:isotonic-bayes-error-bound}
Suppose that 
there exists an increasing function $f$ such that $\tilde{\eta}_i = f(\eta_i)$ almost surely.
Then, for any $\delta \in (0, 1)$, with probability at least $1 - \delta$, we have
\begin{equation}
\abs{\besterrhat(\widehat{\eta}_{1:n}^{\mathrm{iso}}) - \besterr} 
\leq
C \paren{
\frac{1}{n^{1/3}} 
+
\sqrt{\frac{\log (1/\delta)}{n}}
}
\text{,}
\end{equation}
where $C > 0$ is a constant.
\end{theorem}

Note that the assumption of \Cref{thm:isotonic-bayes-error-bound} is a relaxation of the availability of clean soft labels since we can take the identity map as $f$ when $\tilde{\eta}_i = \eta_i$. 
In other words, 
the original work by \citet{ishida2023is}
assumes that we have access to the exact values of the clean soft labels, 
whereas our proposed method only requires
the knowledge of their order.

Furthermore, our result can be extended to the case 
where the corruption involves random noise.

\begin{theorem}
    \label{thm:noisy-iso}
    Assume each corrupted soft label $\tilde{\eta}_{i}$
    is generated as
    $\tilde{\eta}_{i} = f(\eta_{i}) + \epsilon_{i}$
    where $f$ is a differentiable function such that
    $f' \geq c$ for some constant $c > 0$ and
    $\epsilon_{i}$ is a zero-mean random variable with
    variance $\leq \sigma^2$.
    Then, for any $\delta \in (0, 1)$, with probability 
    at least $1 - \delta$, we have 
    \begin{equation}
        \left| 
            \besterrhat(\widehat{\eta}_{1:n}^{\mathrm{iso}}) 
            - \besterr
        \right|
        \leq
        C' \left(
            \sigma
            + \frac{1}{n^{1/3}} 
            + \sqrt{ \frac{\log(1/\delta)}{n} } 
        \right)
        \text{,}
    \end{equation}
    where $C' > 0$ is a constant.    
\end{theorem}

As an example, we can think of the situation
we studied in \Cref{chap:method1} but with
corrupted labeling distribution skewed by some 
function $f$.
For each $i = 1, \dots, n$, 
the resulting posterior distribution can be seen as
a Bernoulli distribution with mean $f(\eta_i)$.
Then, 
we draw $m$ hard labels $y_i^{(1)}, \dots, y_i^{(m)}$ 
from that distribution.
We can then approximate the unknown soft label $\eta_i$ 
by the average $\frac{1}{m} \sum_{j=1}^m y_i^{(j)}$ 
of the hard labels and plug it into our estimator.
In this case, the randomness over the hard labels
translates to
additive noise with standard deviation at most
$\sigma = \frac{1}{2\sqrt{m}}$.

A limitation of \Cref{thm:noisy-iso} is that
it cannot be applied to $f$
whose derivative can be arbitrarily small.
Roughly speaking, it is because a small fluctuation
in the function value can translate to a large deviation
in the inverse function value
if the function is too ``flat.''
We have not yet reached a theoretical understanding of 
how much violating this assumption hurts the estimation accuracy.
However, in our empirical study with synthetic data (\Cref{sec:assumption-violation}), 
the results suggest that our method can perform well
even for such corruption functions.
An interesting finding is that beta calibration (\citet{kull2017beta}) 
performs poorly
even though it is a \textit{well-specified} parametric calibrator
in our experimental setting.
This fact suggests that choosing appropriate 
calibration methods, such as isotonic calibration,
is indeed crucial in our algorithm design.
See \Cref{sec:assumption-violation} for details.

\section{Experiments}
\label{chap:exp}

In this section, we present experimental results
where we estimate the Bayes error of synthetic and real-world datasets using our proposed method.

\subsection{Experimental settings}

\label{subsec:exp_setting_estimation}
The methods employed in this experiment were the following:
\begin{enumerate*}
\item \verb|clean|: the estimator with clean soft labels, i.e., $\besterrhat\paren{\eta_{1:n}}$,
\item \verb|hard|: the estimator with approximate soft labels obtained as averaged hard labels, i.e., $\besterrhat\paren{\widehat{\eta}_{1:n}}$,
\item \verb|corrupted|: the estimator with corrupted soft labels, i.e., $\besterrhat\paren{\tilde{\eta}_{1:n}}$, and
\item the estimator with soft labels obtained 
by calibrating the corrupted soft labels, 
i.e., $\besterrhat\paren{\widehat{\eta}_{1:n}^\cA}$. 
We used the following calibration algorithms $\cA$:
isotonic calibration (\verb|isotonic|; \citealp{zadrozny2002transforming}), 
uniform-mass histogram binning  \citep{zadrozny2001obtaining} with $10, 25, 50$ and $100$ bins (\verb|hist-10|, \verb|hist-25|, \verb|hist-50| and \verb|hist-100|),
beta calibration (\verb|beta|; \citet{kull2017beta}),
its variants with restricted parameters 
(\verb|beta-am|, \verb|beta-ab| and \verb|beta-a|; see \Cref{sec:additional_experiments:variants-beta} for details),
and the classic Platt scaling (\texttt{platt}; \citet{platt1999probabilistic}).
\end{enumerate*}
Here, \texttt{isotonic} is our proposed choice of calibration algorithm 
in \Cref{chap:method2}.

We used the following synthetic and real-world datasets.

\textbf{Synthetic dataset}\quad
We generated a two-dimensional synthetic dataset of size $n = 10,000$
from a Gaussian mixture $\bbP_\cX = 0.6 \cdot \bbP_0 + 0.4 \cdot \bbP_1$, where
$\bbP_0 = N \paren{(0, 0)^\top, I_2}$,
$\bbP_1 = N \paren{(2, 2)^\top, I_2}$ and
$I_d$ is the $d$-by-$d$ identity matrix.
We used $m = 50$ hard labels per instance in the \verb|hard| setup.
For each $i \in [n]$, 
we generated the corrupted version $\tilde{\eta}_i$ of the soft label $\eta_i$ by
$\tilde{\eta}_i = f(\eta_i; 2, 0.7)$, where
$f(p; a, b) = \paren{1 + \paren{\frac{1 - p}{p}}^{1/a} \frac{1 - b}{b}}^{-1}, \ 0 < p < 1, \ a > 0$ and $0 < b < 1$.
The function $f$ is the inverse function of the two-parameter beta calibration map \citep{kull2017beta}
and can express various continuous increasing transformations on the interval $(0, 1)$ depending on the parameters $a$ and $b$.
\Cref{fig:beta-corruption} shows the graph of the corruption function $f$.
As can be seen in the figure, 
it pushes probability values away from zero or one, making the soft labels ``unconfident.'' 
It also distorts soft labels so that 
$\eta_i = 0.5$ is mapped to $\tilde{\eta}_i = f(\eta_i) = b$.
Note that $f$ satisfies the assumption of \Cref{thm:isotonic-bayes-error-bound}
since it is increasing.
We also explored other sets of parameters and other types of corruption; see \Cref{sec:app-for-section4} for details.

\begin{wrapfigure}{r}{0.3\textwidth}
    \centering
    \includegraphics[width=\linewidth]{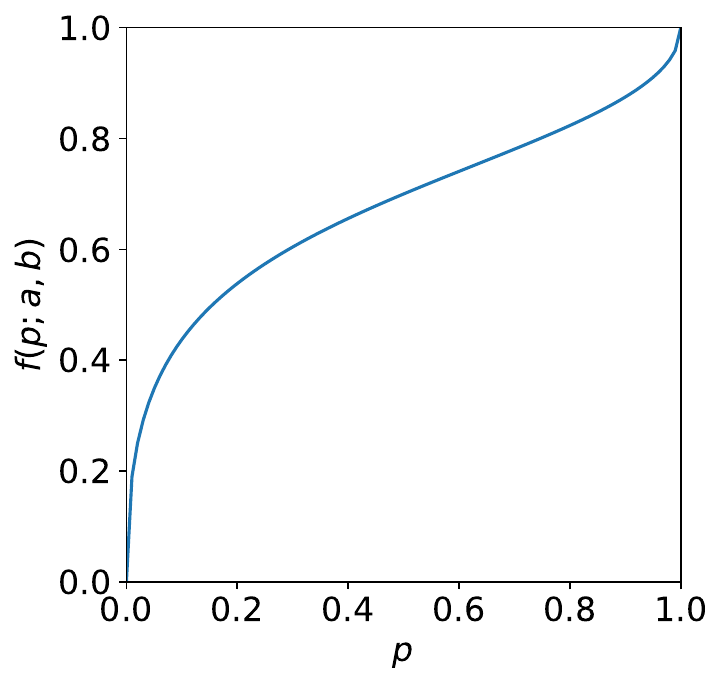}
    \caption{The corruption function $f(p; a, b)$ with parameters $a = 2, \ b = 0.7$.}
    \label{fig:beta-corruption}
\end{wrapfigure}  

\textbf{CIFAR-10} \quad
The second dataset is the test set of CIFAR-10 \citep{cifar10} with soft labels
taken from the CIFAR-10H dataset \citep{cifar10h}.
Since they are originally multi-class datasets, we reconstructed a binary dataset by relabeling the animal-related classes (\emph{bird}, \emph{cat}, \emph{deer}, \emph{dog}, \emph{frog} and \emph{horse}) as positive and the rest as negative, similarly to what \citet{ishida2023is} did in their experiments.
Recall that the CIFAR-10H soft labels can be considered to be corrupted because of the mismatched labeling distributions, as we mentioned in \Cref{sec:intro}.
We compared the estimated Bayes error with the test error of a Vision Transformer (ViT) \citep{dosovitskiy2021an} on this dataset reported by \citet{ishida2023is}, which is 0.05\%.

\textbf{Fashion-MNIST} \quad
We also used the Fashion-MNIST dataset \citep{xiao2017online} and its soft-labeled counterpart, Fashion-MNIST-H \citep{ishida2023is}.
Following \citet{ishida2023is},
we binarized the dataset by treating 
\emph{T-shirt/top}, \emph{pullover}, \emph{dress}, \emph{coat} and \emph{shirt} 
as the positive class. 
The rest proceeded similarly to the CIFAR-10 experiment except that we newly trained a ResNet-18 \citep{he2016deep} in place of the ViT. Details such as training parameters can be found in \Cref{sec:app-for-section4}.

\textbf{ChaosNLI} \quad
ChaosNLI \citep{ynie2020chaosnli, xzhou2022distnli}
is a natural language processing dataset with 100 hard labels per data point.
It consists of three sub-datasets:
SNLI ($n = 1,514$), MNLI ($n = 1,599$) and AbductiveNLI ($n = 1,532$).

\textbf{ICLR peer-review datasets} \quad
We also put together a new dataset, which consists of $n = 32,829$ instances 
of peer-review results for the past ICLR conferences.
Peer-review can be considered as a binary classification task (accept/reject).
We used our dataset to estimate the Bayes error of the ICLR reviews, which is the probability that the ideal, most competent possible reviewer mistakenly rejects a good paper or accepts a bad paper.
It can be regarded as representing the inherent difficulty of the review task.
For each paper submission $x_i$, we utilized the OpenReview API to retrieve
the averaged score $s_i$ weighted by the confidences of the reviewers and
the final decision $y_i$: accept ($y_i = 1$) or reject ($y_i = 0$).
We then obtained a corrupted soft label $\tilde{\eta}_i$ for $x_i$ by normalizing the averaged score $s_i$ to fit into $[0, 1]$.
We merged data from ICLR 2017--2025 to construct a dataset consisting of 
$n = 32,829$ examples, each of which has a corrupted soft label 
(i.e., normalized average score) and a single hard label (i.e., final decision) 
for calibration.\footnote{
  We also conducted experiments with single-year datasets (ICLR2017, \ldots, ICLR2025).
  The results are deferred to \Cref{sec:app-for-section4} due to space limitations.
}

\subsection{Estimation of the Bayes error}
\label{sec:exp:estimation}

Here, we estimated the Bayes error of the above-mentioned datasets
using the methods described in \Cref{subsec:exp_setting_estimation}.
We used $1,000$ bootstrap resamples to compute
a 95\% confidence interval for each method.
Note that, for real-world datasets, 
we could not experiment with the \verb|clean| setup as clean soft labels are unavailable.
Therefore, we conducted our experiment
only for \verb|corrupted|/\verb|isotonic|/\verb|hist-*|/\verb|beta*|/\verb|platt|
treating the given soft labels
as corrupted.\footnote{
Although we could run \texttt{hard}
experiments with $m = 1$ using 
the hard labels from the CIFAR-10 test set,
they would not produce any meaningful estimates of the Bayes error
because
$\min \set{y, 1 - y} = 0$ for both $y = 0$ and $1$.}

\textbf{Results} \quad
\Cref{fig:estimation-result} shows 
the results of the experiments with the synthetic, CIFAR-10 and Fashion-MNIST datasets.
The full results including those for the ChaosNLI and ICLR peer-review datasets
can be found in
\Cref{sec:additional_experiments:results}.
The black dashed lines in \Cref{fig:estimation_cifar10h_error-rate} and \Cref{fig:estimation_fashion_mnist_error-rate} indicate the test error of 
a classifier trained for each dataset as a reference.
As expected, the 
underconfidence of the corrupted soft labels resulted in a severe overestimation 
of 
the Bayes error.
All the calibration methods (\verb|isotonic|, \verb|hist-*|, \verb|beta*| and \verb|platt|)
produced far more reasonable estimates compared with the baseline \verb|corrupted|.
For Fashion-MNIST, however, histogram binning sometimes failed to offer reasonable estimates, 
as can be seen in \Cref{fig:estimation_fashion_mnist_error-rate}.
Specifically, \verb|hist-25| resulted in a Bayes error estimate substantially larger than the ResNet's test error.
This may highlight the necessity for a carefully chosen calibration method, as we mentioned in \Cref{subsec:method2-algo}.
On the other hand, \verb|isotonic|, \verb|beta*| and \verb|platt|
produced reasonable estimates in these settings.

\begin{figure}[!t]
    \centering
  \begin{subfigure}[b]{0.32\textwidth}
    \centering
    \includegraphics[width=\linewidth]{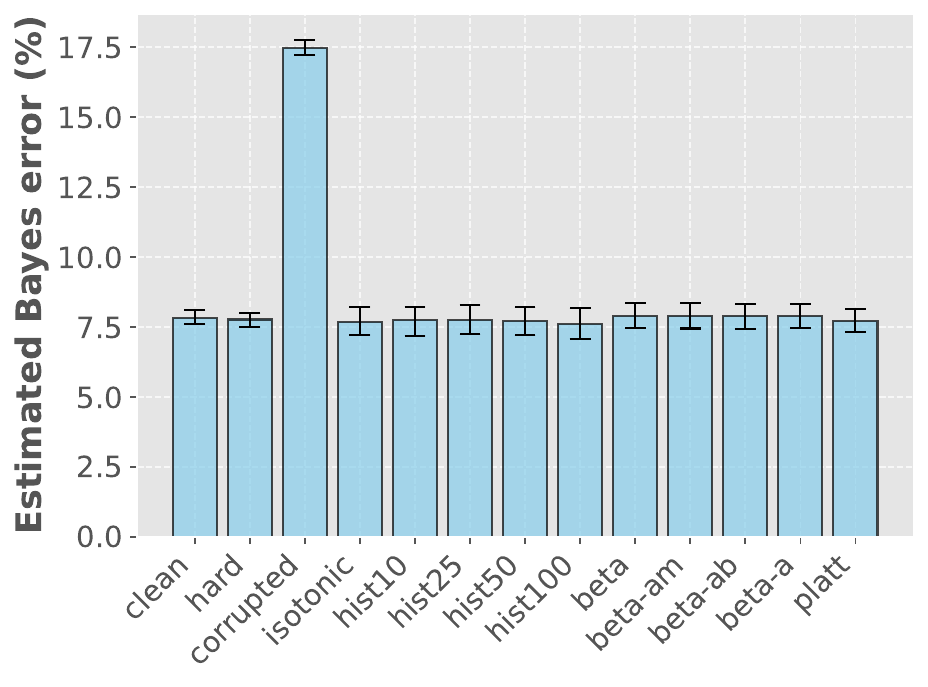}
    \caption{Synthetic dataset}
    \label{fig:estimation_gauss_error-rate}
  \end{subfigure}
  \hfill
  \begin{subfigure}[b]{0.32\textwidth}
    \centering
    \includegraphics[width=\linewidth]{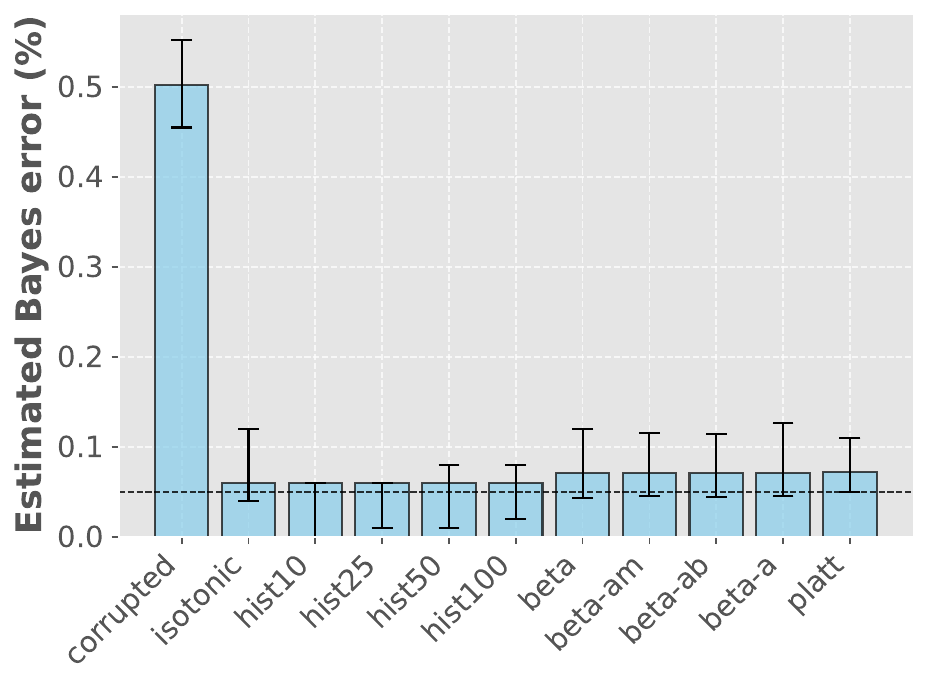}
    \caption{CIFAR-10}
\label{fig:estimation_cifar10h_error-rate}
  \end{subfigure}
  \hfill
\begin{subfigure}[b]{0.32\textwidth}
    \centering
    \includegraphics[width=\linewidth]{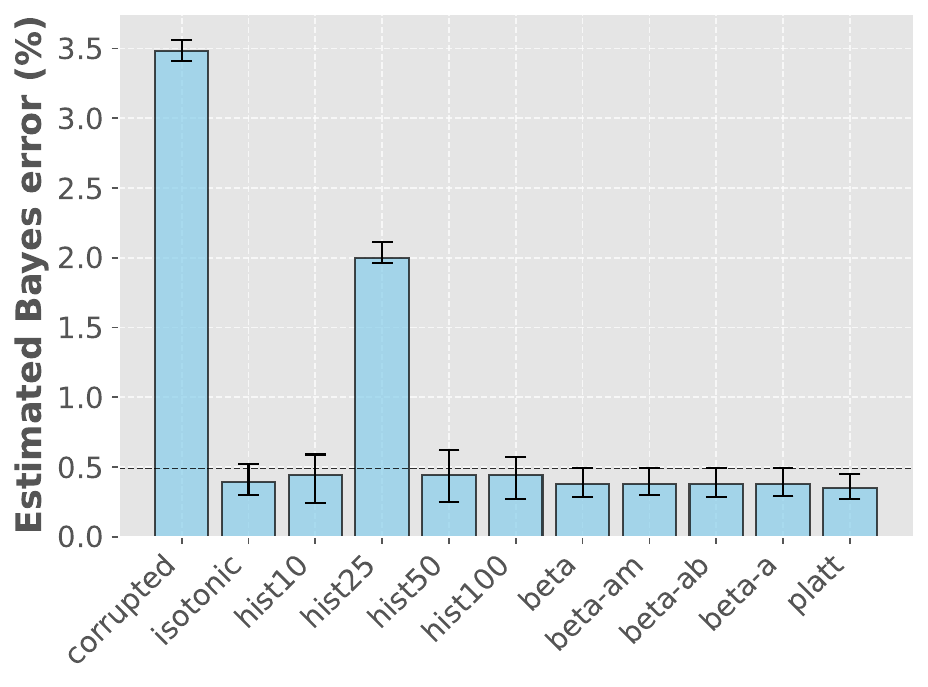}
  \caption{Fashion-MNIST}
\label{fig:estimation_fashion_mnist_error-rate}
\end{subfigure}

  \caption{
    The estimated Bayes error for the synthetic dataset, CIFAR-10, and Fashion-MNIST, 
    obtained with various methods. 
    Error bars denote 95\% bootstrap confidence intervals. 
    For CIFAR-10, the ViT test error is indicated by a horizontal dashed line, 
    while for Fashion-MNIST the dashed line marks the ResNet-18 test error.
  }
  \label{fig:estimation-result}
\end{figure}

\subsection{Comparing calibration algorithms using FeeBee}
\label{sec:feebee:body}

For real-world datasets, it is hard to conclude
which calibration algorithm is the best
just from the above results, since the true Bayes error is unknown.
To address this limitation, here we compared various calibration algorithms
using FeeBee \citep{renggli2021evaluating},
a real-world evaluation framework for Bayes error estimators.
The key idea of FeeBee is to inject various levels of synthetic label noise
and see if the estimator is able to track the resulting changes in the Bayes error;
the estimator is penalized if it fails to do so.
As a result, the lower the FeeBee score is, the better the estimator is.
FeeBee provides a practical way to evaluate Bayes error estimators
on real-world datasets without requiring knowledge of the true Bayes error rates.
\Cref{sec:feebee:review} is dedicated to a more detailed review of the FeeBee framework.

For each dataset, we compared the FeeBee scores of 
the calibration algorithms listed in \Cref{subsec:exp_setting_estimation}, namely,
isotonic calibration (\texttt{isotonic}), 
histogram binning (\texttt{hist-10}, \texttt{hist-25}, \texttt{hist-50} and \texttt{hist-100}),
beta calibration (\texttt{beta})
and its variants
(\texttt{beta-am}, \texttt{beta-ab} and \texttt{beta-a}),
and Platt scaling (\texttt{platt}).
See \Cref{sec:feebee:details} for more details on the experimental settings.

\textbf{Results} \quad
As shown in \Cref{tab:feebee-summary},
isotonic calibration (\texttt{isotonic})
and Platt scaling (\texttt{platt})
were among the best-performing algorithms
in most datasets.
Histogram binning (\texttt{hist-*})
performed well if the number of bins was appropriately chosen, 
but 
its performance was quite sensitive
to this choice,
implying that practitioners may need
to carefully tune this hyperparameter.
The performance of 
beta calibration and its variants (\texttt{beta*}) 
varied considerably depending on the dataset:
they performed substantially worse than
other algorithms in many datasets
although
they were often the best in many single-year
ICLR datasets as shown in
\Cref{tab:feebee} in the appendix.
We note that 
Platt scaling's strong empirical performance 
is intriguing: 
it implicitly assumes that
the uncalibrated input scores
are distributed according to
a Gaussian mixture 
and hence are unbounded \citep{kull2017beta},
yet the soft labels it is applied to are bounded in $[0, 1]$. 
Understanding why 
this mismatched assumption still 
yields accurate Bayes-error estimates 
is left as future work.
Overall, 
these results again highlight that choosing
an appropriate calibration algorithm is key
to successful estimation of the Bayes error.

\begin{table}[t]
    \iclrrebuttalcaption
    \centering
    \caption{
        FeeBee scores of calibration algorithms across 
        real-world datasets (lower is better). 
        The best scores for each dataset are highlighted 
        in \textbf{bold}, and 
        the rank within each column 
        is shown in parentheses.
    }
    \label{tab:feebee-summary}
    {
        \iclrrebuttal
        \scriptsize
        \begin{tabular*}{\linewidth}{@{\extracolsep{\fill}}l *6{c}}
\toprule
 & \multicolumn{6}{c}{Dataset} \\ \cmidrule{2-7}
Algorithm & CIFAR-10 & Fashion-MNIST & SNLI & MNLI & AbductiveNLI & ICLR2017-2025 \\
\midrule
{\small \texttt{isotonic}} & 0.00307 (3) & \bfseries 0.00240 (1) & 0.00292 (2) & 0.00147 (2) & 0.00118 (2) & 0.00013 (5) \\
{\small \texttt{hist-10}} & 0.00318 (4) & 0.00250 (2) & \bfseries 0.00283 (1) & 0.00210 (5) & 0.00143 (3) & 0.00011 (3) \\
{\small \texttt{hist-25}} & \bfseries 0.00253 (1) & 0.00825 (6) & 0.00356 (4) & 0.00322 (7) & 0.00362 (4) & 0.00016 (7) \\
{\small \texttt{hist-50}} & 0.00322 (5) & 0.00329 (4) & 0.00520 (5) & 0.00421 (9) & 0.00558 (5) & 0.00033 (9) \\
{\small \texttt{hist-100}} & 0.00329 (6) & 0.00373 (5) & 0.00714 (6) & 0.00666 (10) & 0.00813 (6) & 0.00056 (10) \\
{\small \texttt{beta}} & 0.02396 (8) & 0.08796 (8) & 0.01392 (8) & 0.00380 (8) & 0.05400 (9) & 0.00012 (4) \\
{\small \texttt{beta-am}} & 0.02401 (10) & 0.09055 (10) & 0.01452 (9) & 0.00208 (4) & 0.05204 (8) & 0.00014 (6) \\
{\small \texttt{beta-ab}} & 0.02335 (7) & 0.08737 (7) & 0.01476 (10) & \bfseries 0.00143 (1) & 0.04949 (7) & 0.00010 (2) \\
{\small \texttt{beta-a}} & 0.02400 (9) & 0.08878 (9) & 0.01370 (7) & 0.00223 (6) & 0.05537 (10) & 0.00017 (8) \\
{\small \texttt{platt}} & 0.00278 (2) & 0.00262 (3) & 0.00305 (3) & 0.00154 (3) & \bfseries 0.00081 (1) & \bfseries 0.00001 (1) \\
\bottomrule
\end{tabular*}

    }
\end{table}

\section{Conclusion and discussion}
\label{sec:conclusion}

We
discussed the estimation of the Bayes error in binary classification. 
In \Cref{chap:method1}, 
we significantly improved the existing bound on the bias of the hard-label-based estimator.
We also revealed that 
the decay rate of the bias
depends on how well the two class-conditional distributions are separated,
and it can decay at a much faster rate than the previous result suggested.
In \Cref{chap:method2}, 
we tackled the challenging problem of Bayes error estimation from corrupted soft labels and proposed an estimator based on calibration.
After presenting an example 
highlighting the importance of choosing appropriate calibration algorithms,
we proved that we can construct
a statistically consistent estimator using isotonic calibration
as long as the original soft labels are correctly ordered.
Then, our theory was validated by numerical experiments with synthetic and real-world datasets in \Cref{chap:exp}.

Finally, we discuss possible future directions.
An important one is
the extension to multi-class problems.
Investigating theoretical guarantees for calibration algorithms 
other than isotonic calibration 
such as Platt scaling
is another interesting direction.

\subsubsection*{Acknowledgments}

RU was supported by the RIKEN Junior Research Associate Program,
TI was supported by KAKENHI Grant Number 22K17946,
and
MS was supported by JST ASPIRE Grant Number JPMJAP2405.

\subsubsection*{Reproducibility statement}

All the information 
needed to reproduce the experimental results 
in this paper
is fully disclosed in the body text, the appendices, 
and the source code in the supplementary material
or our GitHub repository.
The body text and the appendices clearly explain 
the experimental settings.
The supplementary material and the GitHub repository provide
all the source code to run the experiments
and the \texttt{README.md} file containing
the instructions required to reproduce 
the experimental results.
The appendices contain the proofs for 
all the theoretical results in this paper. 
We also made sure it is clear what assumptions 
are made in each theorem, proposition, lemma, and corollary.

\bibliography{ref}

\newpage
\appendix

\crefalias{section}{appsec}
\crefalias{subsection}{appsec}
\crefalias{subsubsection}{appsec}

\section*{The Use of Large Language Models}

We used OpenAI's ChatGPT and Codex for 
the following assistance purposes:
\begin{itemize}
    \item Detect grammatical errors and improve clarity.
    \item Assist in writing the code for generating the figures in the paper.
\end{itemize}

\section{Related work}
\label{sec:related-work}

Estimation of the Bayes error $\besterr$ (see
Section~\ref{subsec:method1-preliminaries} for the definition)
is a classical problem in the field of machine learning and pattern recognition.
Existing methods for Bayes error estimation can be categorized based on the type of data used,
specifically as either instance-label pair-based or soft label-based estimation.

Most of the existing methods require
a dataset $\set{(x_i, y_i)}_{i=1}^n$
composed of instances $x_i \in \cX$ paired with their respective labels $y_i \in \cY = \set{0, 1}$.
Assuming the two class-conditional distributions of instances have densities satisfying certain conditions,
\citet{berisha2014empirically},
\citet{moon2018ensemble} and
\citet{noshad2019learning}
proposed approaches based on
the estimation of $f$-divergence between the class-conditional densities.
Specifically, \citet{berisha2014empirically} and
\citet{moon2018ensemble}
suggested estimating upper or lower bounds and thus their methods suffer from relatively large biases.
Although \citet{noshad2019learning} succeeded in estimating the exact Bayes error instead of bounds on it, 
they assumed that the class-conditional densities $p_0, p_1$ are H\"older-continuous and satisfy $0 < L \leq p_i \leq U \ (i = 0, 1)$ for some constants $L$ and $U$, and their estimator requires the knowledge of the values of these constants, which can be unpractical.

On the other hand, \citet{theisen2021evaluating} proposed a Bayes error estimation method based on normalizing flow models \citep{papamakarios2021normalizing, kingma2018glow}. 
They first showed that the Bayes error is invariant under invertible transformations. 
Using this result, they suggested approximating the data distribution with a normalizing flow and then computing the Bayes error for its base Gaussian distribution, which can be
done using the Holmes--Diaconis--Ross integration scheme \citep{diaconis1995three, gessner2020integrals}.
One of the drawbacks of their approach is that it is prohibitively memory-intensive for high-dimensional data, as mentioned in their paper.

The method proposed by \citet{ishida2023is}, described in \Cref{subsec:pre-soft-labels},
was unique in that it utilized the soft labels $\eta_i = \eta(x_i)$ instead of the instances $x_i$ themselves.
\citet{jeong2023demystifying} extended the approach of \citet{ishida2023is} to the estimation of the Bayes error in multi-class classification problems.

\section{Supplementary for Section 2}
\label{sec:method1-proofs}

\subsection{Proof of \Cref{thm:avg-hard_-labels}}

For each $i = 1, \ldots, d$, let $\widehat{p}_i = \frac{1}{m} \sum_{j=1}^m Z_i^{(j)}$ where 
$Z_i^{(1)}, \ldots, Z_i^{(m)} \in \set{0, 1}$ are independent Bernoulli random variables with mean $p_i$.
For ease of notation, we denote $\widehat{p} = (\widehat{p}_1, \ldots, \widehat{p}_d)$ and $p = (p_1, \ldots, p_d)$. 
Noting that $\E{\widehat{p}} = p$, $\widehat{p}$ is an unbiased and consistent estimator of $p$. 

Suppose that we would like to estimate the value $\phi(p)$ for some function $\phi: \R^d \to \R$. 
It is natural to consider a plug-in estimator $\phi(\widehat{p})$. 
We can evaluate its bias $\E{\phi(\widehat{p})} - \phi(p) =\E{\phi(\widehat{p})} - \phi\paren{\E{\widehat{p}}}$ using 
a sharpened version of Jensen's inequality by \citet{gao2019bounds}.

\begin{lemma}
\label{cor:jensen-gap-binom}
Suppose $\phi$ is differentiable at $\mu$. Then, we have
\begin{equation}
\frac{\inf_{z \neq \mu} h(z)}{m} \sum_{i=1}^d p_i (1 - p_i) 
\leq 
\E{\phi(\widehat{p})} - \phi(p) 
\leq  
\frac{\sup_{z \neq \mu} h(z)}{m} \sum_{i=1}^d p_i (1 - p_i) 
\text{,}
\end{equation}
where
\begin{equation}
    h(z) := \frac{\phi(z) - \phi(\mu) - \nabla \phi(\mu)^\top (z - \mu)}{\enorm{z - \mu}^2}
    \text{.}
\end{equation}
\end{lemma}

\begin{proof}
For any $i, j \in [d]$, we have
\begin{equation}
\E{(\widehat{p}_i - p_i)(\widehat{p}_j - p_j)} =
\frac{1}{m} p_i (1 - p_i) \delta_{ij}
\text{,}
\end{equation}
where $\delta_{ij} = \indic{i = j}$ is the Kronecker delta. This implies
\begin{equation}
\tr\paren{\Cov{\,\widehat{p}\,}} = 
\frac{1}{m} \sum_{i=1}^d p_i (1 - p_i) 
\text{.}
\end{equation}
Now, we can apply (2.4) in \citet{gao2019bounds} to conclude the proof.\footnote{
    Although \citet{gao2019bounds} only discusses the univariate case, it is straightforward to extend the result to the multivariate case.
}
\end{proof}

It is often the case that the function $\phi$ in question is Lipschitz continuous. 
For example, every convex function is locally Lipschitz on any convex compact subset of the relative interior of its domain \citep[see e.g.][Theorem 3.1.2 in Chapter IV]{hiriart2013convex}. 
In such cases, we can derive another type of bound based on Lipschitzness.

\begin{lemma}[Lipschitzness-based bounds for the bias]
\label{lem:jensen-gap-lipschitz}
If $\phi$ is $L$-Lipschitz with respect to the $1$-norm, we have
\begin{equation}
\label{eq:jensen-gap-lipschitz}
\abs{\E{\phi(\widehat{p})} - \phi(p)}
\leq L d \sqrt{\frac{\pi}{2m}}
\text{.}
\end{equation}

\end{lemma}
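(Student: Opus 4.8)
The plan is to exploit the $1$-norm Lipschitz hypothesis to turn the Jensen-type gap $\E{\phi(\widehat{p})} - \phi(p)$ into a sum of the one-dimensional mean absolute deviations $\E{\abs{\widehat{p}_i - p_i}}$, and then bound each of these by $\sqrt{\pi/(2m)}$ using a standard sub-Gaussian tail estimate for Binomial averages.

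First I would use that $\phi$ is $L$-Lipschitz with respect to $\|\cdot\|_1$ to write, pointwise, $\abs{\phi(\widehat{p}) - \phi(p)} \le L\, \|\widehat{p} - p\|_1 = L \sum_{i=1}^d \abs{\widehat{p}_i - p_i}$. Taking expectations and applying $\abs{\E{X}} \le \E{\abs{X}}$ gives
\[
\abs{\E{\phi(\widehat{p})} - \phi(p)} \le \E{\abs{\phi(\widehat{p}) - \phi(p)}} \le L \sum_{i=1}^d \E{\abs{\widehat{p}_i - p_i}},
\]
so the problem reduces to showing $\E{\abs{\widehat{p}_i - p_i}} \le \sqrt{\pi/(2m)}$ for each coordinate $i$.

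For that bound, recall $\widehat{p}_i = \frac1m \sum_{j=1}^m Z_i^{(j)}$ is the average of $m$ i.i.d.\ $\set{0,1}$-valued variables with mean $p_i$, so Hoeffding's inequality gives $\P{\abs{\widehat{p}_i - p_i} > t} \le 2e^{-2mt^2}$ for all $t>0$. Combining this with the layer-cake identity $\E{Y} = \int_0^\infty \P{Y>t}\,dt$ (valid for $Y \ge 0$) and the elementary Gaussian integral $\int_0^\infty e^{-2mt^2}\,dt = \tfrac12\sqrt{\pi/(2m)}$, I obtain
\[
\E{\abs{\widehat{p}_i - p_i}} = \int_0^\infty \P{\abs{\widehat{p}_i - p_i} > t}\,dt \le \int_0^\infty 2e^{-2mt^2}\,dt = \sqrt{\frac{\pi}{2m}},
\]
and substituting back into the previous display yields $\abs{\E{\phi(\widehat{p})} - \phi(p)} \le Ld\sqrt{\pi/(2m)}$, which is the claim.

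There is no genuinely hard step here: the only substantive ingredient is the mean-absolute-deviation estimate for a Binomial average, which could equally be obtained (and slightly more sharply) from Cauchy--Schwarz together with $\mathrm{Var}(\widehat{p}_i) = p_i(1-p_i)/m \le 1/(4m)$, giving $\E{\abs{\widehat{p}_i - p_i}} \le 1/(2\sqrt m)$; I would nevertheless phrase the conclusion with the constant $\sqrt{\pi/2}$ to keep it consistent with the form appearing alongside \Cref{thm:avg-hard_-labels} and the $\min$ in \eqref{eq:bias-bound-err}. The one point to watch is that the Lipschitz constant must be taken with respect to the $1$-norm --- this is exactly what lets the $d$ coordinatewise deviations simply add up rather than contribute an extra $\sqrt{d}$ --- but that is precisely the stated hypothesis, so nothing further is required.
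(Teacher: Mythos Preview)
Your proposal is correct and matches the paper's proof essentially step for step: both apply the $1$-norm Lipschitz bound to reduce to $L\sum_i \E{\abs{\widehat{p}_i - p_i}}$, then integrate Hoeffding's tail bound via the layer-cake formula to get $\E{\abs{\widehat{p}_i - p_i}} \le \sqrt{\pi/(2m)}$. Your side remark about the sharper Cauchy--Schwarz constant is valid but, as you note, the paper keeps $\sqrt{\pi/2}$ for consistency with \eqref{eq:bias-bound-err}.
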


\begin{proof}
First of all, it holds that
\begin{align}
\abs{\E{\phi(\widehat{p})} - \phi(p)}
\leq & 
\E{\abs{\phi(\widehat{p}) - \phi(p)}}
\\ \leq &
L \E{\sum_{i = 1}^d \abs{\widehat{p}_i - p_i}}
\quad \text{(by Lipschitzness)}
\\ = &
L \sum_{i = 1}^d \E{\abs{\widehat{p}_i - p_i}}
\text{.}
\end{align}

Each summand can be bounded by integrating Hoeffding's tail bound (see, e.g., Theorem 2.2.6 of \citep{vershynin2018high}):
\begin{align}
\E{\abs{\widehat{p}_i - p_i}}
&=
\int_0^\infty
\P{\abs{\widehat{p}_i - p_i} \geq t} 
\, dt
\\&\leq
\int_0^\infty 
2 \exp(-2mt^2)
\, dt
\\&=
\sqrt{\frac{\pi}{2m}}
\text{.}
\end{align}

Hence the result follows.
\end{proof}

\Cref{lem:jensen-gap-lipschitz} suggests that the bias is at most
$\cO(1/\sqrt{m})$, whereas 
\Cref{cor:jensen-gap-binom} indicates a faster convergence rate of $\cO(1/m)$ whenever 
the supremum and infimum are finite.

Now, define
\begin{equation}
\phierr(z) \coloneqq \min \set{z, 1 - z} %
\end{equation}
for $z \in [0, 1]$.
We choose $\phi = \phi_\err$ and apply \Cref{cor:jensen-gap-binom} and \Cref{lem:jensen-gap-lipschitz}
to show the following lemma.

\begin{lemma}
\label{lem:jensen-gap-err}
For each $i \in [d]$, we have
\begin{equation}
- \sqrt{\frac{\pi}{2m}}
 \leq
 \E{\phierr(\widehat{p}_i)} - \phierr(p_i)
 \leq 0    
\text{.}
\end{equation}
Furthermore, if $p_i \neq 0.5$, it holds that
\begin{equation}
- \frac{p_i(1 - p_i)}{m \lvert 2p_i - 1\rvert}
 \leq
 \E{\phierr(\widehat{p}_i)} - \phierr(p_i)
 \leq 0
 \text{.}
\end{equation}

\end{lemma}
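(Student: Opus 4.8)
The plan is to reduce everything to the one-dimensional ($d=1$) instances of the tools already assembled in this subsection. The upper bound $\E{\phierr(\widehat{p}_i)} - \phierr(p_i) \le 0$, which appears in both displays, I would obtain directly from concavity: $\phierr(z) = \min\set{z, 1-z}$ is a pointwise minimum of two affine functions, hence concave (on $[0,1]$, or on all of $\R$ with the same formula), so the concave Jensen inequality applied to the scalar $\widehat{p}_i$, together with $\E{\widehat{p}_i} = p_i$, gives $\E{\phierr(\widehat{p}_i)} \le \phierr\paren{\E{\widehat{p}_i}} = \phierr(p_i)$. This step uses nothing about whether $p_i = 0.5$, so it serves both parts of the lemma; what remains is the two lower bounds.

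For the first, distribution-free lower bound, I would use that $\phierr$ is $1$-Lipschitz (a pointwise minimum of $1$-Lipschitz functions is $1$-Lipschitz) and invoke \Cref{lem:jensen-gap-lipschitz} with $d = 1$ and $L = 1$ applied to $\widehat{p}_i$: this gives $\abs{\E{\phierr(\widehat{p}_i)} - \phierr(p_i)} \le \sqrt{\pi/(2m)}$, and combining it with the upper bound just proved yields $-\sqrt{\pi/(2m)} \le \E{\phierr(\widehat{p}_i)} - \phierr(p_i) \le 0$.

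For the refined lower bound under $p_i \ne 0.5$, I would apply \Cref{cor:jensen-gap-binom} in the $d = 1$ case with $\phi = \phierr$ (which is differentiable at $p_i$, precisely because $p_i \ne 1/2$) and the canonical choice $a = \phierr'(p_i)$, equal to $+1$ when $p_i < 1/2$ and $-1$ when $p_i > 1/2$. The bias is invariant under $p_i \mapsto 1 - p_i$ (this replaces $\widehat{p}_i$ by $1 - \widehat{p}_i$ and leaves $\phierr$ unchanged), and so is the claimed bound $p_i(1-p_i)/\paren{m\,\abs{2p_i - 1}}$, so I may assume $p_i < 1/2$ and $a = 1$. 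With this choice, the function $h(z;1)$ appearing in \Cref{cor:jensen-gap-binom} simplifies to $\paren{\phierr(z) - z}/(z - p_i)^2$, which is $0$ for $z \in [0, 1/2]$ and equals $(1 - 2z)/(z - p_i)^2 \le 0$ for $z \in (1/2, 1]$. Hence $\sup_z h(z;1) = 0$ (recovering the upper bound once more), while $\inf_z h(z;1) = \inf_{z \in (1/2, 1]} (1 - 2z)/(z - p_i)^2$. A short calculus computation — the derivative of $z \mapsto (1-2z)/(z-p_i)^2$ on $(1/2,1]$ is proportional to $z - (1 - p_i)$, hence negative then positive — shows this infimum is attained at $z^\star = 1 - p_i \in (1/2, 1]$ with value $-1/(1 - 2p_i) = -1/\abs{2p_i - 1}$. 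Substituting $\inf_z h(z;1) = -1/\abs{2p_i - 1}$ into the $d = 1$ instance of \Cref{cor:jensen-gap-binom}, whose variance term is $p_i(1 - p_i)/m$, gives $\E{\phierr(\widehat{p}_i)} - \phierr(p_i) \ge -p_i(1-p_i)/\paren{m\,\abs{2p_i - 1}}$, as required.

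The main (though modest) obstacle I anticipate is the infimum computation in the last paragraph, together with a small point of bookkeeping: the infimum in \Cref{cor:jensen-gap-binom} is over a set containing the support of $\widehat{p}_i$ — whose convex hull is $[0,1]$ — and since $z^\star = 1 - p_i$ lies in $[0,1]$ the extremum is genuinely realized; if one instead reads the infimum over a smaller set (for instance the discrete grid $\set{0, 1/m, \dots, 1}$), the resulting bound is only tighter, so the stated inequality holds a fortiori. Everything else is a direct appeal to \Cref{lem:jensen-gap-lipschitz}, \Cref{cor:jensen-gap-binom}, and the elementary properties of $\phierr$ (concavity, $1$-Lipschitzness, and its piecewise-affine form).
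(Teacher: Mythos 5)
Your proposal is correct and follows essentially the same route as the paper's proof: concavity of $\phierr$ plus Jensen for the upper bound, the $1$-Lipschitz bound (\Cref{lem:jensen-gap-lipschitz}) for the crude lower bound, and \Cref{cor:jensen-gap-binom} with $a = \phierr'(p_i) = \pm 1$ (after reducing by symmetry to $p_i < 1/2$) for the refined one. The only difference is cosmetic: where the paper writes ``some calculation reveals'' that $\inf_z h(z;1) = -1/(1-2p_i)$, you supply the one-line calculus argument locating the minimizer at $z^\star = 1 - p_i$, which is a welcome gap-fill rather than a departure.
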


\begin{proof}
The lower bound of the first inequality is a direct consequences of \Cref{lem:jensen-gap-lipschitz} and the fact that $\phierr$ is $1$-Lipschitz. 
The upper bound follows from the concavity of $\phierr$ and the classic Jensen inequality.

To prove the second inequality, we first assume $0 \leq p_i < 0.5$. 
For any $z \in [0, 1] \setminus \set{p_i}$, we have 
\begin{equation}
h(z)
= \frac{\phierr(z) - \phierr(p_i) - \paren{z - p_i}}{(z - p_i)^2}
= \begin{dcases*}
0 & for $z \in [0, 0.5]$, \\ 
\frac{1 - 2z}{(z - p_i)^2} & for $z \in [0.5, 1]$,
\end{dcases*}
\end{equation}
and thus
\begin{equation}
- \frac{1}{1 - 2p_i} \leq h(z) \leq 0
\text{.}
\end{equation}
Therefore, \Cref{cor:jensen-gap-binom} implies
\begin{equation}
\label{eq:jensen-gap-phierr-1}
- \frac{p_i(1 - p_i)}{m (1 - 2p_i)}
\leq 
\E{\phierr(\widehat{p}_i)} - \phierr(p_i) 
\leq  
0
\text{.}
\end{equation}
Next, we assume $0.5 < p_i \leq 1$. 
A similar argument proves 
$- \frac{1}{2p_i - 1} \leq h(z) \leq 0$ and 
\begin{equation}
\label{eq:jensen-gap-phierr-2}
- \frac{p_i(1 - p_i)}{m (2p_i - 1)}
\leq 
\E{\phierr(\widehat{p}_i)} - \phierr(p_i) 
\leq 0
\text{.}
\end{equation}

Combining 
(\ref{eq:jensen-gap-phierr-1}) and (\ref{eq:jensen-gap-phierr-2}) proves the second bound.
\end{proof}

Note that \Cref{lem:jensen-gap-err} can be rewritten as
\begin{equation}
- \min \set{
\frac{L_\err(p_i)}{m}, \ 
\sqrt{\frac{\pi}{2m}}
}\leq 
\E{\phierr(\widehat{p}_i)} - \phierr(p_i) 
\leq 0 
\text{,}
\end{equation}
where
\begin{equation}
L_\err(q) = \begin{dcases*}
\frac{q(1-q)}{\abs{2q - 1}} & if $q \neq 0.5$, \\
+ \infty & if $q = 0.5$.
\end{dcases*}
\end{equation}

\Cref{fig:L_err} shows the graph of the function $L_{\mathrm{Err}}$.
Finally, \Cref{thm:avg-hard_-labels} is proved as follows.

\begin{proof}[Proof of \Cref{thm:avg-hard_-labels}]
Conditioning on $x$, 
let 
$\set{y^{(j)}}_{j=1}^m$ be independent Bernoulli random variables with mean $\eta(x)$
and $\widehat{\eta}$ be 
their average $\frac{1}{m} \sum_{j=1}^m y^{(j)}$.
Then, 
\Cref{lem:jensen-gap-err} gives
\begin{equation}
- \min \set{
\frac{L_\err(\eta(x))}{m}, \ 
\sqrt{\frac{\pi}{2m}}
}\leq 
\E{\phierr(\widehat{\eta}) \mid x} - \phierr(\eta(x)) 
\leq 0 
\text{.}
\end{equation} 
By taking expectation over $x$, we obtain 
\begin{equation}
- 
\E{\min \set{
\frac{L_\err(\eta(x))}{m}, \ 
\sqrt{\frac{\pi}{2m}}
}}
\leq 
\E{\phierr(\widehat{\eta})} - \E{\phierr(\eta(x))} 
\leq 0 
\text{.}
\end{equation} 
Now the claim follows since
$\E{\besterrhat(\widehat{\eta}_{1:n})} = \E{\phierr(\widehat{\eta})}$ and
$\besterr = \E{\phierr(\eta(x))}$.
\end{proof}

\begin{figure}[t]
    \centering
    \includegraphics[width=0.8\textwidth]{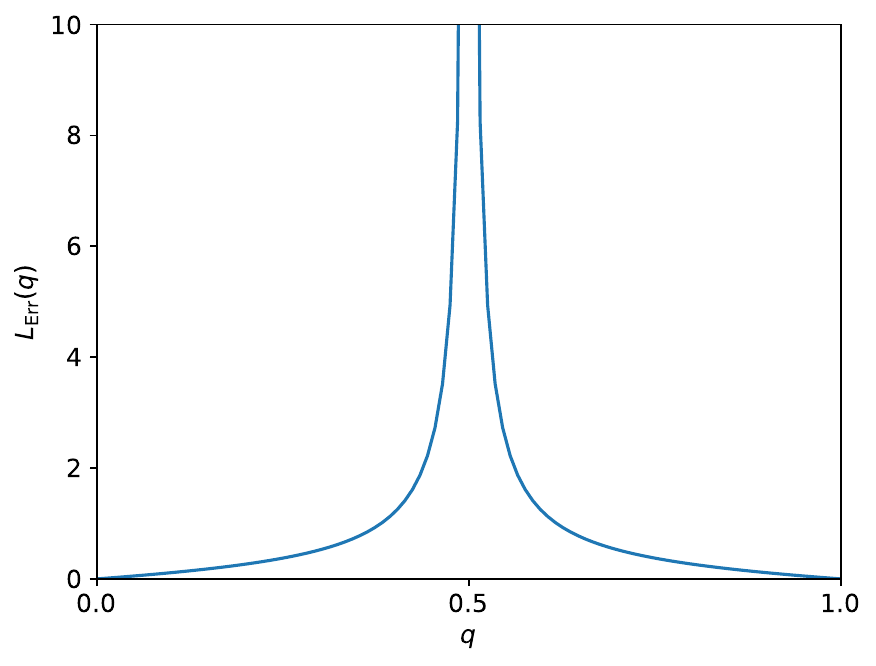}
    \caption{The graph of the function $L_\err$.}
    \label{fig:L_err}
\end{figure}

\begin{figure}[ht]
    \centering
    \includegraphics[width=0.6\textwidth]{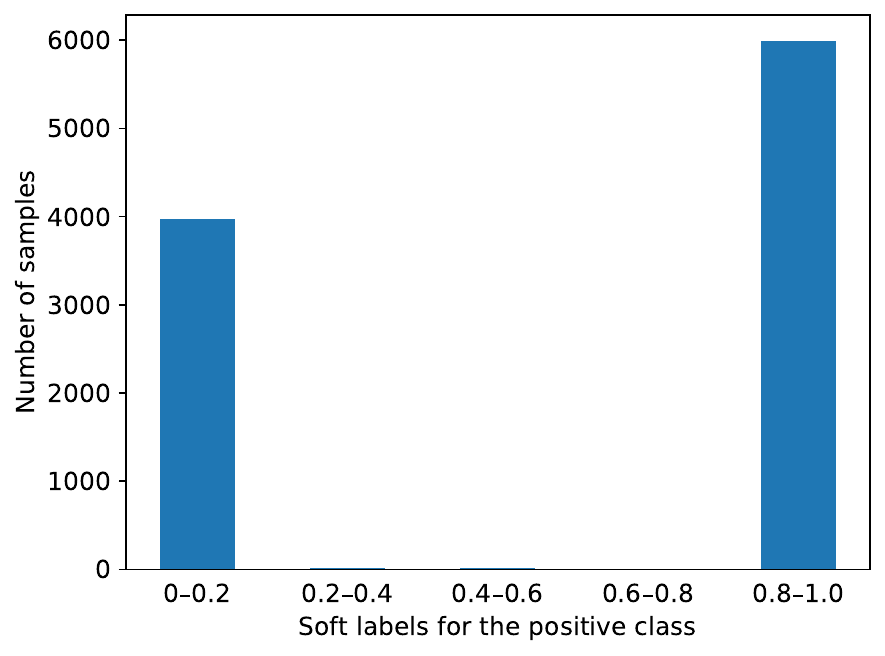}
    \caption{
    The distribution of the soft labels for the positive class in
    the binarized CIFAR-10H dataset (see \Cref{subsec:exp_setting_estimation} for details).
    You can see the two classes are very well-separated.
    }
    \label{fig:eta_dist}
\end{figure}

\subsection{Proofs for other results}

\subsubsection{Superiority of \Cref{thm:avg-hard_-labels} over the existing result}

\begin{proposition}
\label{prop:bias-bound-is-tighter}

\Cref{thm:avg-hard_-labels} is tighter than 
the existing result \eqref{eq:ishida-bias} by \citet{ishida2023is} for all $n, m \geq 1$.
\end{proposition}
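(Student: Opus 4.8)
The plan is to collapse the comparison to a single scalar inequality. By the weakened form of \Cref{thm:avg-hard_-labels} highlighted in the main text, $-\sqrt{\pi/(2m)} \le \E{\besterrhat(\widehat{\eta}_{1:n})} - \besterr \le 0$, so the absolute bias bound furnished by our result is $\sqrt{\pi/(2m)}$, whereas the bound \eqref{eq:ishida-bias} of \citet{ishida2023is} furnishes the absolute bias bound $\frac{1}{2\sqrt{m}} + \sqrt{\frac{\log(2n\sqrt{m})}{m}}$. Hence it suffices to show that for all integers $n, m \ge 1$,
\begin{equation}
\sqrt{\frac{\pi}{2m}} \;\le\; \frac{1}{2\sqrt{m}} + \sqrt{\frac{\log(2n\sqrt{m})}{m}}.
\end{equation}

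Next I would divide both sides by the common positive factor $1/\sqrt{m}$, reducing the claim to the equivalent statement $\sqrt{\pi/2} \le \frac{1}{2} + \sqrt{\log(2n\sqrt{m})}$. The left-hand side is a constant, while the right-hand side is nondecreasing in both $n$ and $m$ (since $2n\sqrt{m} \ge 2$ for $n,m \ge 1$ and $t \mapsto \log t$ is increasing), so it is enough to verify the inequality at $n = m = 1$, namely $\sqrt{\pi/2} \le \frac{1}{2} + \sqrt{\log 2}$. Because $\sqrt{\pi/2} > 1 > \frac{1}{2}$, this is in turn equivalent to $\paren{\sqrt{\pi/2} - \frac{1}{2}}^{2} \le \log 2$, which holds with room to spare: the left side is $\approx 0.567$ and $\log 2 \approx 0.693$.

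The argument presents no real obstacle; the only points deserving a second glance are that the weakened two-sided bound from \Cref{thm:avg-hard_-labels} does yield the stated absolute bound $\sqrt{\pi/(2m)}$, and that the final numerical comparison $\paren{\sqrt{\pi/2} - 1/2}^{2} \le \log 2$ indeed goes through (it does, with a comfortable gap). One could optionally add that the gap between the two bounds grows without limit as $n \to \infty$ with $m$ held fixed, which quantifies how much room for improvement there was, but this observation is not required for the proposition as stated.
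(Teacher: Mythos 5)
Your proposal is correct and follows essentially the same route as the paper: both reduce the comparison of lower bounds to the scalar inequality $\sqrt{\pi/2} \le \tfrac{1}{2} + \sqrt{\log(2n\sqrt{m})}$, and both close the argument by the numerical check $\paren{\sqrt{\pi/2} - \tfrac12}^2 \le \log 2$ (the paper phrases this as showing $\tfrac12 \exp\paren{\paren{\tfrac{\sqrt{2\pi}-1}{2}}^2} < 1$, whereas you invoke monotonicity in $n,m$ and verify at $n=m=1$, which is the same computation in different clothing). The only cosmetic difference is that the paper explicitly notes the upper endpoint comparison ($0 \le$ the upper bound of \eqref{eq:ishida-bias}) separately, while you subsume both endpoints into a single absolute-value comparison, which is fine since your bound $\sqrt{\pi/(2m)}$ dominates both endpoints of the new interval.
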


\begin{proof}
Our upper bound $0$ is of course smaller than the upper bound in \eqref{eq:ishida-bias}. As for the lower bounds, we can see that
the condition
\begin{equation}
\sqrt{\frac{\pi}{2m}}
\leq 
\frac{1}{2 \sqrt{m}} + \sqrt{\frac{\log (2 n \sqrt{m})}{m}}
\end{equation}
is equivalent to
\begin{equation}
\label{eq:tighter-than-ishida-if}
n \sqrt{m} \geq
\frac{1}{2} \exp \paren{ \paren{\frac{\sqrt{2\pi} - 1}{2}}^2 }
\text{.}
\end{equation}
The right-hand side is less than $1$ so \eqref{eq:tighter-than-ishida-if} always holds.
\end{proof}

\subsubsection{Faster decay rate of the bias in well-separated cases (\Cref{cor:avg-hard-labels-bias-separated} \& \Cref{exm:sep-label-noise})}

\begin{proof}[Proof of \Cref{cor:avg-hard-labels-bias-separated}]
By the assumption, we have
\begin{equation}
L_\err(\eta(x)) 
= \frac{\eta(x) (1 - \eta(x))}{2 \abs{\eta(x) - 0.5}}
\leq 
\frac{1/4 - c^2}{2c}
\end{equation}
almost surely. Combining this with 
\Cref{thm:avg-hard_-labels}, we obtain the result.
\end{proof}

\begin{proof}[Proof of \Cref{exm:sep-label-noise}]
Let $f_0, f_1$ be the densities of $\cF_0, \cF_1$, respectively.
From the data generation model, we can see that
\begin{equation}
\eta(x)
= 
\frac{\nu f_0(x) + (1 - \nu) f_1(x)}{f_0(x) + f_1(x)}
\end{equation}
for any $x \in \cX$ such that $f_0(x) > 0$  or $f_1(x) > 0$.
Since the supports of $\cF_0, \cF_1$ are disjoint, we can assume that
\begin{equation}
\label{eq:well-separated-assumption}
f_0(x) > 0 \implies f_1(x) = 0, \quad 
f_1(x) > 0 \implies f_0(x) = 0 \quad \text{for any $x \in \cX$,}
\end{equation}
which implies
\begin{equation}
\label{eq:eta-well-separated-label-flip}
\eta(x) = 
\begin{dcases*}
\nu & \text{if $f_0(x) > 0$,} \\
1 - \nu & \text{if $f_1(x) > 0$.}
\end{dcases*}
\end{equation}
Therefore, it holds that
\begin{equation}
\abs{\eta(x) - 0.5} = \abs{\nu - 0.5} = c > 0
\end{equation}
almost surely.
\end{proof}

\subsubsection{Computable bound of the bias (\Cref{cor:computable-bias-bound})}

\begin{lemma}
\label{lem:bound-probability-with-bayes-error-upper-bound}
If $\besterr \leq E$, we have
\begin{equation}
\P{\phierr(\eta(x)) \geq t}
\leq 
\min \set{ 1, \frac{E}{t} }
\end{equation}
for any $t \in (0, 1/2]$.
\end{lemma}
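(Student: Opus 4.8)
The plan is to recognize that the left-hand side is simply the tail probability of a nonnegative random variable whose mean is the Bayes error, and then to apply Markov's inequality.

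First I would note that $\phierr(\eta(x)) = \min\set{\eta(x), 1 - \eta(x)} \geq 0$ for every $x$, so $\phierr(\eta(x))$ is a nonnegative random variable, and its expectation over the marginal distribution is exactly $\E{\phierr(\eta(x))} = \besterr$ by the identity $\besterr = \E{\min\set{\eta(x), 1 - \eta(x)}}$ recalled in \Cref{subsec:pre-soft-labels}. Second, for any $t > 0$, Markov's inequality yields
\begin{equation}
\P{\phierr(\eta(x)) \geq t} \leq \frac{\E{\phierr(\eta(x))}}{t} = \frac{\besterr}{t} \leq \frac{E}{t},
\end{equation}
where the final inequality uses the hypothesis $\besterr \leq E$. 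Since every probability is at most $1$, combining the two upper bounds gives $\P{\phierr(\eta(x)) \geq t} \leq \min\set{1, E/t}$, which is the claim. (The restriction $t \in (0, 1/2]$ is not actually needed for the argument — since $\phierr \leq 1/2$, the probability is $0$ for $t > 1/2$ — but it is the range in which this bound is later invoked, e.g.\ in \Cref{cor:computable-bias-bound}.)

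There is essentially no obstacle here: the only points to get right are identifying $\besterr$ as the mean of the nonnegative quantity $\phierr(\eta(x))$ and applying the textbook Markov bound; no case analysis or delicate estimate is involved.
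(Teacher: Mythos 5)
Your proof is correct and takes the same route as the paper: identify $\phierr(\eta(x))$ as a nonnegative random variable with mean $\besterr$, apply Markov's inequality, and cap by $1$. Your extra remark about $t \in (0, 1/2]$ being unnecessary is also accurate.
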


\begin{proof}
Since $\phierr(\eta(x))$ is a non-negative random variable with mean $\E{\phierr(\eta(x))} = \besterr$,
Markov's inequality gives
\begin{equation}
\P{\phierr(\eta(x)) \geq t}
\leq 
\frac{\besterr}{t}
\leq 
\frac{E}{t}
\end{equation}
for any $t > 0$.
\end{proof}

\begin{proof}[Proof of \Cref{cor:computable-bias-bound}]

Let $z = \phierr(\eta(x))$. Since
\begin{equation}
\frac{\eta(x) (1 - \eta(x))}{\abs{2 \eta(x) - 1}}
=
\frac{z (1 - z)}{1 - 2z}
\text{,}
\end{equation}
we have
\begin{equation}
\frac{\eta(x) (1 - \eta(x))}{\abs{2 \eta(x) - 1}}
<
\frac{t (1 - t)}{1 - 2t}
\end{equation}
if $z < t$.
Therefore,
\begin{align}
&
\E{
    \min \set{
        \frac{L_{\mathrm{Err}}(\eta(x))}{m}, \ 
        \sqrt{\frac{\pi}{2m}}
    }
}
\\&\leq
\E{
    \sqrt{\frac{\pi}{2m}}
    \cdot 
    \indic{ z \geq t }
    +
    \frac{1}{m}
    \cdot
    \frac{t (1 - t)}{1 - 2t}
    \cdot 
    \indic{ z < t }
}
\\&\leq
    \sqrt{\frac{\pi}{2m}}
    \cdot 
    \P{ z \geq t }
    +
    \frac{1}{m}
    \cdot
    \frac{t (1 - t)}{1 - 2t}
    \cdot 
    1
\text{.}
\end{align}
By using \Cref{lem:bound-probability-with-bayes-error-upper-bound}, we obtain
\begin{equation}
\E{
    \min \set{
        \frac{L_{\mathrm{Err}}(\eta(x))}{m}, \ 
        \sqrt{\frac{\pi}{2m}}
    }
}
\leq
    \sqrt{\frac{\pi}{2m}}
    \cdot 
    \min \set{1, \frac{E}{t}}
    +
    \frac{1}{m}
    \cdot
    \frac{t (1 - t)}{1 - 2t}
\text{.}
\end{equation}
Combining this with \Cref{thm:avg-hard_-labels} yields the result.
\end{proof}

\subsubsection{Statistical consistency}

\begin{corollary}
\label{cor:consistency}
\begin{enumerate}
\item
For any $\delta \in (0, 1)$, with probability at least $1 - \delta$, we have
\begin{equation}
\label{eq:finite-consistency-general}
\abs{\besterrhat(\widehat{\eta}_{1:n}) - \besterr}
\leq \sqrt{\frac{\log (2/\delta)}{2n}}
+ \sqrt{\frac{\pi}{2m}}
\text{.}
\end{equation}

\item
Suppose there exists a constant $c > 0$ such that 
$\abs{\eta(x) - 0.5} \geq c$ holds almost surely. Then, for any $\delta \in (0, 1)$, with probability at least $1 - \delta$, we have
\begin{equation}
\label{eq:finite-consistency-separated}
\abs{\besterrhat(\widehat{\eta}_{1:n}) - \besterr}
\leq \sqrt{\frac{\log (2/\delta)}{2n}}
+ \frac{1 - 4c^2}{8cm}
\text{.}
\end{equation}

\end{enumerate}

\end{corollary}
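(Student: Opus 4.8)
The plan is a standard concentration-plus-bias decomposition, where the bias part is exactly what \Cref{thm:avg-hard_-labels} (and \Cref{cor:avg-hard-labels-bias-separated}) already control. Write $\besterrhat(\widehat{\eta}_{1:n}) = \frac{1}{n}\sum_{i=1}^{n} Z_i$ with $Z_i \coloneqq \min\set{\widehat{\eta}_i,\, 1-\widehat{\eta}_i}$. Because the $x_i$ are i.i.d.\ and, conditionally on $x_i$, the hard labels $y_i^{(1)}, \dots, y_i^{(m)}$ are i.i.d.\ $\mathrm{Bernoulli}(\eta(x_i))$ draws that are independent across $i$, the variables $Z_1, \dots, Z_n$ are i.i.d.; moreover each $Z_i$ lies in $[0,1]$ (in fact in $[0,1/2]$). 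By the triangle inequality,
\[
\abs{\besterrhat(\widehat{\eta}_{1:n}) - \besterr}
\le
\abs{\besterrhat(\widehat{\eta}_{1:n}) - \E{\besterrhat(\widehat{\eta}_{1:n})}}
+
\abs{\E{\besterrhat(\widehat{\eta}_{1:n})} - \besterr},
\]
so it suffices to bound the two terms on the right separately.

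For the stochastic term, note $\E{\besterrhat(\widehat{\eta}_{1:n})} = \E{Z_1}$ by linearity, so Hoeffding's inequality for a sum of independent random variables bounded in $[0,1]$ gives, for every $\delta \in (0,1)$, that $\abs{\besterrhat(\widehat{\eta}_{1:n}) - \E{\besterrhat(\widehat{\eta}_{1:n})}} \le \sqrt{\log(2/\delta)/(2n)}$ with probability at least $1-\delta$ (using the sharper range $[0,1/2]$ would replace $2n$ by $8n$, so the stated bound follows a fortiori). For the bias term, in case 1 I invoke \Cref{thm:avg-hard_-labels}, which yields $0 \le \besterr - \E{\besterrhat(\widehat{\eta}_{1:n})} \le \E[x \sim \marginal]{\min\set{L_\err(\eta(x))/m,\, \sqrt{\pi/(2m)}}} \le \sqrt{\pi/(2m)}$, the last inequality using $\min\set{a,b}\le b$ pointwise inside the expectation; in case 2, the almost-sure separation $\abs{\eta(x)-0.5}\ge c$ lets me invoke \Cref{cor:avg-hard-labels-bias-separated} to bound the bias by $(1-4c^2)/(8cm)$ directly. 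Substituting each of these into the displayed inequality gives \eqref{eq:finite-consistency-general} and \eqref{eq:finite-consistency-separated}, respectively.

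I do not expect any substantial obstacle: the entire analytic content lives in \Cref{thm:avg-hard_-labels}, and this corollary is just Hoeffding together with the triangle inequality. The only step warranting a sentence of care is the verification that the summands $Z_i$ are genuinely i.i.d.\ and that $\E{\besterrhat(\widehat{\eta}_{1:n})} = \E{Z_1}$, which is what licenses the plain Hoeffding bound (rather than requiring a bounded-differences / McDiarmid argument).
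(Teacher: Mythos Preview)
Your proposal is correct and follows essentially the same approach as the paper: a triangle-inequality split into a concentration term handled by Hoeffding and a bias term handled by \Cref{thm:avg-hard_-labels} (respectively \Cref{cor:avg-hard-labels-bias-separated}). If anything, your write-up is slightly more careful than the paper's in explicitly verifying that the summands $Z_i$ are i.i.d.\ and bounded before invoking Hoeffding.
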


\begin{proof}
By Hoeffding's inequality, we have
\begin{align}
&
\abs{\besterrhat(\widehat{\eta}_{1:n}) - \E{\besterrhat(\widehat{\eta}_{1:n})}}
\\&\leq
\abs{
\besterrhat(\widehat{\eta}_{1:n}) - \E{\besterrhat(\widehat{\eta}_{1:n})}
}
+
\abs{\E{\besterrhat(\widehat{\eta}_{1:n})} - \besterr}
\\&\leq
\sqrt{\frac{\log (2/\delta)}{2n}}
+
\abs{\E{\besterrhat(\widehat{\eta}_{1:n})} - \besterr}
\end{align}
with probability greater than $1 - \delta$.
By upper-bounding the second term using \Cref{thm:avg-hard_-labels} and
\Cref{cor:avg-hard-labels-bias-separated}, we obtain
\eqref{eq:finite-consistency-general}
and 
\eqref{eq:finite-consistency-separated}, respectively.
\end{proof}

\subsection{Numerical experiments}
\label{sec:toy-exp-method1}

Here, we examine the validity
of our theory using synthetic datasets composed of instances drawn from the following distributions.\footnote{
This experiment takes around 1 hour for each distribution on a CPU.
}
\begin{enumerate}[label=\textbf{(\alph*)}]
    \item 
    \label{item:bias-a}
    The Gaussian mixture $\bbP_\cX = 0.5 \cdot \bbP_0 + 0.5 \cdot \bbP_1$ with
$\bbP_0 = N \paren{(0, 0), I_2}$ and
$\bbP_1 = N \paren{(2, 2), I_2}$.
    \item
    \label{item:bias-b}
    The Gaussian mixture $\bbP_\cX = 0.5 \cdot \bbP_0 + 0.5 \cdot \bbP_1$ with the completely overlapping components
$\bbP_0 = \bbP_1 = N \paren{(0, 0), I_2}$.
    \item
    \label{item:bias-c}
    The distribution with label flips discussed in \Cref{exm:sep-label-noise}. We 
    set the label flip rate to $\nu = 0.1$
    and 
    use the uniform distributions over $[0, 1)^2$ and $[1, 2)^2$
    as $\cF_0$ and $\cF_1$, respectively.
    \footnote{
    Note that the choice of base distributions $\cF_0, \cF_1$ does not matter as long as they satisfy the assumption \eqref{eq:well-separated-assumption} because $\eta$ is determined solely by the label flip rate $\nu$; see \eqref{eq:eta-well-separated-label-flip}.
    }
\end{enumerate}
Note that the ``perfect separation'' assumption of \Cref{cor:avg-hard-labels-bias-separated} is met only by \textbf{(c)}.
For each $m = 10, 25, 50, 100, 250, 500, 1000$, 
we perform the following procedure $1000$ times:
\begin{enumerate}
    \item Sample $n = 2000$ instances from one of the distributions \textbf{(a)}, \textbf{(b)} and \textbf{(c)}.
    \item For each instance $x_i$, generate $m$ hard labels $y_i^{(1)}, \dots, y_i^{(m)}$ from the posterior class distribution $\bbP(y \mid x = x_i)$ and compute the approximate soft label $\widehat{\eta}_i = \frac{1}{m} \sum_{j=1}^m y_i^{(j)}$.
    \item Compute the estimate $\besterrhat\paren{\widehat{\eta}_{1:n}}$.
\end{enumerate}
Then, we approximate the expectation $\E{\besterrhat\paren{\widehat{\eta}_{1:n}}}$ 
by the average of the 1000 estimates to calculate the bias
$\abs{\E{\besterrhat\paren{\widehat{\eta}_{1:n}}} - \besterr}$.

\Cref{fig:result-bias-decay}
is a log-log plot showing
the empirical bias (the blue solid line) as a function of $m$ for each setup.
The corresponding theoretical bound \eqref{eq:bias-bound-err} is also shown by the black dashed line.\footnote{
The expectation 
$\E{\min\set{\frac{L_\err(\eta(x))}{m}, \ 
\sqrt{\frac{\pi}{2m}}}}$
is approximated by the sample average over $20000$ data points.
}
We note that the empirically observed bias is smaller than the theoretical bound in all the setups as expected.
Our theory accurately predicts the decay of the bias, especially in \textbf{(a)} and \textbf{(b)}.
If we fit a function of the form $m^{p}$ to a bias curve, 
the slope of its graph corresponds to the exponent $p$.
The slopes obtained by least-squares fitting are $-0.9066$ for 
\textbf{(a)},
$-0.4970$ for 
\textbf{(b)},
and $-0.4228$ for 
\textbf{(c)}.
Recall that, the two class-conditional distributions were completely overlapping with each other in \textbf{(b)}.
Thus the slope close to $-0.5$ is as expected. 
What is somewhat interesting is the result for \textbf{(a)}.
Although this setup does not satisfy the perfect separation assumption of \Cref{cor:avg-hard-labels-bias-separated}, 
the observed bias decay is approximately proportional to $m^{-1}$.
It suggests that
the ``fast'' $m^{-1}$ term dominates the ``slow'' $m^{-1/2}$ term.
As for \textbf{(c)}, 
examining the slope $-0.4228$ will not make much sense as the shape of the graph \Cref{fig:bias_uniform} is far from being a straight line.

\begin{figure}[ht]
    \centering
    \begin{minipage}{0.65\textwidth}
        \centering
        \begin{subfigure}{0.32\textwidth}
            \centering
            \includegraphics[width=\linewidth]{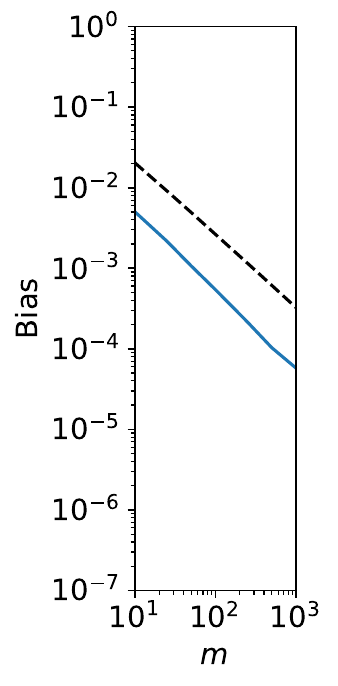}
            \caption{}
            \label{fig:bias_1}
        \end{subfigure}
        \hfill
        \begin{subfigure}{0.32\textwidth}
            \centering
            \includegraphics[width=\linewidth]{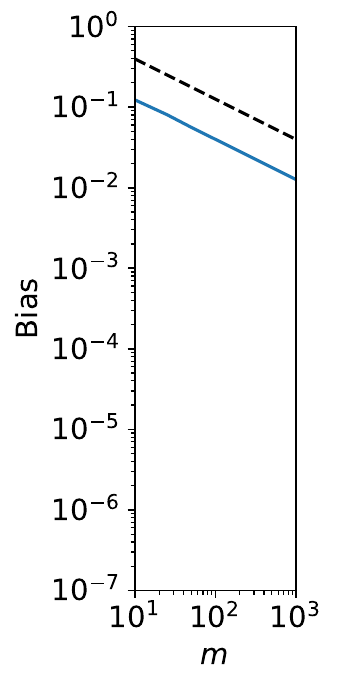}
            \caption{}
            \label{fig:bias_2}
        \end{subfigure}
        \hfill
        \begin{subfigure}{0.32\textwidth}
            \centering
            \includegraphics[width=\linewidth]{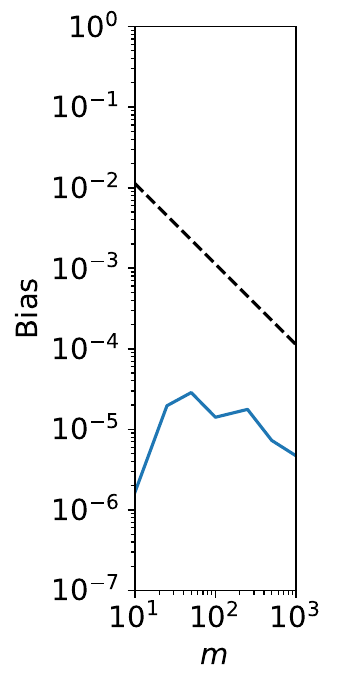}
            \caption{}
            \label{fig:bias_uniform}
        \end{subfigure}
    \end{minipage}
    \caption{
      The bias of the hard-label-based estimator $\besterrhat\paren{\widehat{\eta}_{1:n}}$ as a function of the number
$m$ of hard labels per sample. The blue solid lines indicate the experimental results
while the black dashed lines indicate the theoretical upper bounds in \Cref{thm:avg-hard_-labels}.
    }
        \label{fig:result-bias-decay}
\end{figure}

\section{Supplementary for \Cref{chap:method2}}
\label{sec:app-for-section3}

In this section, we present the proof of \Cref{thm:isotonic-bayes-error-bound}.
\Cref{sec:isotonic-regression-bound} presents a new risk bound for binary isotonic regression (\Cref{thm:naive-risk-bound}) as well as a useful lemma for general shape-constrained nonparametric regression problems (\Cref{thm:sharp-oracle-inequality-general}).
Then, we employ these results to prove the theorem in
\Cref{sec:proof-of-isotonic-bayes-error-bound}.

\subsection{Risk bound for binary isotonic regression}
\label{sec:isotonic-regression-bound}

\subsubsection{Nonparametric regression and isotonic regression}

Here, we introduce general nonparametric regression problems where we aim to estimate the underlying signal from noisy observations. Then, we describe the isotonic regression setting.

Let $T$ be a set.
Assume that, for each design point $t_i \in T$, $i = 1, \ldots, n$, we observe
\begin{equation}
y_i = f^*(t_i) + \xi_i
\text{,}
\end{equation}
where $f^*: T \to \R$ is the unknown regression function and 
$\xi_i \in \R$ are independent and mean-zero noise variables. 
A natural estimator would be the least squares estimator (LSE)
\begin{equation}
\widehat{f} \in \argmin_{f \in \cF} \frac{1}{n} \sum_{i=1}^n \paren{y_i - f(t_i)}^2
\text{,}
\end{equation}
where $\cF$ is some pre-defined function class. 
In the fixed design setting, the quality of an  estimator $\widehat{f}$ is evaluated by the $\ell^2$ risk
\begin{equation}
\frac{1}{n} \sum_{i=1}^n \paren{\widehat{f}(t_i) - f^*(t_i)}^2
\text{.}
\end{equation}
Under this criterion, 
estimators are evaluated only on the fixed design points $t_i$ so estimating the function $f^*$ is equivalent to estimating the sequence/vector $\bm{\mu} := (f^*(t_1), \ldots, f^*(t_n)) \in \R^n$. 

From this perspective, the regression problem can be reformulated as follows. 
Our observation is an $n$-dimensional random vector $\bmy = (y_1, \ldots, y_n) \in \R^n$ of the form
\begin{equation}
\bmy = \bmmu + \bmxi
\text{.}
\end{equation}
Here $\bm{\mu} = (\mu_1, \ldots, \mu_n) \in \R^n$ is the unknown signal and 
$\bm{\xi} = (\xi_1, \ldots, \xi_n) \in \R^n$ is a centered noise vector whose elements are independent.
Let $K \subset \R^n$ be a closed convex subset from which we choose our estimates, which corresponds to the function class $\cF$ in the function estimation formulation described above. 
Often it is assumed that the true signal $\bm{\mu}$ indeed belongs to $K$.
However, in our results in \Cref{sec:isotonic-regression-bound}, we allow model misspecification, i.e., we do not assume $\bm{\mu} \in K$.
The LSE $\widehat{\bm{\mu}}$ is the Euclidean projection of $\bmy$ onto $K$: 
\begin{equation}
\widehat{\bm{\mu}} = \argmin_{\bmu \in K} \enorm{\bmy - \bmu}^2 \text{.}
\end{equation}

\textbf{Isotonic regression}\quad
\textit{Isotonic regression} is a special case where we choose $K$ to be 
the collection $\cM_n$ of all non-decreasing sequences of length $n$: 
\begin{equation}
\label{eq:monotonic-cone}
\cM_n := \set{\bmu \in \R^n \mid u_1 \leq \ldots \leq u_n}
\text{.}
\end{equation}
Note that $\cM_n$ is a closed convex cone. 
Here the goal is to estimate isotonic, or monotonic, signals from noisy observations. 
Recall that the LSE $\widehat{\bm{\mu}}$ is the Euclidean projection of the observation vector $\bmy$ onto $\cM_n$: 
\begin{equation}
\widehat{\bm{\mu}} = \argmin_{\bmu \in \cM_n} \enorm{\bmy - \bmu}^2 \text{.}
\end{equation}
It has the following explicit representation \citep{robertson1988order}, which is known as the min-max formula:
\begin{equation}
\label{eq:min-max-formula}
\widehat{\mu}_i = \min_{l \geq i} \max_{k \leq i} \bar{y}_{k:l}
\text{,}
\quad i = 1, \ldots, n
\text{,}
\end{equation}
where $\bar{y}_{k:l} := \frac{1}{l - k + 1} \sum_{i=k}^l y_i$ is the average of $y_k, \ldots, y_l$.
It can be efficiently computed with the \emph{pool adjacent violators} (PAV) algorithm \citep{ayer1955empirical}.

We are interested in evaluating the risk $\frac{1}{n} \enorm{\widehat{\bm{\mu}} - \bm{\mu}}^2$ of the LSE $\widehat{\bm{\mu}}$, which has been extensively studied in the literature \citep[e.g.][]{zhang2002risk, chatterjee2014new, chatterjee2015risk, bellec2015sharp, bellec2018sharp, yang2019contraction, chatterjee2019adaptive}. 
We will cover the results from these existing works later in \Cref{subsec:related-work-isotonic}.

\subsubsection{Review of the existing risk bounds for isotonic regression}
\label{subsec:related-work-isotonic}

First, we introduce some notions that will be needed below. 
For each non-decreasing sequence $\bmu \in \cM_n$, we denote its total variation by 
\begin{equation}
V(\bmu) := \max_i u_i - \min_i u_i \text{.}
\end{equation}
We also let $k(\bmu)$ be the number of constant pieces in $\bmu$. In other words, $k(\bmu) - 1$ is the number of the inequalities $u_i \leq u_{i+1}$ that are strict, so the sequence $u_1, \ldots, u_n$ has $k(\bmu) - 1$ jumps in total.

For the cases where $\bm{\mu} \in \cM_n$ and 
the noises have bounded variance $\E{\xi_i^2} \leq \sigma^2$, 
the following bound on the expected risk was proven by \citet{zhang2002risk}: 
\begin{equation}
\label{eq:zhang2002risk}
\E{\frac{1}{n} \enorm{\widehat{\bm{\mu}} - \bm{\mu}}^2}
\leq 
C \set{
\paren{\frac{\sigma^2 V(\bm{\mu})}{n}}^{2/3}
+ 
\frac{\sigma^2 \log (en)}{n}
}\text{,}
\end{equation}
where $C \leq 12.3$ is an absolute constant. 
\citet{chatterjee2015risk} showed this $n^{-2/3}$ rate is minimax, while providing 
another type of risk bound
\begin{equation}
\label{eq:chatterjee2015risk}
\E{\frac{1}{n} \enorm{\widehat{\bm{\mu}} - \bm{\mu}}^2}
\leq 
6 \min_{\bmu \in \cM_n} \set{
\frac{1}{n} 
\enorm{\bmu - \bm{\mu}}^2 + 
\frac{\sigma^2 k(\bmu)}{n} \log \frac{en}{k(\bmu)}
}
\end{equation}
under the assumptions that $\bm{\mu} \in \cM_n$ and $\xi_i$ are i.i.d.\ with finite variance $\E{\xi_i^2} = \sigma^2$.
\eqref{eq:chatterjee2015risk} is adaptive, unlike \eqref{eq:zhang2002risk}, in the sense that it gives a parametric rate $n^{-1/2}$ up to logarithmic factors when 
the true signal $\bm{\mu}$ is well-approximated by some $\bmu \in \cM_n$ with small $k(\bmu)$ (i.e., a piecewise constant sequence with not too many pieces). 
Later, \citet{bellec2018sharp} showed two types of bounds, improving the previous results in the case of  i.i.d.\ Gaussian noise $\bmxi \sim N(0, \sigma^2 I_n)$. 
In the first result, they proved that, with probability greater than $1 - e^{-x}$, we have 
\begin{equation}
\label{eq:bellec2018sharp-nonadaptive}
\frac{1}{n} \enorm{\widehat{\bm{\mu}} - \bm{\mu}}^2
\leq 
\min_{\bmu \in \cM_n} \set{
\frac{1}{n} \enorm{\bmu - \bm{\mu}}^2 + 
2 c \sigma^2 \paren{\frac{\sigma + V(\bmu)}{\sigma n}}^{2/3}
}
+
\frac{4 \sigma^2 x}{n}
\text{,}
\end{equation}
where $c$ is an absolute constant. 
A corresponding bound in expectation also can be derived by integrating this high-probability bound. 
The second result is that, with probability at least $1 - e^{-x}$, we have 
\begin{equation}
\label{eq:bellec2018sharp-adaptive}
\frac{1}{n} \enorm{\widehat{\bm{\mu}} - \bm{\mu}}^2
\leq 
\min_{\bmu \in \cM_n} \set{
\frac{1}{n} 
\enorm{\bmu - \bm{\mu}}^2 + 
\frac{2 \sigma^2 k(\bmu)}{n} \log \frac{en}{k(\bmu)}
}
+
\frac{4 \sigma^2 x}{n}
\text{.}
\end{equation}
A similar in-expectation bound 
\begin{equation}
\label{eq:bellec2018sharp-adaptive-expectation}
\E{\frac{1}{n} \enorm{\widehat{\bm{\mu}} - \bm{\mu}}^2}
\leq 
\min_{\bmu \in \cM_n} \set{
\frac{1}{n} 
\enorm{\bmu - \bm{\mu}}^2 + 
\frac{\sigma^2 k(\bmu)}{n} \log \frac{en}{k(\bmu)}
}
\end{equation}
also holds. 
\citet{bellec2018sharp}'s results, 
\eqref{eq:bellec2018sharp-nonadaptive}, 
\eqref{eq:bellec2018sharp-adaptive} and 
\eqref{eq:bellec2018sharp-adaptive-expectation}, have several features worth mentioning. 
First, their leading constants are $1$. 
For this reason, these bounds are called \textit{sharp} oracle inequalities. 
Second,  
they are valid even under model misspecification, which \eqref{eq:zhang2002risk} nor \eqref{eq:chatterjee2015risk} allowed. 
Third, 
\eqref{eq:bellec2018sharp-nonadaptive} and \eqref{eq:bellec2018sharp-adaptive} were the first oracle inequalities that were shown to hold with high probability, rather than in expectation.
The last point is especially important for our purpose, i.e., computing confidence intervals. A major drawback of 
the results by \citet{bellec2018sharp} is that they are restricted to Gaussian noise. 
\citet{yang2019contraction}
employed their unique sliding window norm technique to
prove the following bound for general sub-Gaussian noise with variance proxy $\sigma^2$:
\begin{equation}
\label{eq:yang2019contraction}
\E{\frac{1}{n} \enorm{\widehat{\bmmu} - \bmmu}^2} \leq 
48 \paren{\frac{\sigma^2 V(\bmmu) \log (2n)}{n}}^{2/3} 
+
\frac{96 \sigma^2 \log^2 (2n)}{n}
\end{equation}
Under model misspecification $\bmmu \not \in \cM_n$, 
\eqref{eq:yang2019contraction} still remains valid with $\bmmu$ replaced by its projection onto $\cM_n$. 
A similar high-probability bound also can be derived by almost the same argument, although they did not mention it in their paper.

\subsubsection{Metric entropy bounds for isotonic constraints}

For real numbers $-\infty < a < b < \infty$, we define the truncated version of the isotonic cone $\cM_n$ as 
\begin{equation}
\label{eq:monotonic-code-truncated}
\cM_n(a, b) := 
\set{\bmx \in \R^n \mid a \leq x_1 \leq \ldots \leq x_n \leq b} 
\text{.}
\end{equation}
$\cM_n(a, b)$ is not a cone, unlike $\cM_n$, but it is still a closed convex set.
We also define the set of all non-decreasing functions from $[0, 1)$ to $[0, 1]$:
\begin{equation}
  \cM := \set{
f: [0, 1) \to [0, 1] \mid \text{$f$ is non-decreasing}
} \text{.}  
\end{equation}

Let $(\cF, \norm{\cdot})$ be a subset of a normed function space. 
Given two functions $l, u \in \cF$ with $\norm{u - l} \leq \epsilon$, the set 
\begin{equation}
\set{f \in \cF \mid l \leq f \leq u}
\end{equation}
is called an \textit{$\epsilon$-bracket} \citep{van1996weak}. 
The \textit{$\epsilon$-bracketing number} $\cN_{[\,]}(\cF, \norm{\cdot}, \epsilon)$ of $(\cF, \norm{\cdot})$ is the smallest number of $\epsilon$-brackets needed to cover $\cF$.
The logarithm of bracketing numbers is called \textit{bracketing entropy}.

\citet[Theorem 2.7.5]{van1996weak} and  \citet[Theorem 1.1]{gao2007entropy} proved 
the $\epsilon$-bracketing entropy of $\cM$ is of order $\epsilon^{-1}$, i.e., 
\begin{equation}
\label{eq:bracketing}
\log \cN_{[\,]}(\cM, \norm[L^p]{\cdot}, \epsilon) \leq \frac{C_p}{\epsilon} \text{,}
\quad \forall\epsilon > 0 \text{,}
\end{equation}
where $C_p > 0$ is a universal constant depending only on $p \in [1, \infty)$ and 
$\norm[L^p]{\cdot}$ is the $L^p$ norm under Lebesgue measure. 
Later, \citet[Lemma 4.20]{chatterjee2014new} established a tool that enables us to convert the bracketing entropy bound (\ref{eq:bracketing}) for monotone functions into a metric entropy bound for monotone sequences.
It has been commonly utilized in previous studies \citep{chatterjee2014new, bellec2018sharp, chatterjee2019adaptive}. 
Although the original result by \citet{chatterjee2014new} was stated for the Euclidean norm $\enorm{\cdot}$, 
results for other $p$-norms $\norm[p]{\cdot}, \ p \in [1, \infty]$ can be obtained by a similar argument. We state and prove this generalized version below. 

\begin{theorem}
\label{thm:metric-entropy-sequences-chatterjee2014new}
Take any $p \in [1, \infty)$.
Then, we have 
\begin{equation}
\label{eq:metric-entropy-monotone-sequences-old}
\log \cN(\cM_n(a, b), \norm[p]{\cdot}, \epsilon) \leq \frac{C_p (b - a) n^{1/p}}{\epsilon}
\end{equation}
for $\epsilon > 0$. 
Here $C_p$ is the same constant as in 
(\ref{eq:bracketing}). 
\end{theorem}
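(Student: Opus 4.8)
The plan is to reduce the metric entropy bound for the truncated monotone cone $\cM_n(a,b)$ to the bracketing entropy bound \eqref{eq:bracketing} for the function class $\cM$ of monotone functions on $[0,1)$, following the strategy of \citet[Lemma 4.20]{chatterjee2014new} but tracking the $p$-norm scaling carefully. First I would observe that it suffices to treat $a=0$, $b=1$: given a general pair $a<b$, the affine map $\bmx \mapsto (\bmx - a\mathbf{1})/(b-a)$ is a bijection from $\cM_n(a,b)$ onto $\cM_n(0,1)$ that scales the $p$-norm by the factor $1/(b-a)$, so an $\epsilon/(b-a)$-net of $\cM_n(0,1)$ pulls back to an $\epsilon$-net of $\cM_n(a,b)$, and \eqref{eq:metric-entropy-monotone-sequences-old} for general $a,b$ follows from the case $a=0,b=1$.

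Next I would set up the correspondence between sequences and functions. To a vector $\bmx \in \cM_n(0,1)$ associate the right-continuous step function $g_{\bmx}: [0,1) \to [0,1]$ defined by $g_{\bmx}(u) = x_i$ for $u \in [(i-1)/n, i/n)$; this $g_{\bmx}$ is non-decreasing, so $g_{\bmx} \in \cM$. The key computation is the norm conversion: for two vectors $\bmx, \bmx' \in \cM_n(0,1)$,
\begin{equation}
\norm[L^p]{g_{\bmx} - g_{\bmx'}}^p = \int_0^1 |g_{\bmx}(u) - g_{\bmx'}(u)|^p \, du = \sum_{i=1}^n \frac{1}{n} |x_i - x_i'|^p = \frac{1}{n} \norm[p]{\bmx - \bmx'}^p,
\end{equation}
so $\norm[p]{\bmx - \bmx'} = n^{1/p} \norm[L^p]{g_{\bmx} - g_{\bmx'}}$. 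This is exactly where the $n^{1/p}$ factor in the statement comes from. Now take an $\epsilon' := \epsilon / n^{1/p}$ bracketing cover of $\cM$ by brackets $[l_j, u_j]$ with $\norm[L^p]{u_j - l_j} \le \epsilon'$, of cardinality at most $\exp(C_p/\epsilon')$ by \eqref{eq:bracketing}. Each $g_{\bmx}$ lies in some bracket $[l_j, u_j]$; I would then pick, for each occupied bracket, one representative vector $\bmx^{(j)} \in \cM_n(0,1)$ whose step function lies in that bracket (when such a vector exists — discard empty brackets). For any $\bmx$ in the same bracket as the representative $\bmx^{(j)}$, both $g_{\bmx}$ and $g_{\bmx^{(j)}}$ are sandwiched between $l_j$ and $u_j$, so $|g_{\bmx}(u) - g_{\bmx^{(j)}}(u)| \le u_j(u) - l_j(u)$ pointwise, hence $\norm[L^p]{g_{\bmx} - g_{\bmx^{(j)}}} \le \norm[L^p]{u_j - l_j} \le \epsilon'$, and therefore $\norm[p]{\bmx - \bmx^{(j)}} = n^{1/p}\norm[L^p]{g_{\bmx} - g_{\bmx^{(j)}}} \le n^{1/p}\epsilon' = \epsilon$. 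Thus the representatives form an $\epsilon$-net of $\cM_n(0,1)$ of size at most $\exp(C_p/\epsilon') = \exp(C_p n^{1/p}/\epsilon)$, which gives $\log \cN(\cM_n(0,1), \norm[p]{\cdot}, \epsilon) \le C_p n^{1/p}/\epsilon$. Combining with the reduction of the first paragraph yields the claimed bound with the extra factor $(b-a)$.

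The main obstacle I anticipate is the last sandwiching step: one must be sure that whenever two step functions $g_{\bmx}, g_{\bmx'}$ lie in a common bracket $[l_j, u_j]$, their pointwise difference is genuinely controlled by $u_j - l_j$ in $L^p$, and that the representative-selection does not blow up the count — this is handled by noting the bracket containment is a pointwise two-sided inequality, so $|g_{\bmx} - g_{\bmx'}| \le u_j - l_j$ everywhere, and the number of representatives is at most the number of brackets. A minor subtlety is that the brackets in \eqref{eq:bracketing} need not have endpoints that are themselves step functions of the form $g_{\bmx}$, but this is irrelevant: we only need \emph{some} vector in $\cM_n(0,1)$ per occupied bracket, and by construction at least one (namely the $\bmx$ that put us in that bracket) exists. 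Everything else is the routine affine rescaling and the $L^p$-norm identity above.
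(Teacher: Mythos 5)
Your argument is correct and takes essentially the same route as the paper's proof: reduce to $(a,b)=(0,1)$, embed monotone sequences as step functions $g_{\bmx}\in\cM$, use the identity $\norm[p]{\bmx - \bmx'} = n^{1/p}\norm[L^p]{g_{\bmx} - g_{\bmx'}}$, and transfer the bracketing entropy bound \eqref{eq:bracketing} for $\cM$. The only (cosmetic) difference is that you work directly with the brackets and exploit the pointwise sandwiching to bound the distance to each representative, whereas the paper first converts the bracketing bound into an $\epsilon$-net for $\cM$ and then applies a triangle inequality through an intermediate net element; both give the same rate.
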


\begin{proof}
Without loss of generality, we assume $a = 0$ and $b = 1$. 
First of all, note the general fact that $\epsilon$-covering number is upper-bounded by $2 \epsilon$-bracketing number \citep[see, e.g.][]{van1996weak}. This, together with the bracketing number bound (\ref{eq:bracketing}), implies
\begin{equation}
\log \cN(\cM, \norm[L^p]{\cdot}, \epsilon)
\leq 
\log \cN_{[\,]}(\cM, \norm[L^p]{\cdot}, 2\epsilon)
\leq 
\frac{C_p}{2 \epsilon}
\text{.}
\end{equation}
Therefore, there exists an $\epsilon$-net $\tilde{N}$ of the function class $\cM$ with $\log \lvert\tilde{N}\rvert \leq \frac{C_p}{2 \epsilon}$. 
Now set $\delta = 2 n^{1/p} \epsilon$. 
We will construct a $\delta$-net $N$ of the sequence class $\cM_n(0, 1)$ based on $\tilde{N}$. 
To this end, 
for each monotone sequence $\bmu \in \cM_n(0, 1)$, we associate it with
a monotone piecewise constant function 
$g_{\bmu} \in \cM$ of the form
\begin{equation}
g_{\bmu}(x) = \sum_{i=1}^n u_i \indic{x \in \left[\frac{i-1}{n}, \frac{i}{n}\right)}
\text{.}
\end{equation}
For each $f \in \tilde{N}$, 
we check if $f$ can be approximated by $g_{\bmu}$ for some $\bmu \in \cM_n(0, 1)$ so that $\norm[L^p]{f - g_{\bmu}} \leq \epsilon$. 
If it can, we put one of the corresponding sequences $\bmu$ into $N$.
By construction of $N$, we have 
$\log \abs{N} \leq \log \lvert\tilde{N}\rvert \leq \frac{C_p}{2 \epsilon}$. 

Next, we confirm $N$ is indeed a $\delta$-net of $\cM_n(0, 1)$. 
Take any $\bmu \in \cM_n(0, 1)$. 
Then, since $\tilde{N}$ is a $\epsilon$-net of $\cM$ and $g_{\bmu}$ belongs to $\cM$, 
there exists $f \in \tilde{N}$ approximating $g_{\bmu}$ so that 
\begin{equation}
\label{eq:chatterjee2014new-covering}
\norm[L^p]{g_{\bmu} - f} \leq \epsilon
\text{.}
\end{equation}
Now, observe that
(\ref{eq:chatterjee2014new-covering})
implies ``$f \in \tilde{N}$ can be approximated by $g_{\bmu}$ for some $\bmu \in \cM_n(0, 1)$,'' so there is $\bmv \in N$ such that $\norm[L^p]{f - g_{\bmv}} \leq \epsilon$ by the construction of $N$. 
So the triangle inequality implies
\begin{equation}
\norm[L^p]{g_{\bmu} - g_{\bmv}}
\leq 
\norm[L^p]{g_{\bmu} - f} + \norm[L^p]{f - g_{\bmv}}
\leq 
2 \epsilon
\text{.}
\end{equation}
On the other hand, the left-hand side can be explicitly calculated as follows. 
\begin{align}
\norm[L^p]{g_{\bmu} - g_{\bmv}}
&= 
\paren{\int_0^1 (g_{\bmu} - g_{\bmv})^p}^{1/p}
\nonumber
\\&= 
\paren{\int_0^1 
\sum_{i=1}^n (u_i - v_i)^p \indic{x \in \left[\frac{i-1}{n}, \frac{i}{n}\right)}
}^{1/p}
\nonumber
\\&= 
\paren{
\frac{1}{n} \sum_{i=1}^n (u_i - v_ii)^p 
}^{1/p}
\nonumber
\\&= 
\frac{\norm[p]{\bmu - \bmv}}{n^{1/p}}
\text{.}
\end{align}
Therefore, it follows that, for any $\bmu \in \cM_n(0, 1)$, there exists $\bmv \in N$ such that
\begin{align}
\norm[p]{\bmu - \bmv}
\leq 
2 n^{1/p} \epsilon = \delta
\text{,}
\end{align}
which proves $N$ is a $\delta$-net of $\cM_n(0, 1)$ with respect to $p$-norm. 
Thus, we have 
\begin{align}
\log \cN(\cM_n(0, 1), \norm[p]{\cdot}, \delta) 
\leq 
\log \abs{N}
\leq 
\frac{C_p}{2\epsilon}
= 
\frac{C_p n^{1/p}}{\delta}
\text{.}
\end{align}
\end{proof}

\subsubsection{Lemma for proving sharp oracle inequalities}
\label{sec:sharp-oracle-inequality}

Here, we present a general lemma
that we can use to prove a sharp oracle inequality for the LSE 
\begin{equation}
\widehat{\bm{\mu}} = \argmin_{\bmu \in K} \enorm{\bmy - \bmu}^2 = \argmin_{\bmu \in K} \enorm{(\bm{\mu} + \bm{\xi}) - \bmu}^2
\end{equation}
under a convex constraint $K \subset \R^n$ and a general noise $\bm{\xi}$.
Here ``sharp'' means that the resulting oracle inequality has a leading constant $1$. 
\Cref{thm:sharp-oracle-inequality-general} below is a slight extension of the elegant argument given by \citet[Theorem 2.3]{bellec2018sharp}, which was given for the i.i.d.\ Gaussian noise setting. 
In fact, it is essentially just a deterministic statement, so there is no requirement for the stochastic structure of the noise $\bm{\xi}$. 
Their key idea was to make use of convexity to obtain a stronger basic inequality than usual. 
Here \textit{basic inequality} refers to the elementary fact
\begin{equation}
\label{eq:basic-inequality}
\enorm{\widehat{\bm{\mu}} - \bm{\mu}}^2 
\leq 
\enorm{\bmu - \bm{\mu}}^2 + 2 \inp{\bm{\xi}}{\widehat{\bm{\mu}} - \bmu}
\text{,} \quad 
\forall \bmu \in K
\end{equation}
that holds even if $K$ is non-convex. 
(\ref{eq:basic-inequality}) immediately follows from the optimality of $\widehat{\bm{\mu}}$, i.e.,
\begin{equation}
\label{eq:optimality-of-LSE}
\enorm{\bmy - \widehat{\bm{\mu}}}^2 \leq \enorm{\bmy - \bmu}^2
\text{,} \quad 
\forall \bmu \in K
\text{.}
\end{equation}
Now suppose $K$ is convex. Then, the LSE (i.e., the projection of $\bmy$ onto $K$) satisfies the variational inequality 
\begin{equation}
\inp{\bmu - \widehat{\bm{\mu}}}{\bmy - \widehat{\bm{\mu}}} \leq 0 \text{,} \quad \forall \bmu \in K \text{,}
\end{equation}
which is an elementary result of convex geometry. 
Importantly, it implies
\begin{equation}
\label{eq:optimality-of-LSE-convex}
\enorm{\bmy - \widehat{\bm{\mu}}}^2 \leq 
\enorm{\bmy - \bmu}^2 - \enorm{\widehat{\bm{\mu}} - \bmu}^2
\text{,} \quad 
\forall \bmu \in K
\text{.}
\end{equation}
(\ref{eq:optimality-of-LSE-convex}) can be seen as a strengthened version of (\ref{eq:optimality-of-LSE}) with the additional term $- \enorm{\widehat{\bm{\mu}} - \bmu}^2$. 
Therefore it can be used to derive a stronger version of the basic inequality (\ref{eq:basic-inequality}), i.e., 
\begin{equation}
\label{eq:basic-inequality-convex}
\enorm{\widehat{\bm{\mu}} - \bm{\mu}}^2 
\leq 
\enorm{\bmu - \bm{\mu}}^2 + 2 \inp{\bm{\xi}}{\widehat{\bm{\mu}} - \bmu} - \enorm{\widehat{\bm{\mu}} - \bmu}^2
\text{,} \quad 
\forall \bmu \in K
\end{equation}
This is the inequality (2.3) in \citet{bellec2018sharp}.
Following their method, 
we use this fact as the starting point of the proof of \Cref{thm:sharp-oracle-inequality-general}.
Recall that we do not require $\bm{\mu}$ to belong to $K$. It can be any point in $\R^n$, i.e., we allow model misspecification.

\begin{lemma}[Localized width and projection]
\label{thm:sharp-oracle-inequality-general}
Take any $p \in [2, \infty]$. 
Let $\bm{\xi}, \bm{\mu} \in \R^n$ be arbitrarily fixed vectors and $K \subset \R^n$ be a convex set.
Suppose that a point $\bmu \in K$ and positive numbers $t, s$ satisfy
\begin{equation}
\label{eq:sharp-oracle-inequality-sub-gamma-assumption}
Z(\bmu, t) := \sup_{\bmv \in K, \norm[p]{\bmv - \bmu} \leq t} \inp{\bm{\xi}}{\bmv - \bmu}
\leq 
\frac{t^2}{2} + 
ts
\text{.}
\end{equation}
Then, the projection $\widehat{\bm{\mu}}$ of $\bm{\mu} + \bm{\xi}$ onto $K$ satisfies 
\begin{equation}
\enorm{\widehat{\bm{\mu}} - \bm{\mu}}^2
\leq 
\enorm{\bmu - \bm{\mu}}^2 + 
\paren{t + s}^2
\text{.}
\end{equation}
\end{lemma}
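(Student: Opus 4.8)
The plan is to reproduce the deterministic core of the argument of \citet{bellec2018sharp}, taking the strengthened basic inequality \eqref{eq:basic-inequality-convex} as the starting point and specializing its free parameter to the fixed reference point $\bmu \in K$ appearing in the hypothesis. That gives
\[
\enorm{\widehat{\bm{\mu}} - \bm{\mu}}^2 \le \enorm{\bmu - \bm{\mu}}^2 + 2 \inp{\bm{\xi}}{\widehat{\bm{\mu}} - \bmu} - \enorm{\widehat{\bm{\mu}} - \bmu}^2 ,
\]
so, writing $r \coloneqq \enorm{\widehat{\bm{\mu}} - \bmu}$, it suffices to prove $2 \inp{\bm{\xi}}{\widehat{\bm{\mu}} - \bmu} - r^2 \le (t+s)^2$. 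The one structural fact I would use is that $\norm[p]{\cdot} \le \enorm{\cdot}$ for every $p \in [2,\infty]$, so any point within Euclidean distance $t$ of $\bmu$ is feasible for the supremum defining $Z(\bmu,t)$; this is exactly why the statement can be phrased with a $p$-norm localization rather than only the Euclidean one.

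First I would handle the trivial case $r \le t$: then $\widehat{\bm{\mu}}$ itself is feasible in $Z(\bmu,t)$, so $\inp{\bm{\xi}}{\widehat{\bm{\mu}} - \bmu} \le Z(\bmu,t) \le t^2/2 + ts$ by assumption, hence $2\inp{\bm{\xi}}{\widehat{\bm{\mu}} - \bmu} - r^2 \le t^2 + 2ts \le (t+s)^2$. The substantive case is $r > t$, which I would handle by a convex rescaling: set $\bm{w} \coloneqq (1 - t/r)\bmu + (t/r)\widehat{\bm{\mu}}$, which lies in $K$ since $t/r \in (0,1)$ and $K$ is convex, and satisfies $\enorm{\bm{w} - \bmu} = t$, hence is feasible for $Z(\bmu,t)$. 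Because $\bm{w} - \bmu = (t/r)(\widehat{\bm{\mu}} - \bmu)$, linearity of the inner product gives $\inp{\bm{\xi}}{\widehat{\bm{\mu}} - \bmu} = (r/t)\inp{\bm{\xi}}{\bm{w} - \bmu} \le (r/t)(t^2/2 + ts) = rt/2 + rs$. Substituting, $2\inp{\bm{\xi}}{\widehat{\bm{\mu}} - \bmu} - r^2 \le -r^2 + r(t + 2s)$, whose maximum over $r > 0$ is $(t+2s)^2/4$, and $(t+2s)^2/4 \le (t+s)^2$ because this inequality is equivalent to $0 \le 3t^2 + 4ts$. Plugging the resulting bound back into the displayed inequality finishes the proof.

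I do not expect a real obstacle here: the lemma is essentially a restatement of \citet[Theorem 2.3]{bellec2018sharp} stripped of its Gaussian specialization, so the noise vector $\bm{\xi}$ enters only through the hypothesis $Z(\bmu,t) \le t^2/2 + ts$ and never through any distributional property. The only two places requiring attention are (i) checking that the $p$-norm localization with $p \ge 2$ still renders the points $\widehat{\bm{\mu}}$ (in the case $r \le t$) and $\bm{w}$ (in the case $r > t$) feasible for $Z(\bmu,t)$, which is precisely where $\norm[p]{\cdot} \le \enorm{\cdot}$ is invoked, and (ii) verifying that the elementary quadratic bookkeeping in the rescaled case closes at exactly $(t+s)^2$ rather than a worse constant, i.e.\ the small computation $(t+2s)^2 \le 4(t+s)^2$.
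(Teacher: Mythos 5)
Your proof is correct and follows essentially the same route as the paper's: the strengthened basic inequality from the convex projection, a case split on whether $\widehat{\bm{\mu}}$ is close to $\bmu$, a convex-combination rescaling in the far case, and elementary quadratic algebra yielding the bound $(t+s)^2$. The only (cosmetic) difference is that you split the cases and rescale by the Euclidean norm $\enorm{\widehat{\bm{\mu}} - \bmu}$, invoking $\norm[p]{\cdot} \leq \enorm{\cdot}$ to verify feasibility of $\widehat{\bm{\mu}}$ and $\bm{w}$, whereas the paper splits and rescales by $\norm[p]{\widehat{\bm{\mu}} - \bmu}$ directly and instead uses $\norm[p]{\cdot} \leq \enorm{\cdot}$ to lower-bound $\enorm{\bmv - \bmu}$ by $t$; both routes produce the identical bound $(t/2+s)^2 = (t+2s)^2/4$ in the far case.
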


\begin{proof}
For ease of notation, let 
\begin{equation}
K_p(\bmu, t) := \set{\bmv \in K \mid \norm[p]{\bmv - \bmu} \leq t}
\text{.}
\end{equation}
Note that $Z(\bmu, t) = \sup_{\bmv \in K_p(\bmu, t)} \inp{\bm{\xi}}{\bmv - \bmu}$.
We break our analysis into two cases. 
\begin{enumerate}
\item 
If $\norm[p]{\widehat{\bm{\mu}} - \bmu} \leq t$, then $\widehat{\bm{\mu}} \in K_p(\bmu, t)$. Therefore
the basic inequality (\ref{eq:basic-inequality-convex}) implies
\begin{equation}
\enorm{\widehat{\bm{\mu}} - \bm{\mu}}^2 - \enorm{\bmu - \bm{\mu}}^2
\leq 
2 \inp{\bm{\xi}}{\widehat{\bm{\mu}} - \bmu} - \enorm{\widehat{\bm{\mu}} - \bmu}^2
\leq 
2 Z(\bmu, t) - \enorm{\widehat{\bm{\mu}} - \bmu}^2
\leq 
2 Z(\bmu, t)
\text{.}
\end{equation}
Now use the assumption (\ref{eq:sharp-oracle-inequality-sub-gamma-assumption}) to obtain
\begin{equation}
\enorm{\widehat{\bm{\mu}} - \bm{\mu}}^2 - \enorm{\bmu - \bm{\mu}}^2
\leq 
t^2 + 2t s
\leq
\paren{t + s%
}^2
\text{.}
\end{equation}

\item 
Next, suppose $\norm[p]{\widehat{\bm{\mu}} - \bmu} > t$. Letting 
$\alpha := t / \norm[p]{\widehat{\bm{\mu}} - \bmu}$, we have
$\alpha \in (0, 1)$. Now take $\bmv := \bmu + \alpha (\widehat{\bm{\mu}} - \bmu)$.
Then, the convexity of $K$ implies $\bmv \in K$, and clearly, we have $\norm[p]{\bmv - \bmu} = t$, so $\bmv$ is a member of $K_p(\bmu, t)$. Therefore, we can plug $\widehat{\bm{\mu}} - \bmu = \alpha^{-1} (\bmv - \bmu)$ into the basic inequality (\ref{eq:basic-inequality-convex}) to obtain
\begin{align}
\enorm{\widehat{\bm{\mu}} - \bm{\mu}}^2 - \enorm{\bmu - \bm{\mu}}^2
&\leq 
2 \inp{\bm{\xi}}{\widehat{\bm{\mu}} - \bmu} - \enorm{\widehat{\bm{\mu}} - \bmu}^2
\\&=
\frac{2}{\alpha} \inp{\bm{\xi}}{\bmv - \bmu} - \frac{\enorm{\bmv - \bmu}^2}{\alpha^2}
\\&\leq
\frac{2}{\alpha} Z(\bmu, t) - \frac{t^2}{\alpha^2}
\label{eq:sharp-oracle-inequality-general-1}
\\&=
2 \frac{Z(\bmu, t)}{t} \frac{t}{\alpha} - \paren{\frac{t}{\alpha}}^2
\\&\leq
\paren{\frac{Z(\bmu, t)}{t}}^2
\text{,}
\label{eq:sharp-oracle-inequality-general-2}
\end{align}
where we used $\enorm{\bmv - \bmu}^2 \geq \norm[p]{\bmv - \bmu}^2 = t^2$ in 
(\ref{eq:sharp-oracle-inequality-general-1}) and 
$2ab - b^2 \leq a^2$ in (\ref{eq:sharp-oracle-inequality-general-2}). 
Now, the the assumption (\ref{eq:sharp-oracle-inequality-sub-gamma-assumption}) readily implies
\begin{equation}
\enorm{\widehat{\bm{\mu}} - \bm{\mu}}^2 - \enorm{\bmu - \bm{\mu}}^2
\leq 
\paren{\frac{t}{2} + s
}^2
\leq 
\paren{t + s
}^2
\end{equation}
\end{enumerate}
Therefore the claim is true for both cases.
\end{proof}

\subsubsection{Risk bound for binary isotonic regression}
\label{sec:risk-binary-isotonic}

In the sequel, we apply the general results stated in the previous sections to investigate the \textit{binary isotonic regression} problem. 
In binary regression,
we are given $n$ binary observations $y_i \in \set{0, 1}$, each of which is drawn independently from the Bernoulli distribution with mean $\mu_i \in [0, 1]$. 
The noise distribution can be described as 
\begin{equation}
\P{\xi_i = 1 - \mu_i} = \mu_i, \quad 
\P{\xi_i = - \mu_i} = 1 - \mu_i
\text{.}
\end{equation}
Many calibration methods for probabilistic classification, including calibration by isotonic regression \citep{zadrozny2002transforming},  can be seen as an instance of binary regression problems.
Some authors refer to this setup as the Bernoulli model \citep{yang2019contraction}.

To the best of our knowledge, there is no previous work that investigated risk bounds in binary isotonic regression. 
Here, we derive a new risk bound for this setting.
Recall 
the definitions of the isotonic cone $\cM_n$ and its truncation $\cM_n(a, b)$ (see \eqref{eq:monotonic-cone} and \eqref{eq:monotonic-code-truncated}):
\begin{align}
\cM_n &= \set{\bmu \in \R^n \mid u_1 \leq \ldots \leq u_n}
\text{,}
\\
\cM_n(a, b) &= 
\set{\bmu \in \R^n \mid a \leq u_1 \leq \ldots \leq u_n \leq b}
\quad (-\infty < a < b < \infty)
\text{.}
\end{align} 
From the min-max formula \eqref{eq:min-max-formula}, one can observe that 
the unbounded set $\cM_n$ can be replaced with the bounded closed convex set $\cM_n(0, 1)$ in binary isotonic regression.
In other words, the least squares estimator for the binary isotonic regression problem can be written as
\begin{equation}
\widehat{\bmmu} = 
\argmin_{\bmu \in \cM_n(0, 1)} \enorm{\bmy - \bmu}^2
\text{.}
\end{equation}
It leads to 
the following result. 

\begin{proposition}
\label{thm:naive-risk-bound}

With probability at least $1 - e^{-x}$, we have
\begin{align}
\frac{1}{n} \enorm{\widehat{\bmmu} - \bmmu}^2 
&\leq 
\min_{\bmu \in \cM_n(0, 1)} 
\frac{1}{n} 
\enorm{\bmu - \bmmu}^2
+ 
\paren{
\paren{\frac{C}{n}}^{1/3}
+ 
\sqrt{\frac{2 x}{n}}
}^2
\text{,}
\end{align}
where $C$ is an absolute constant.

\end{proposition}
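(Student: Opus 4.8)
The plan is to deduce \Cref{thm:naive-risk-bound} from the deterministic oracle inequality \Cref{thm:sharp-oracle-inequality-general}, applied with $p = 2$, $K = \cM_n(0,1)$, and the \emph{deterministic} reference vector $\bmu^\star := \argmin_{\bmu \in \cM_n(0,1)} \enorm{\bmu - \bmmu}^2$, the Euclidean projection of the (fixed) Bernoulli mean vector $\bmmu$ onto $\cM_n(0,1)$. Since $\enorm{\bmu^\star - \bmmu}^2 = \min_{\bmu \in \cM_n(0,1)} \enorm{\bmu - \bmmu}^2$ and the binary isotonic LSE $\widehat{\bmmu}$ is precisely the projection of $\bmy = \bmmu + \bm{\xi}$ onto $K$, the whole argument reduces to producing radii $t, s > 0$ such that, with probability at least $1 - e^{-x}$, the localized supremum $Z(\bmu^\star, t) = \sup_{\bmv \in \cM_n(0,1),\ \enorm{\bmv - \bmu^\star} \le t} \inp{\bm{\xi}}{\bmv - \bmu^\star}$ satisfies $Z(\bmu^\star, t) \le t^2/2 + ts$. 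The lemma then yields $\enorm{\widehat{\bmmu} - \bmmu}^2 \le \enorm{\bmu^\star - \bmmu}^2 + (t + s)^2$, and dividing by $n$ gives the claimed bound as soon as $t = C^{1/3} n^{1/6}$ (so that $t/\sqrt{n} = (C/n)^{1/3}$) and $s = \sqrt{2x}$ (so that $s/\sqrt{n} = \sqrt{2x/n}$).

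To obtain such $t$ and $s$, I would first bound $\E{Z(\bmu^\star, t)}$. The noise $\bm{\xi} = \bmy - \bmmu$ has independent, mean-zero coordinates, each supported in an interval of length $1$ and hence sub-Gaussian with variance proxy $1/4$; therefore $\inp{\bm{\xi}}{\bmv_1 - \bmv_2}$ is sub-Gaussian with variance proxy $\frac{1}{4}\enorm{\bmv_1 - \bmv_2}^2$. Feeding the metric-entropy bound of \Cref{thm:metric-entropy-sequences-chatterjee2014new} (with $a = 0$, $b = 1$, $p = 2$, i.e.\ $\log \cN(\cM_n(0,1), \enorm{\cdot}, \epsilon) \le C_2 \sqrt{n}/\epsilon$) into Dudley's entropy integral, and using that the feasible set has $\enorm{\cdot}$-diameter at most $2t$, one gets
\begin{equation*}
\E{Z(\bmu^\star, t)} \;\le\; c_1 \int_0^{2t} \sqrt{\log \cN\paren{\cM_n(0,1), \enorm{\cdot}, \epsilon}}\, d\epsilon \;\le\; c_1 \int_0^{2t} \sqrt{\frac{C_2 \sqrt{n}}{\epsilon}}\, d\epsilon \;=\; c_2\, n^{1/4} \sqrt{t}
\end{equation*}
for absolute constants $c_1, c_2$. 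Localization is the essential point: integrating over all of $\cM_n(0,1)$ would give a bound of order $\sqrt{n}$, which only delivers the slow $n^{-1/2}$ rate, whereas the localized quantity $n^{1/4}\sqrt{t}$ is dominated by $t^2/2$ once $t$ is a suitable constant multiple of $n^{1/6}$. Choosing $t = C^{1/3} n^{1/6}$ with $C$ a large enough absolute constant (depending only on $c_2$) therefore guarantees $\E{Z(\bmu^\star, t)} \le t^2/2$.

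It remains to turn this into a high-probability statement. I would view $Z(\bmu^\star, t)$ as a function of the independent observations $y_1, \dots, y_n \in [0,1]$: being a supremum of affine functions of $\bmy$ over a set of $\enorm{\cdot}$-radius at most $t$ around $\bmu^\star$, it is convex and $t$-Lipschitz in $\bmy$. Talagrand's concentration inequality for convex Lipschitz functions of bounded independent random variables then gives sub-Gaussian upper tails with variance proxy at most $t^2$, so that $\P{Z(\bmu^\star, t) \ge \E{Z(\bmu^\star, t)} + u} \le \exp\paren{-u^2/(2t^2)}$; taking $u = ts$ with $s = \sqrt{2x}$ makes the right-hand side equal to $e^{-x}$. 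On the complementary event, $Z(\bmu^\star, t) \le \E{Z(\bmu^\star, t)} + ts \le t^2/2 + ts$, so \Cref{thm:sharp-oracle-inequality-general} applies; dividing its conclusion by $n$ and substituting $t/\sqrt{n} = (C/n)^{1/3}$ and $s/\sqrt{n} = \sqrt{2x/n}$ yields exactly
\begin{equation*}
\frac{1}{n}\enorm{\widehat{\bmmu} - \bmmu}^2 \;\le\; \min_{\bmu \in \cM_n(0,1)} \frac{1}{n}\enorm{\bmu - \bmmu}^2 + \paren{ \paren{\frac{C}{n}}^{1/3} + \sqrt{\frac{2x}{n}} }^2 .
\end{equation*}
When $n < C$ this is vacuous, since the right-hand side exceeds $1 \ge \frac{1}{n}\enorm{\widehat{\bmmu} - \bmmu}^2$ (both vectors lie in $[0,1]^n$); when $n \ge C$ the radius $t = C^{1/3}n^{1/6}$ stays within the $\enorm{\cdot}$-diameter $\sqrt{n}$ of $\cM_n(0,1)$, so the Dudley bound above is valid as stated.

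I expect the concentration step to be the main obstacle. For Bernoulli noise one cannot invoke Gaussian concentration as in \citet{bellec2018sharp}, and a naive bounded-differences argument fails, since altering a single coordinate of $\bmy$ can move $Z(\bmu^\star, t)$ by as much as $t$, which is far too large to sum over $n$ coordinates; it is exactly the convexity of $\bmy \mapsto Z(\bmu^\star, t)$ that rescues the argument. Alternatively, the same high-probability bound could be obtained in one shot via a generic-chaining tail inequality for sub-Gaussian processes, fed by the same metric-entropy estimate. The remaining ingredients — Dudley's integral, \Cref{thm:metric-entropy-sequences-chatterjee2014new}, and matching the exponents — are routine once the localization already built into \Cref{thm:sharp-oracle-inequality-general} is exploited.
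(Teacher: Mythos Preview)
Your proposal is correct and follows essentially the same route as the paper's proof: bound $\E{Z(\bmu,t)}$ via Dudley's integral fed by \Cref{thm:metric-entropy-sequences-chatterjee2014new} to get the order $n^{1/4}\sqrt{t}$, choose $t$ of order $n^{1/6}$ so this is at most $t^2/2$, then upgrade to high probability via concentration for convex Lipschitz functions of bounded independent variables (the paper cites Theorem~6.10 of \citet{boucheron2013concentration}), and finally invoke \Cref{thm:sharp-oracle-inequality-general}. Your choice to fix the deterministic minimizer $\bmu^\star$ upfront is slightly cleaner than the paper's presentation, which leaves $\bmu$ generic and takes the minimum at the end, but the two arguments are otherwise the same.
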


\begin{remark}~
\begin{enumerate}
    \item 
Thanks to the replacement of the unbounded set $\cM_n$ with the bounded set $\cM_n(0, 1)$, 
we do not have to go through the additional ``pealing'' step used in \citet{chatterjee2014new} and  \citet{chatterjee2019adaptive}. 
\item 
    This result is valid even under model misspecification.
\end{enumerate}
\end{remark}

\begin{proof}
For any $\bmu \in \cM_n(0, 1)$ and $t > 0$, let 
\begin{align}
K(\bmu, t) &:= \set{
\bmv \in \cM_n(0, 1) \mid 
\enorm{\bmv - \bmu} \leq t
}
\text{,} \\
Z(\bmu, t)
&:= 
\sup_{\bmv \in K(\bmu, t)} \inp{\bmxi}{\bmv - \bmu}
\text{.}
\end{align}
We first control the expectation $\E{Z(\bmu, t)}$. To this end, observe that  
the process $(X_{\bmv})_{\bmv \in K(\bmu, t)}$, where $X_{\bmv} := \inp{\bmxi}{\bmv - \bmu}$, is a 
sub-Gaussian process, i.e., for any $\bmv_1, \bmv_2 \in T$, we have 
$\E{X_{\bmv_1}} = 0$ and 
\begin{align}
\log \E{e^{\lambda (X_{\bmv_2} - X_{\bmv_1})}}
\leq 
\frac{\lambda^2 \enorm{\bmv_2 - \bmv_1}^2}{8} 
\text{,} \quad \forall \lambda \geq 0
\text{.}
\end{align}
Now combining Dudley's chaining technique \citep[see e.g.][Corollary 5.25]{van2016apc} 
and the metric entropy bound in \Cref{thm:metric-entropy-sequences-chatterjee2014new}
gives
\begin{align}
\E{Z(\bmu, t)}
&\leq
6 \int_0^t \sqrt{\log \cN(K(\bmu, t), \enorm{\cdot}, \epsilon)} \, d\epsilon
\nonumber
\\&\leq
6 \int_0^t \sqrt{
\frac{C_2 \sqrt{n}}{\epsilon}
} \, d\epsilon
\nonumber
\\&=
12 C_2^{1/2} n^{1/4} t^{1/2}
\text{,}
\label{eq:naive-expectation-bound}
\end{align}
where $C_2$ is the constant appearing in (\ref{eq:bracketing}). 

Moreover, it is straightforward to see that, for each fixed $\bmu$ and $t$, $Z(\bmu, t)$ is a convex $t$-Lipschitz function of $\bmxi$.
Therefore, by using Theorem 6.10 in \citet{boucheron2013concentration} together with (\ref{eq:naive-expectation-bound}),
with probability greater than $1 - e^{-x}$, we have
\begin{align}
Z(\bmu, t) 
\leq 
\E{Z(\bmu, t)} + t \sqrt{2x}
\leq 
12 C_2^{1/2} n^{1/4} t^{1/2} + t \sqrt{2x}
\text{.}
\end{align}
Now, define
$t^* := 4 (9 C_2)^{1/3} n^{1/6}$
and observe that we have 
$12 C_2^{1/2} n^{1/4} t^{1/2} \leq \frac{t^2}{2}$ 
for any $t \geq t^*$. 
Therefore, \Cref{thm:sharp-oracle-inequality-general} yields
\begin{equation}
\enorm{\widehat{\bmmu} - \bmmu}^2 
\leq 
\enorm{\bmu - \bmmu}^2
+ 
\paren{t^* + \sqrt{2x}}^2
\leq 
\paren{\enorm{\bmu - \bmmu} + t^* + \sqrt{2x}}^2
\end{equation}
with probability at least $1 - e^{-x}$.
we obtain 
the result
by dividing both sides by $n$ and taking the minimum over all $\bmu \in \cM_n(0, 1)$. 
\end{proof}

\subsection{Proof of \Cref{thm:isotonic-bayes-error-bound}}
\label{sec:proof-of-isotonic-bayes-error-bound}

We are just one lemma away from proving our main theorem. 
The following lemma states that
the error between the two estimates with different sets of soft labels can be upper bounded by the root-mean-square error between them.

\begin{lemma}[restate=binlip, name=]
\label{lem:bin-lip}
For any set of soft labels $\set{\eta'_i}_{i=1}^n \in [0, 1]^n$, it holds that

\begin{equation}
   \abs{\besterrhat(\eta'_{1:n}) -\besterrhat(\eta_{1:n})}
\leq 
\dfrac{1}{n} \sum_{i=1}^n \abs{\eta'_i - \eta_i}
\leq
\sqrt{\dfrac{1}{n} \sum_{i=1}^n \paren{\eta'_i - \eta_i}^2}
\text{.}
\end{equation}

\end{lemma}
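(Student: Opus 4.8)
The plan is to prove the two inequalities separately, from the right and from the left. The right-hand inequality, $\frac1n\sum_i |\eta'_i - \eta_i| \le \sqrt{\frac1n\sum_i (\eta'_i-\eta_i)^2}$, is nothing but the $\ell^1$–$\ell^2$ comparison (equivalently, the Cauchy–Schwarz or power-mean inequality): writing $a_i := |\eta'_i - \eta_i| \ge 0$, we have $\bigl(\frac1n\sum_i a_i\bigr)^2 \le \frac1n\sum_i a_i^2$ by Jensen applied to the convex function $t \mapsto t^2$, or directly by Cauchy–Schwarz on the vectors $(a_1,\dots,a_n)$ and $(1,\dots,1)$. Taking square roots gives the claim. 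This step is routine and I would dispatch it in one line.

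For the left-hand inequality, the key observation is that the map $q \mapsto \phi_\err(q) = \min\{q, 1-q\}$ is $1$-Lipschitz on $[0,1]$: it is the minimum of two $1$-Lipschitz affine functions, hence $|\phi_\err(q) - \phi_\err(q')| \le |q - q'|$ for all $q,q' \in [0,1]$ (this is exactly the fact already invoked in the proof of \Cref{lem:jensen-gap-err}). Now expand the definition of the estimator: $\besterrhat(\eta'_{1:n}) - \besterrhat(\eta_{1:n}) = \frac1n \sum_{i=1}^n \bigl(\phi_\err(\eta'_i) - \phi_\err(\eta_i)\bigr)$. Applying the triangle inequality and then the Lipschitz bound termwise yields
\[
\Bigl|\besterrhat(\eta'_{1:n}) - \besterrhat(\eta_{1:n})\Bigr|
\le \frac1n \sum_{i=1}^n \bigl|\phi_\err(\eta'_i) - \phi_\err(\eta_i)\bigr|
\le \frac1n \sum_{i=1}^n |\eta'_i - \eta_i|,
\]
which is precisely the left-hand inequality. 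Chaining the two displays gives the full statement.

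There is essentially no obstacle here: both steps are elementary, and the only thing one must be slightly careful about is that $\phi_\err$ really is globally $1$-Lipschitz on the whole interval $[0,1]$ (not merely locally), which follows because the $\min$ of functions with a common Lipschitz constant retains that constant. I would state this as a one-line sub-claim and then assemble the two bounds. No probabilistic content is needed — the lemma is purely deterministic in the soft-label vectors, which is why it can later be combined with the risk bound of \Cref{thm:naive-risk-bound} to control $|\besterrhat(\widehat\eta^{\iso}_{1:n}) - \besterr|$ in the proof of \Cref{thm:isotonic-bayes-error-bound}.
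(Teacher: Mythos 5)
Your proposal is correct and follows essentially the same route as the paper: the left inequality via the $1$-Lipschitzness of $\min\{q,1-q\}$ together with the triangle inequality, and the right inequality via Jensen (equivalently Cauchy--Schwarz) applied to $t \mapsto t^2$. Nothing to add.
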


\begin{proof}
Since $x \in [0, 1] \mapsto \min \set{x, 1 - x}$ is $1$-Lipschitz, we have
\begin{align}
\abs{\besterrhat(\eta'_{1:n}) - \besterrhat(\eta_{1:n})}
\leq &
\mean \abs{\min \set{\eta'_i, 1 - \eta'_i} - \min \set{\eta_i, 1 - \eta_i}}
\\ \leq & 
\mean \abs{\eta'_i - \eta_i}
\tdot
\end{align}
The second inequality follows from Jensen's inequality.

\end{proof}

We can finally prove \Cref{thm:isotonic-bayes-error-bound}.

\begin{proof}
By the triangle inequality, we have
\begin{equation}
\abs{\besterrhat(\widehat{\eta}_{1:n}^{\mathrm{iso}}) - \besterr}
\leq 
\abs{\besterrhat(\widehat{\eta}_{1:n}^{\mathrm{iso}}) - \besterrhat(\eta_{1:n})}
+
\abs{\besterrhat(\eta_{1:n}) - \besterr}
\text{.}
\end{equation}
Using \Cref{lem:bin-lip} for the first term and Proposition 3.2 of \citet{ishida2023is} for the second term, we have
\begin{equation}
\label{eq:iso-proof-1}
\abs{\besterrhat(\widehat{\eta}_{1:n}^{\iso}) - \besterr}
\leq 
\sqrt{\dfrac{1}{n} \sum_{i=1}^n \paren{\widehat{\eta}_{i}^{\iso} - \eta_i}^2}
+
\sqrt{\frac{\log (4/\delta)}{8n}}
\end{equation}
with probability at least $1 - \delta/2$.
Now, we evaluate the first term on the right-hand side by applying \Cref{thm:naive-risk-bound} for $\bmmu = (\eta_{(1)}, \dots, \eta_{(n)})$.
Conditioned on $\set{x_i}_{i=1}^n$, with probability at least $1 - \delta/2$, we have
\begin{equation}
\dfrac{1}{n} \sum_{i=1}^n \paren{\widehat{\eta}_{i}^{\iso} - \eta_i}^2
\leq 
\min_{\bmu \in \cM_n(0, 1)} 
\dfrac{1}{n} \sum_{i=1}^n \paren{u_i - \eta_{(i)}}^2
+ 
\paren{
\paren{\frac{C}{n}}^{1/3}
+ 
\sqrt{\frac{2 \log(2/\delta)}{n}}
}^2
\text{.}
\end{equation}

Since $f$ is increasing and 
$\tilde{\eta}_{(1)} = f(\eta_{(1)}) \leq \dots \leq \tilde{\eta}_{(n)} = f(\eta_{(n)})$, 
we have
$\eta_{(1)} \leq \dots \leq \eta_{(n)}$, i.e., $(\eta_{(1)}, \dots, \eta_{(n)}) \in \cM_n$.
Therefore, $\min_{\bmu \in \cM_n(0, 1)} 
\frac{1}{n} \sum_{i=1}^n \paren{u_i - \eta_{(i)}}^2$ is zero as we can choose $\bmu = (\eta_{(1)}, \dots, \eta_{(n)})$.
As a result, we have
\begin{equation}
\label{eq:iso-proof-2}
\sqrt{\dfrac{1}{n} \sum_{i=1}^n \paren{\widehat{\eta}_{i}^{\iso} - \eta_i}^2}
\leq
\paren{\frac{C}{n}}^{1/3}
+ 
\sqrt{\frac{2 \log(2/\delta)}{n}}
\text{.}
\end{equation}
Plugging \eqref{eq:iso-proof-2} into \eqref{eq:iso-proof-1} and
rewriting $C^{1/3}$ as $C$, we have
\begin{equation}
\abs{\besterrhat(\widehat{\eta}_{1:n}^{\mathrm{iso}}) - \besterr} \leq
\frac{C}{n^{1/3}} + 2 \sqrt{\frac{2 \log (2/\delta)}{n}}
\end{equation}
with probability at least $1 - \delta$.
\end{proof}

We can also prove an extension to the case
where random noise is added to the corrupted soft labels.

\begin{proof}[Proof of \Cref{thm:noisy-iso}]
    By the same argument as in the main theorem proof, we have
    \begin{equation}
        \abs{
            \besterrhat(\widehat{\eta}_{1:n}^{\mathrm{iso}}) 
            - \besterr
        }
        \leq 
        \sqrt{
            \frac{1}{n} \sum_{i=1}^{n} 
            \paren{\widehat{\eta}_{i}^{\iso} - \eta_i}^2
        }
        +
        \sqrt{ \frac{\log(6/\delta)}{8n} }
    \end{equation}
    with probability at least $1 - \delta/3$.
    Also, conditioned on $\set{x_i}_{i=1}^n$, 
    with probability at least $1 - \delta/3$, we have
    \begin{equation}
    \dfrac{1}{n} \sum_{i=1}^n \paren{\widehat{\eta}_{i}^{\iso} - \eta_i}^2
    \leq 
    \min_{\bmu \in \cM_n(0, 1)} 
    \dfrac{1}{n} \sum_{i=1}^n \paren{u_i - \eta_{(i)}}^2
    + 
    \paren{
    \paren{\frac{C}{n}}^{1/3}
    + 
    \sqrt{\frac{2 \log(3/\delta)}{n}}
    }^2
    \text{.}
    \end{equation}

    Now, under the new assumption, $f$ has an inverse function 
    $f^{-1}$, which is also differentiable and increasing. 
    Therefore, the mean value theorem gives 
    \begin{equation}
        \eta_{(i)} 
        =
        f^{-1}(\tilde{\eta}_{(i)}) 
        - \frac{\epsilon_{(i)}}{f'(t_{(i)})}  
    \end{equation}
    for some $t_{(i)} \in (0, 1)$, which means 
    \begin{equation}
        \min_{\bm{u} \in \mathcal{M}_{n}(0, 1)} 
        \frac{1}{n} \sum_{i=1}^{n} (u_{i} - \eta_{(i)})^{2} 
        \leq 
        \frac{2}{n} 
        \min_{\bm{u} \in \mathcal{M}_{n}(0, 1)} 
        \sum_{i=1}^{n}
            (u_{i} - f^{-1}(\tilde{\eta}_{(i)}))^{2} 
        + 
        \frac{2}{n} 
        \sum_{i=1}^{n}
            \frac{\epsilon_{(i)}^{2}}{f'(t_{(i)})^{2}}
        \text{.}
    \end{equation}
    Since $f^{-1}$ is increasing and 
    $\tilde{\eta_{(1)}} \leq \dots \leq \tilde{\eta_{(n)}}$,
    the first term vanishes by choosing 
    $u_{i} = f^{-1}(\tilde{\eta}_{(i)})$. 
    By using the assumption that $f' \geq c$, we obtain
    \begin{equation}
        \min_{\bm{u} \in \mathcal{M}_{n}(0, 1)} 
        \frac{1}{n} 
        \sum_{i=1}^{n} (u_{i} - \eta_{(i)})^{2} 
        \leq 
        \frac{2}{c^{2} n} 
        \sum_{i=1}^{n} \epsilon_{(i)}^{2}
        \text{.}
    \end{equation}
    Since $\epsilon_{(i)}^2 \in [0, 1]$ 
    and $\E{\epsilon_{(i)}^{2}} \leq \sigma^2$
    for each $i$,
    the Hoeffding's inequality gives
    \begin{equation}
        \frac{1}{n} \sum_{i=1}^{n} \epsilon_{(i)}^{2} 
        \leq 
         \sigma^2
        + 
        \sqrt{\frac{\log(3/\delta)}{2n}}
    \end{equation}
    with probability at least $1 - \delta/3$.

    By combining the above bounds, there exists 
    a constant $C' > 0$ such that
    \begin{equation}
        \abs{
            \besterrhat(\widehat{\eta}_{1:n}^{\mathrm{iso}}) 
            - \besterr
        }
        \leq
        C' \left(
            \sigma
            + \frac{1}{n^{1/3}} 
            + \sqrt{ \frac{\log(1/\delta)}{n} } 
        \right)
    \end{equation}
    holds with probability at least $1 - \delta$.
\end{proof}

\section{Experimental details}
\label{sec:app-for-section4}

We utilized the scikit-learn library \citep{scikit-learn}, version 1.6.1,  for isotonic regression.
We used the implementation of the histogram binning algorithm provided by the \href{https://pypi.org/project/uncertainty-calibration/}{\texttt{uncertainty-calibration}} package (version 0.1.4; \citealp{kumar2019verified}). 
We employed the beta-calibration implementation provided in the \href{https://pypi.org/project/betacal/}{\texttt{betacal}} package (version 1.1.0; \citealp{kull2017beta}).
We used the 
\href{https://docs.scipy.org/doc/scipy/reference/generated/scipy.stats.bootstrap.html}{\texttt{bootstrap}} function from the SciPy library \citep{2020SciPy-NMeth}, version 1.15.3, to obtain 95\% bootstrap confidence intervals. 
For each estimation method, the experiment took around 20--30 minutes on a CPU.

For the sake of comparison in \Cref{fig:estimation_fashion_mnist_error-rate}, 
we trained a ResNet-18 \citep{he2016deep} on Fashion-MNIST for 100 epochs with a batch size of 128 using the Adam optimizer with a learning rate of 0.001. It took less than an hour.

Our experiments do not require any special computer resouces. All of them were conducted on the CPU of a single Apple MacBook Pro (M1 chip, 16GB RAM) except for the ResNet training 
where we used a T4 GPU on \href{https://colab.research.google.com/}{Google Colab}.

\subsection{Corruption parameters}
\label{sec:corruption-parameters}

In experiments with synthetic mixture-of-gaussians data, we used the following corruption function:
\begin{equation}
\label{eq:beta-corruption}
f(p; a, b) = \paren{1 + \paren{\frac{1 - p}{p}}^{1/a} \frac{1 - b}{b}}^{-1}
\text{,}
\ 
0 < p < 1, \ a > 0, \ 0 < b < 1 
\text{.}
\end{equation}

\Cref{fig:corruption-map-for-various-a-b} shows the graph of $f(p; a, b)$ for various values of the parameters $a$ and $b$.
As you can see, 
the parameter $a$ makes the soft labels over-confident when $a < 1$, leading to an underestimation of the Bayes error, and under-confident when $a > 1$, causing an overestimation.
On the other hand, setting $b$ to values other than $0.5$ results in asymmetric, skewed corruption.

We conducted the same experiment as in \Cref{fig:estimation_gauss_error-rate} for various values of $a$ and $b$. The results are shown in \Cref{fig:result-synthetic-various-a-b}.
Our calibration-based estimators consistently succeed in preventing over- or underestimation of the Bayes error across all sets of parameter values.
 
\begin{figure}
    \centering
  \begin{subfigure}[b]{0.48\textwidth}
    \centering
    \includegraphics[width=\linewidth]{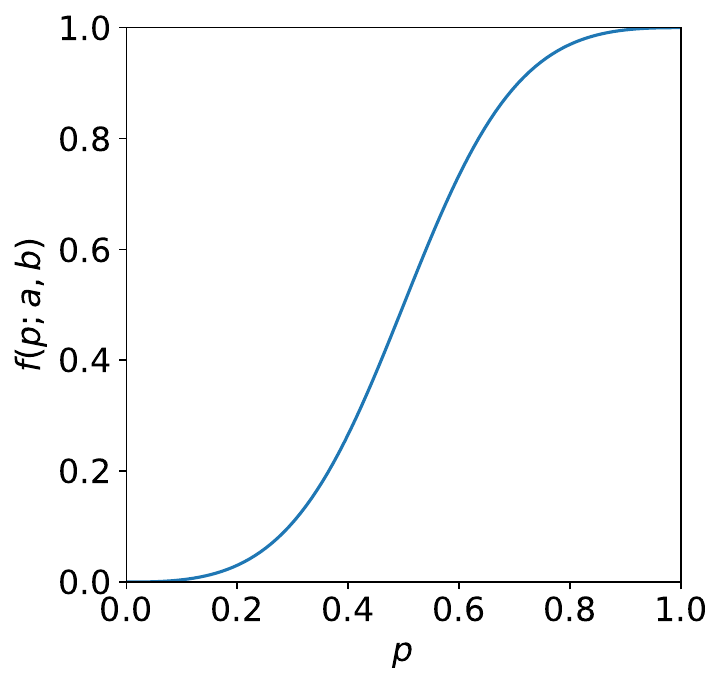}
    \caption{$a = 0.4, \ b = 0.5$}
  \end{subfigure}
  \hfill
  \begin{subfigure}[b]{0.48\textwidth}
    \centering
    \includegraphics[width=\linewidth]{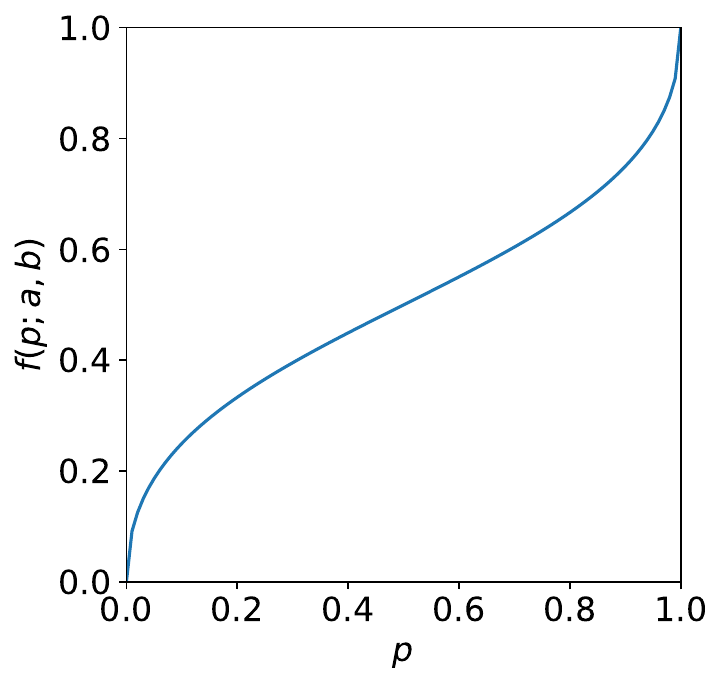}
    \caption{$a = 2, \ b = 0.5$}
  \end{subfigure}
  \\[\medskipamount]
\begin{subfigure}[b]{0.48\textwidth}
    \centering
    \includegraphics[width=\linewidth]{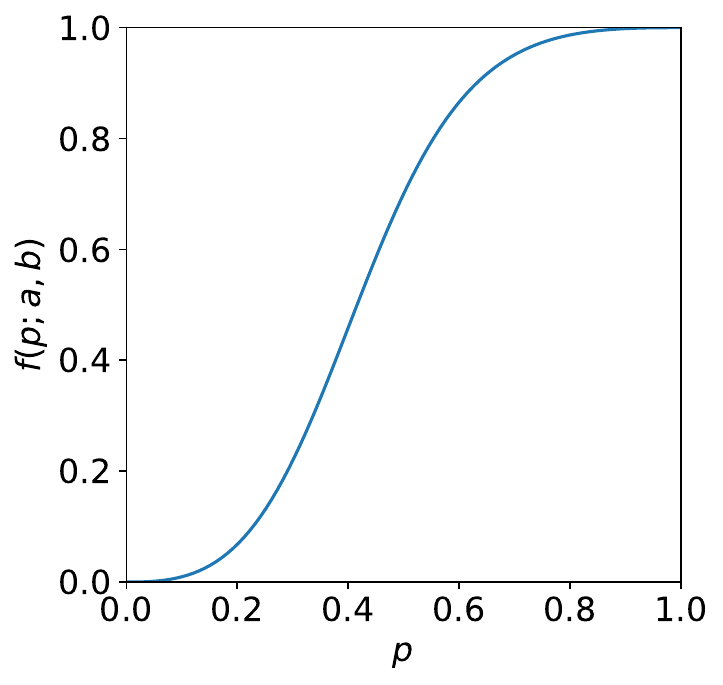}
  \caption{$a = 0.4, \ b = 0.7$}
\end{subfigure}
\hfill
  \begin{subfigure}[b]{0.48\textwidth}
    \centering
    \includegraphics[width=\linewidth]{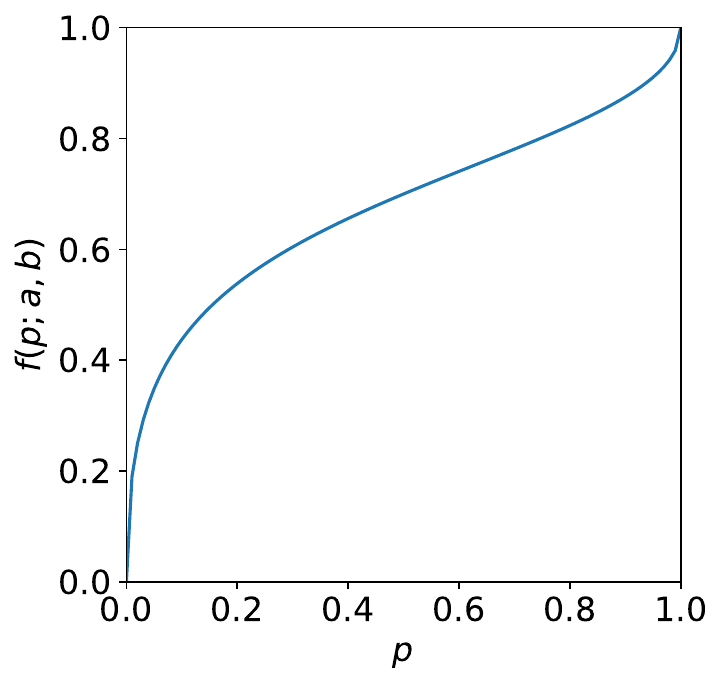}
    \caption{$a = 2, \ b = 0.7$}
  \end{subfigure}
  \\[\medskipamount]
\begin{subfigure}[b]{0.48\textwidth}
    \centering
    \includegraphics[width=\linewidth]{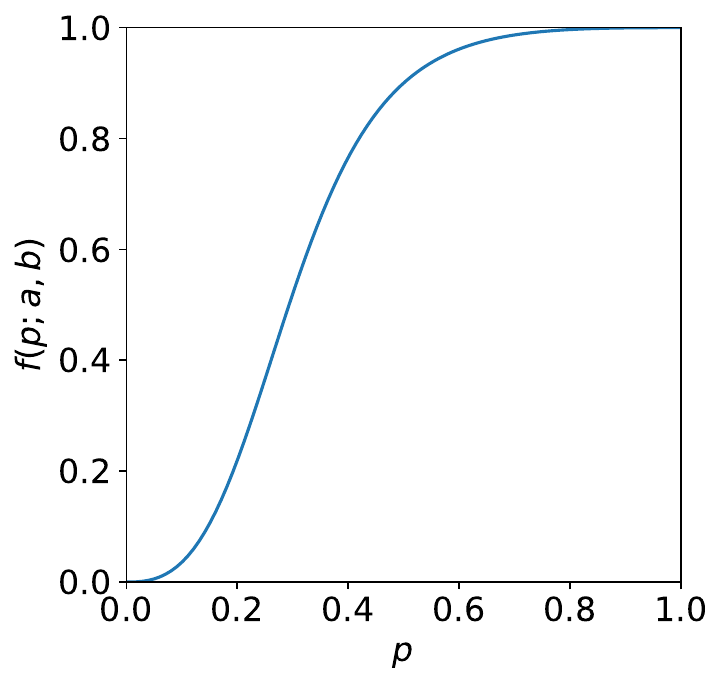}
  \caption{$a = 0.4, \ b = 0.9$}
\end{subfigure}
\hfill
  \begin{subfigure}[b]{0.48\textwidth}
    \centering
    \includegraphics[width=\linewidth]{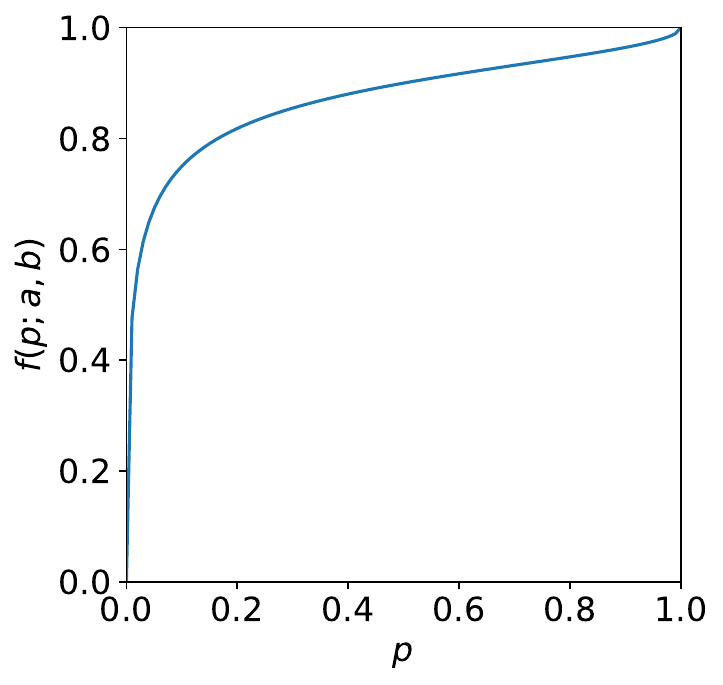}
    \caption{$a = 2, \ b = 0.9$}
  \end{subfigure}
  \caption{The corruption function $f(p; a, b)$ for various parameters $a$ and $b$.}
  \label{fig:corruption-map-for-various-a-b}
\end{figure}

\begin{figure}
    \centering
  \begin{subfigure}[b]{0.48\textwidth}
    \centering
    \includegraphics[width=\linewidth]{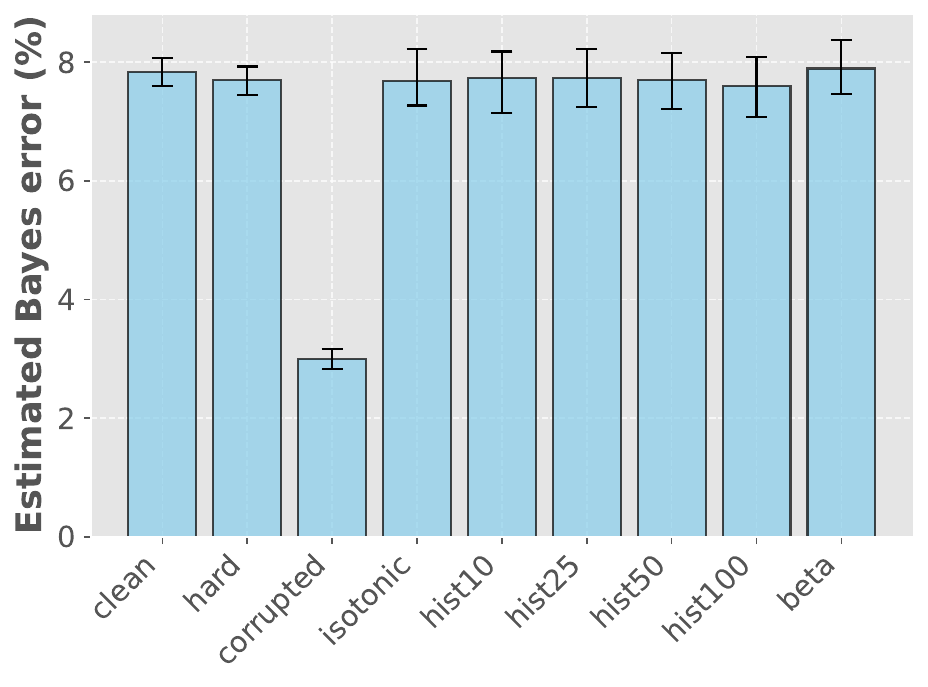}
    \caption{$a = 0.4, \ b = 0.5$}
  \end{subfigure}
  \hfill
  \begin{subfigure}[b]{0.48\textwidth}
    \centering
    \includegraphics[width=\linewidth]{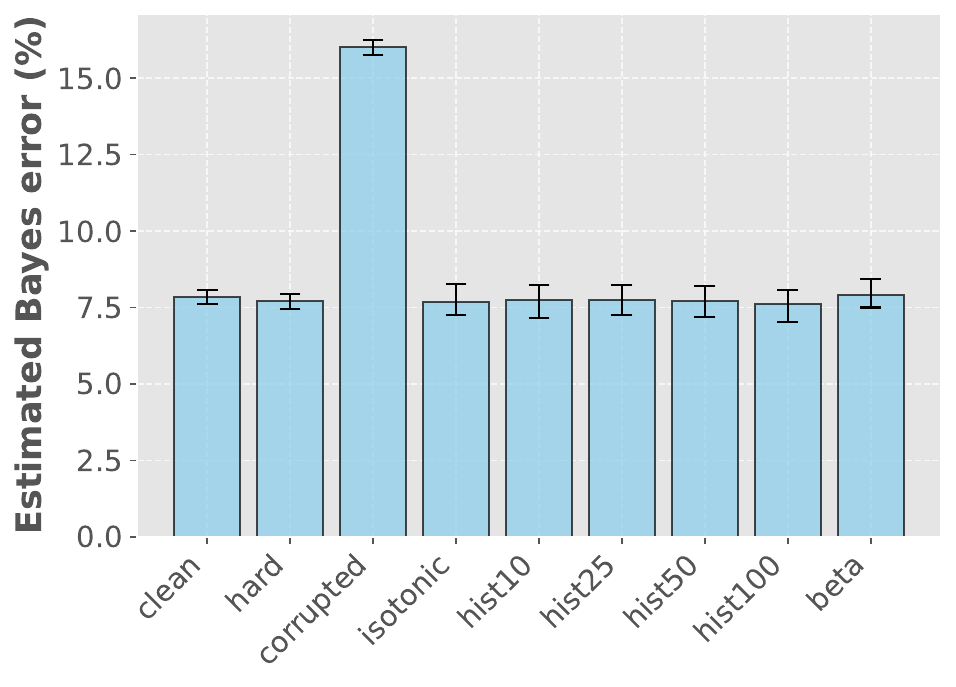}
    \caption{$a = 2, \ b = 0.5$}
  \end{subfigure}
  \\[\medskipamount]
\begin{subfigure}[b]{0.48\textwidth}
    \centering
    \includegraphics[width=\linewidth]{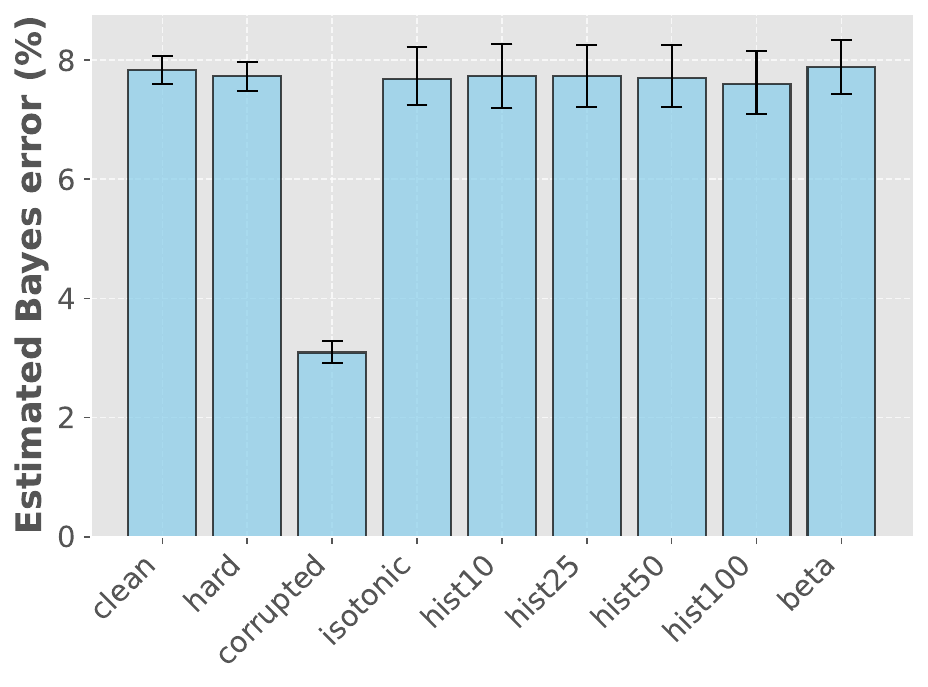}
  \caption{$a = 0.4, \ b = 0.7$}
\end{subfigure}
\hfill
  \begin{subfigure}[b]{0.48\textwidth}
    \centering
    \includegraphics[width=\linewidth]{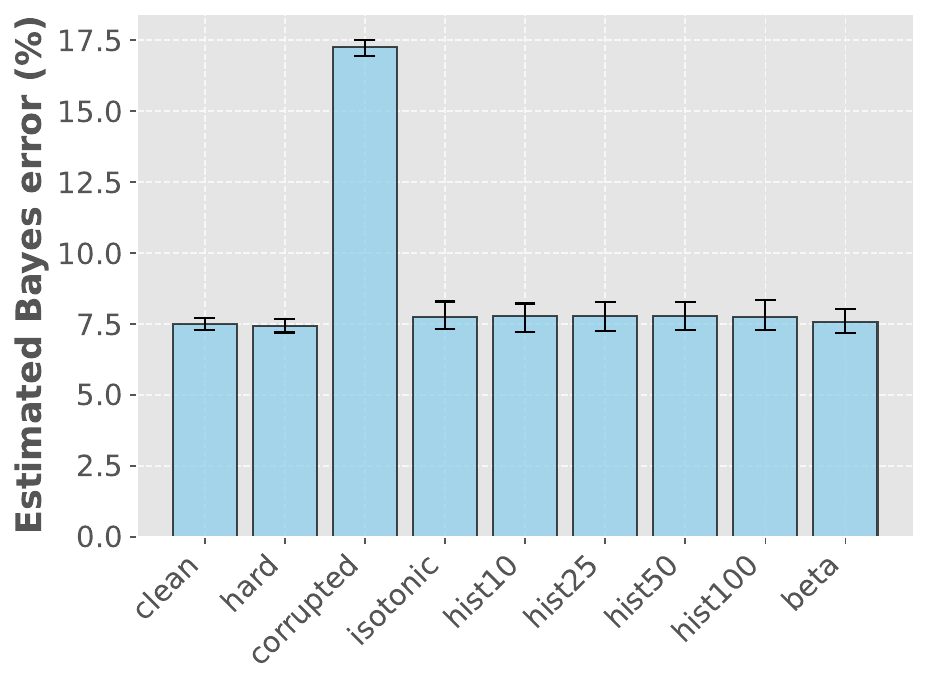}
    \caption{$a = 2, \ b = 0.7$}
  \end{subfigure}
\\[\medskipamount]
\begin{subfigure}[b]{0.48\textwidth}
    \centering
    \includegraphics[width=\linewidth]{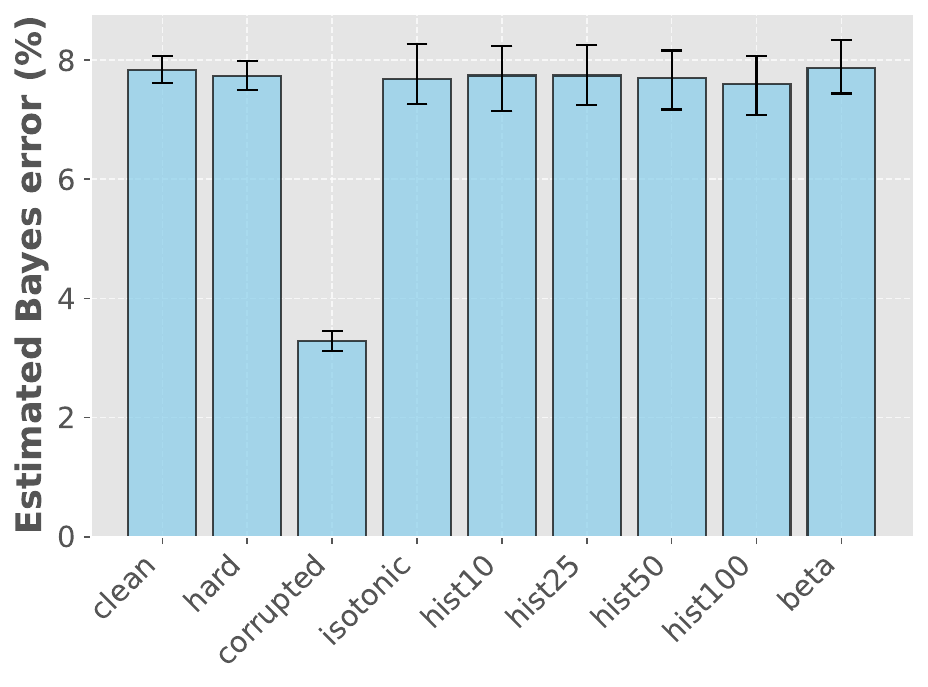}
  \caption{$a = 0.4, \ b = 0.9$}
\end{subfigure}
\hfill
  \begin{subfigure}[b]{0.48\textwidth}
    \centering
    \includegraphics[width=\linewidth]{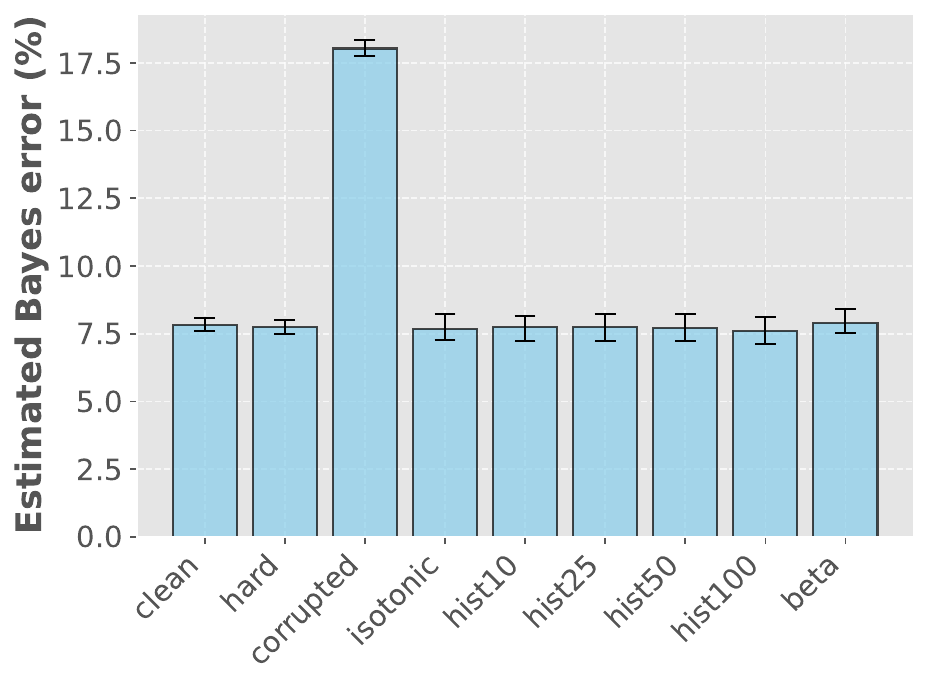}
    \caption{$a = 2, \ b = 0.9$}
  \end{subfigure}
  \caption{The estimated Bayes error 
  for the synthetic dataset with various corruption parameters $a$ and $b$.}
  \label{fig:result-synthetic-various-a-b}
\end{figure}

\newpage

\subsection{Violation of the assumption of \Cref{thm:noisy-iso}}
\label{sec:assumption-violation}

In \Cref{chap:method2}, we presented \Cref{thm:noisy-iso},
which provides a theoretical guarantee 
for our Bayes error estimator based on isotonic calibration
when the corruption is noisy.
This result assumes that 
the derivative of the function $f$ 
satisfies $f' \geq c$ for some \textit{strictly positive} constant $c$.
Theoretically, it is still unclear what happens when 
this assumption is violated, i.e.,
when $f'$ can be arbitrarily close to zero.
Here we empirically investigate the effects 
of such a violation
using synthetic data.

\subsubsection{Experimental settings}

We draw $n = 10000$ data points $\{x_i\}^n_{i=1}$ from a Gaussian mixture $\bbP_\cX = 0.6 \cdot \bbP_0 + 0.4 \cdot \bbP_1$, where
$\bbP_0 = N \paren{(0, 0), I_2}$,
$\bbP_1 = N \paren{(2, 2), I_2}$.\footnote{
This is the same distribution as we used in \Cref{subsec:exp_setting_estimation}.
}
For each data point $x_i$,
we generate its soft label $\tilde{\eta}_i$ by sampling $m = 3, 5, 10, 25, 50, 100$ hard labels 
from the corrupted posterior distribution
\textit{
$\mathrm{Bern}(f(\eta(x_i); a, b))$
}\footnote{
    $\mathrm{Bern}(p)$ is the Bernoulli distribution with mean $p$.
}
and taking their average.
In other words, we obtain the soft label for $x_i$
as a draw from $\mathrm{Binom}(m, f(\eta(x_i); a, b))$ divided by $m$.
by drawing from $\mathrm{Binom}(m, f(\eta(x_i); a, b))$ and dividing the result by $m$.
By changing the number $m$ of hard labels, 
we can create different noise levels $\sigma = O(\frac{1}{\sqrt{m}})$:
the smaller $m$ gets, the greater the noise level becomes.
We consider two sets of corruption parameters:
$(a, b) = (2, 0.5)$ and $(a, b) = (0.4, 0.5)$.
As we saw in \Cref{sec:corruption-parameters},
the former corresponds to under-confident soft labels,
and the latter corresponds to over-confident soft labels.
Note that the former satisfies the assumption of \Cref{thm:noisy-iso}
while the latter does not, i.e., the derivative $f'$ can be
arbitrarily small.
You can see this visually in \Cref{fig:corruption-map-for-various-a-b}.
Then, we estimate the Bayes error from these corrupted soft labels $\set{\tilde{\eta}_i}_{i=1}^n$
using the following methods (which we used in \Cref{subsec:exp_setting_estimation}):
\begin{enumerate}
\item \verb|corrupted|: the estimator with corrupted soft labels, i.e., $\besterrhat\paren{\tilde{\eta}_{1:n}}$.
\item The estimator with soft labels obtained by calibrating the corrupted soft labels. %
We use the following calibration algorithms:
isotonic calibration (\verb|isotonic|), 
uniform-mass histogram binning with $10, 25, 50$ and $100$ bins (\verb|hist10|, \verb|hist25|, \verb|hist50| and \verb|hist100|), and 
beta calibration (\verb|beta|).
\end{enumerate}
As in \Cref{chap:exp}, we use $1000$ bootstrap resamples to compute
a 95\% confidence interval for each method.

\subsubsection{Results}

\Cref{fig:violation} shows the estimated Bayes error for 
various numbers $m$ of hard labels per data point.
The black dashed lines indicate the Bayes error estimated 
with clean soft labels, which is expected to be 
a good approximation of the true Bayes error.
All the non-parametric calibration methods 
(\texttt{isotonic} and \texttt{hist*}) perform similarly.
However, parametric beta calibration (\texttt{beta}) performs poorly.

Particularly, when $(a, b) = (0.4, 0.5)$ and the assumption of 
\Cref{thm:noisy-iso} is violated,
the more we add hard labels per data point
the worse the estimation performance gets
even though the noise level decreases.
In the large-$m$ (i.e., small noise level) regime, 
beta calibration goes 
``too far'' and consistently overestimates the Bayes error.
On the other hand, isotonic calibration and histogram binning
perform consistently well, even for relatively small $m$.
When $(a, b) = (2, 0.5)$ and the assumption of 
\Cref{thm:noisy-iso} is satisfied,
the performance of all the methods tends to improve 
as the noise level decreases.
However, isotonic calibration and histogram binning still outperform 
beta calibration, especially for small $m$.

These results suggest that 
isotonic calibration and histogram binning 
are relatively robust to 
corruption functions with small derivative,
whereas beta calibration 
is not.
It is interesting that beta calibration performs so poorly
when it is a \textit{well-specified} parametric model in a sense,
i.e., the corruption function $f$ is an inverse function of 
the beta calibration map.
This fact suggests that choosing appropriate 
calibration methods, such as isotonic calibration,
is crucial in our algorithm design.

Another interesting thing is that 
\texttt{corrupted}'s performance becomes worse 
as the number $m$ of hard labels per data point increases 
when $(a, b) = (2, 0.5)$.
This would be because, as we sample more hard labels, 
the resulting soft labels are pulled towards the corrupted 
posterior mean, which makes them more biased.
This result highlights the need to calibrate soft labels.

\begin{figure}
    \centering
  \begin{subfigure}[b]{0.8\textwidth}
    \centering
    \includegraphics[width=\linewidth]{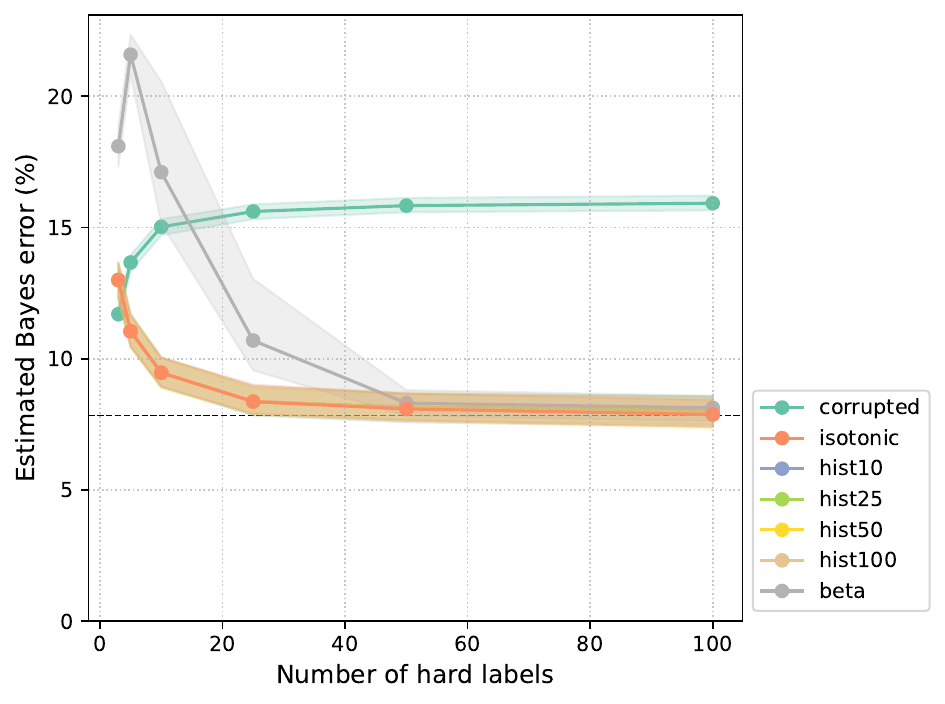}
    \caption{$a = 2, \ b = 0.5$}
    \label{fig:violation_2.0_0.5}
  \end{subfigure}
  \\[\medskipamount]
  \begin{subfigure}[b]{0.8\textwidth}
    \centering
    \includegraphics[width=\linewidth]{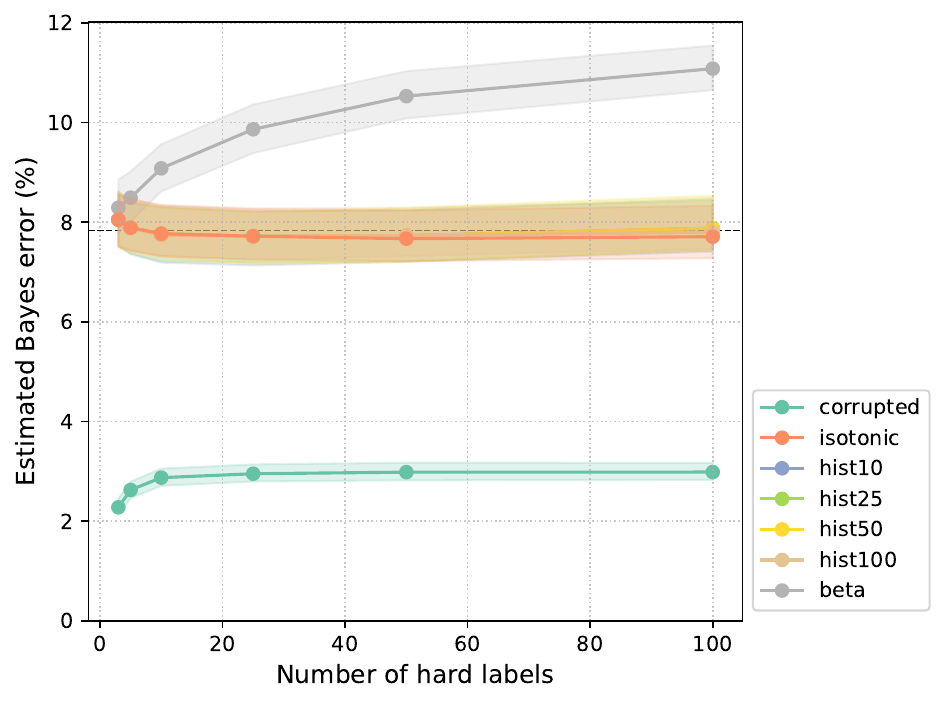}
    \caption{$a = 0.4, \ b = 0.5$}
    \label{fig:violation_0.4_0.5}
  \end{subfigure}
  \caption{
  The Bayes error estimated by directly using corrupted soft labels 
  (\texttt{corrupted}) and by calibrating them (others)
  for various numbers $m$ of hard labels per data point.
  The 95\% bootstrap confidence intervals are shown as shaded regions around each line.
  The black dashed lines indicate the Bayes error estimated with clean soft labels, which is expected to be a good approximation of the true Bayes error.
  }
  \label{fig:violation}
\end{figure}

{
\iclrrebuttal

\subsection{Further experiments}
\label{sec:additional_experiments}

In this section, we present additional details and results of the experiments
mentioned in \Cref{sec:exp:estimation}.

\subsubsection{ICLR peer-review datasets}

We put together new datasets, which consist of $n = 32,829$ instances of peer-review results for the past ICLR conferences.
Peer-review can be considered as a binary classification task (accept/reject).
We used our datasets to estimate the Bayes error of the ICLR reviews, which is the probability that the ideal, most competent possible reviewer mistakenly rejects a good paper or accepts a bad paper.
It can be regarded as representing the inherent difficulty of the review task.

For each paper submission $x_i$, we utilized the OpenReview API to retrieve:
\begin{itemize}
\item
Scores $s_i^{(j)}$ and confidences $c_i^{(j)}$ by the reviewers ($j = 1, \ldots, \text{\#reviewers}$)
\item
The final decision $y_i$: accept ($y_i = 1$) or reject ($y_i = 0$)
\end{itemize}
The averaged score $s_i$ is calculated as $s_i = \frac{\sum_{j} c_i^{(j)} s_i^{(j)}}{\sum_{j} c_i^{(j)}}$.
We can then obtain a soft label $\tilde{\eta}_i$ for $x_i$ by normalizing the averaged score $s_i$ to fit into $[0, 1]$.
Of course, this soft label $\tilde{\eta}_i$ should be considered as corrupted, so we apply isotonic calibration (and other calibration algorithms) before using them to estimate the Bayes error.

We merged data from ICLR 2017--2025 to construct a dataset consisting of $n = 32,829$ examples, each of which has a corrupted soft label (i.e., normalized average score) and a single hard label (i.e., final decision) for calibration.
We also conducted experiments with single-year datasets (ICLR2017, \ldots, ICLR2025).

\subsubsection{Variants of beta calibration}
\label{sec:additional_experiments:variants-beta}

Beta calibration is a calibration method with three adjustable parameters $a, b, m$, and this is what we have been using as \texttt{beta} since the initial submission.
The \texttt{beta-*} variants are beta calibration with restricted parameters,
which were also mentioned in the original beta calibration paper \citep{kull2017beta}:
\begin{itemize}
\item \texttt{beta-am}:
$a$ and $m$ are adjustable; $b$ is fixed to $b = a$.
\item \texttt{beta-ab}:
$a$ and $b$ are adjustable; $m$ is fixed to $m = 1/2$.
\item \texttt{beta-a}:
only $a$ is adjustable; $b, m$ is fixed to $b = a, m = 1/2$.
\end{itemize}

\subsubsection{Results}
\label{sec:additional_experiments:results}

The full results are shown in \Cref{fig:calib-iclr-2025}.
For ChaosNLI datasets (SNLI, MNLI, AbductiveNLI),
their GitHub repository provides predictions of some pre-trained models.
As a reference value, we show the best error rates among them as the dashed
horizontal lines in \Cref{fig:calib-snli,fig:calib-mnli,fig:calib-abductive}.
A problem in these experiments is that,
especially for real-world (non-synthetic) datasets,
it is hard to decide which calibration algorithm is the best,
since the true Bayes error is unknown.
We will solve this problem in \Cref{sec:feebee}.

\begin{figure*}[p]
    \iclrrebuttalcaption
    \centering
    \begin{subfigure}[t]{0.48\textwidth}
        \centering
        \includegraphics[width=\linewidth,height=0.20\textheight,keepaspectratio]{fig/calib/synthetic_BCa.pdf}
        \caption{Synthetic (BCa)}
        \label{fig:calib-synthetic-bca}
    \end{subfigure}
    \hfill
    \begin{subfigure}[t]{0.48\textwidth}
        \centering
        \includegraphics[width=\linewidth,height=0.20\textheight,keepaspectratio]{fig/calib/cifar10_BCa.pdf}
        \caption{CIFAR-10}
        \label{fig:calib-cifar10}
    \end{subfigure}

    \begin{subfigure}[t]{0.48\textwidth}
        \centering
      \includegraphics[width=\linewidth,height=0.20\textheight,keepaspectratio]{fig/calib/fashion_mnist_BCa.pdf}
        \caption{Fashion-MNIST}
        \label{fig:calib-fashion}
    \end{subfigure}
    \hfill
    \begin{subfigure}[t]{0.48\textwidth}
        \centering
        \includegraphics[width=\linewidth,height=0.20\textheight,keepaspectratio]{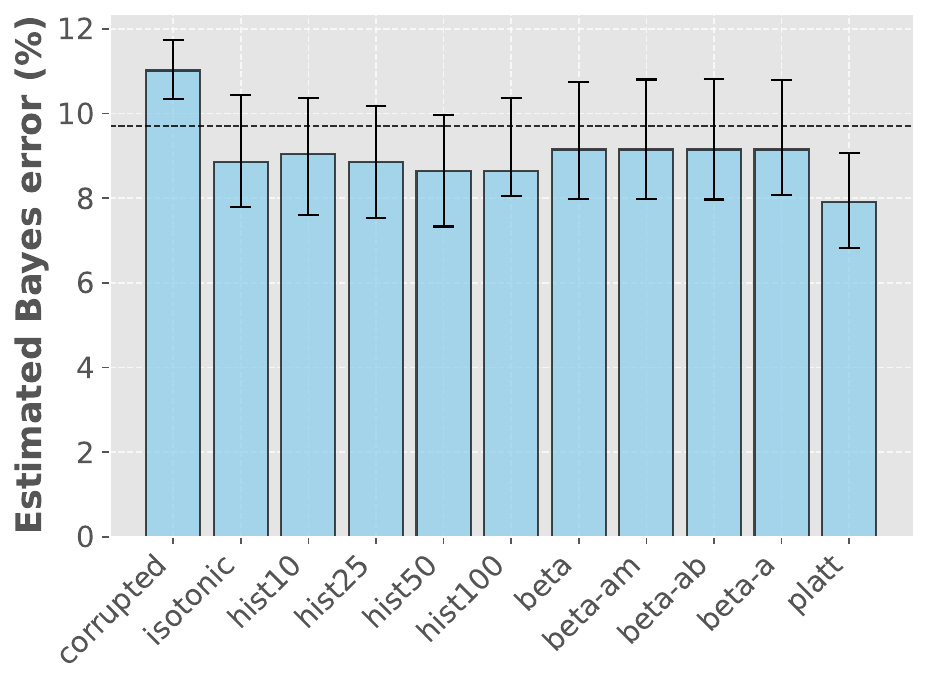}
        \caption{SNLI}
        \label{fig:calib-snli}
    \end{subfigure}

    \begin{subfigure}[t]{0.48\textwidth}
        \centering
        \includegraphics[width=\linewidth,height=0.20\textheight,keepaspectratio]{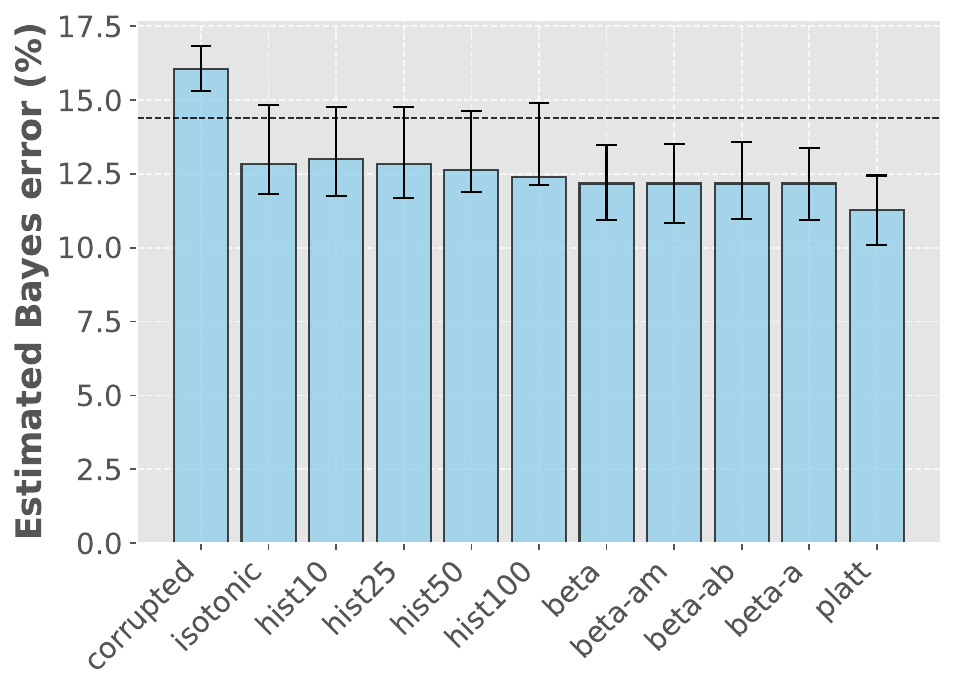}
        \caption{MNLI}
        \label{fig:calib-mnli}
    \end{subfigure}
    \hfill
    \begin{subfigure}[t]{0.48\textwidth}
        \centering
        \includegraphics[width=\linewidth,height=0.20\textheight,keepaspectratio]{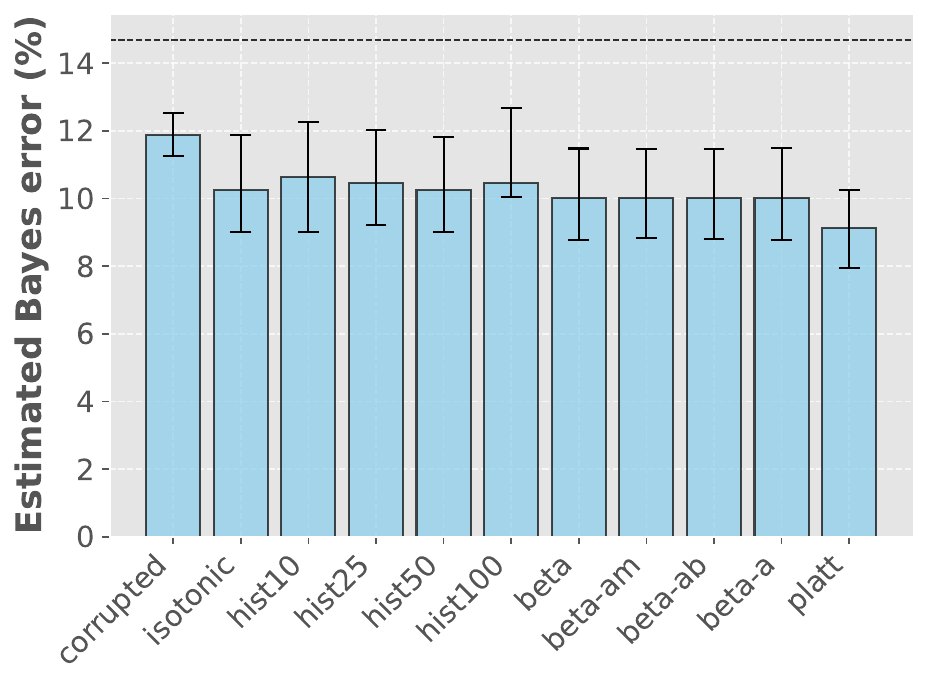}
        \caption{AbductiveNLI}
        \label{fig:calib-abductive}
    \end{subfigure}

    \begin{subfigure}[t]{0.48\textwidth}
        \centering
        \includegraphics[width=\linewidth,height=0.20\textheight,keepaspectratio]{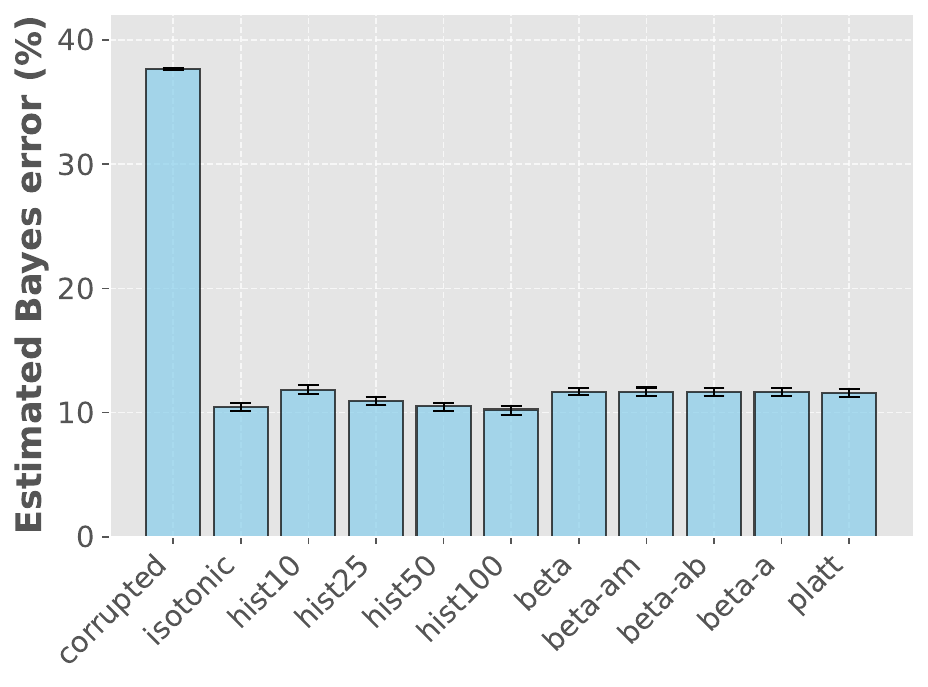}
        \caption{ICLR2017--2025}
        \label{fig:calib-iclr-all}
    \end{subfigure}
    \hfill
    \begin{subfigure}[t]{0.48\textwidth}
        \centering
        \includegraphics[width=\linewidth,height=0.20\textheight,keepaspectratio]{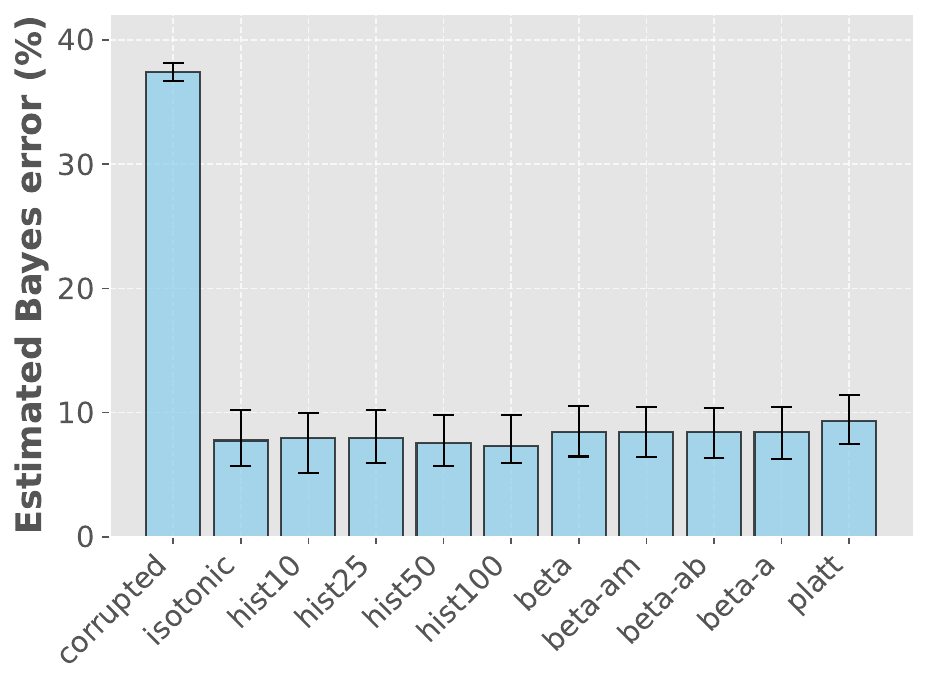}
        \caption{ICLR2017}
        \label{fig:calib-iclr-2017}
    \end{subfigure}

    \caption{
    Estimated Bayes error across various calibration algorithms and datasets.
    }
    \label{fig:calib}
\end{figure*}

\begin{figure*}[p]\ContinuedFloat
    \iclrrebuttalcaption
    \centering
    \setcounter{subfigure}{8}
    \begin{subfigure}[t]{0.48\textwidth}
        \centering
        \includegraphics[width=\linewidth,height=0.20\textheight,keepaspectratio]{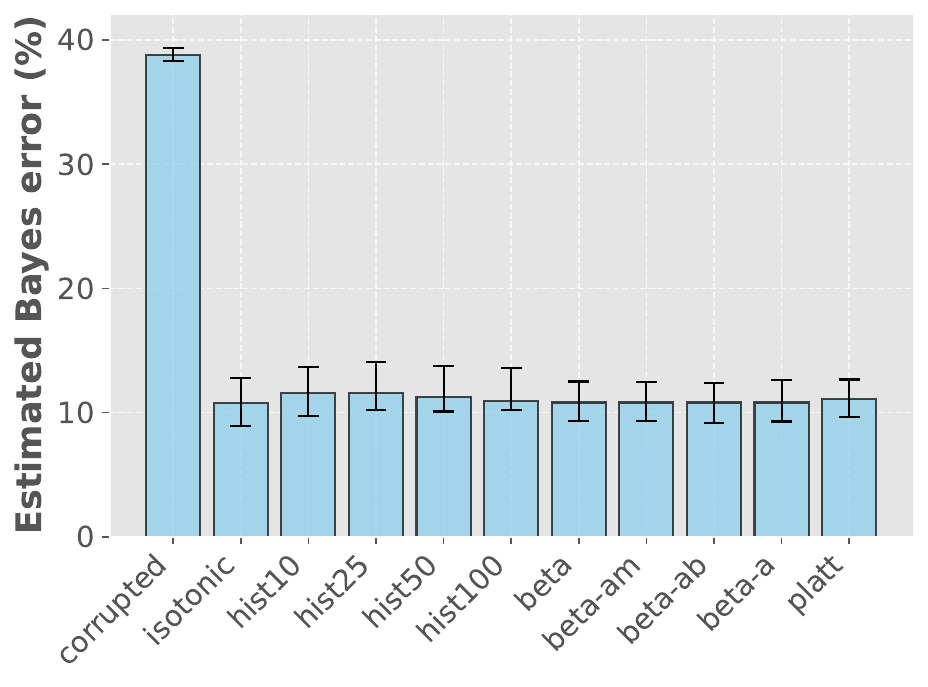}
        \caption{ICLR2018}
        \label{fig:calib-iclr-2018}
    \end{subfigure}
    \hfill
    \begin{subfigure}[t]{0.48\textwidth}
        \centering
        \includegraphics[width=\linewidth,height=0.20\textheight,keepaspectratio]{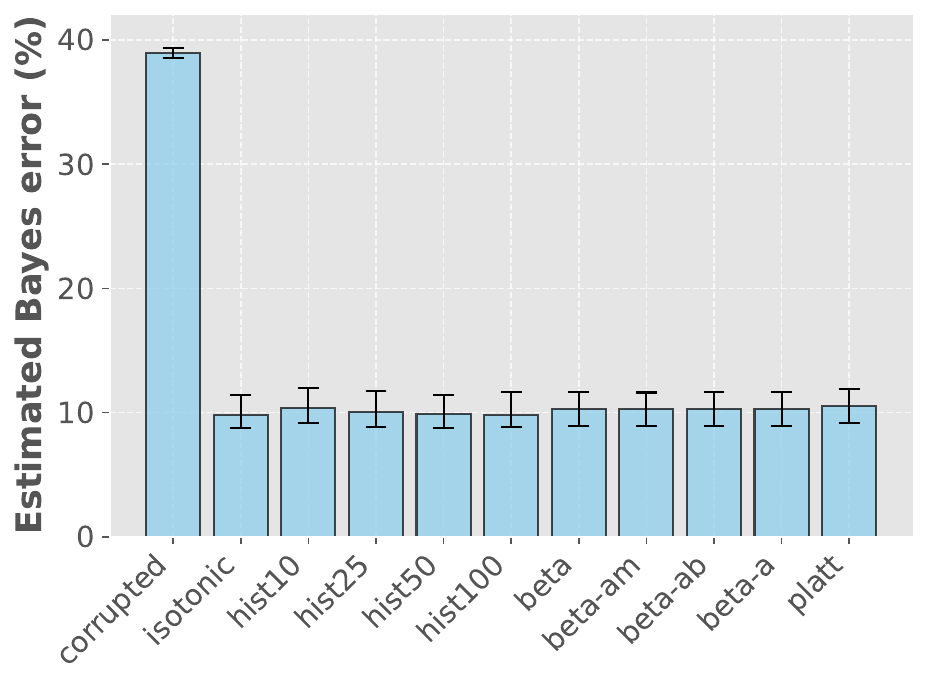}
        \caption{ICLR2019}
        \label{fig:calib-iclr-2019}
    \end{subfigure}

    \begin{subfigure}[t]{0.48\textwidth}
        \centering
        \includegraphics[width=\linewidth,height=0.20\textheight,keepaspectratio]{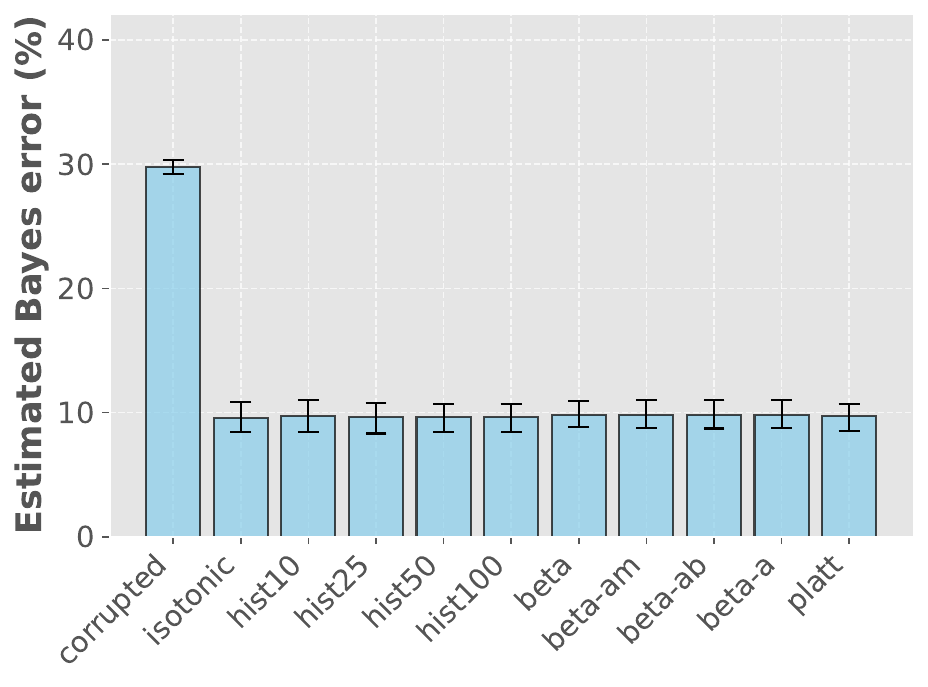}
        \caption{ICLR2020}
        \label{fig:calib-iclr-2020}
    \end{subfigure}
    \hfill
    \begin{subfigure}[t]{0.48\textwidth}
        \centering
        \includegraphics[width=\linewidth,height=0.20\textheight,keepaspectratio]{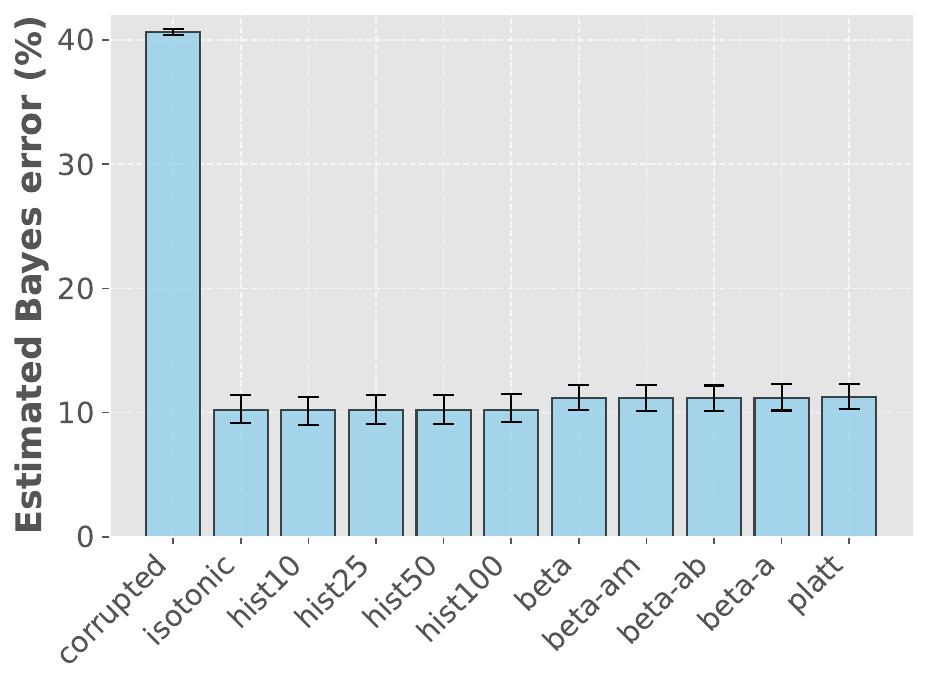}
        \caption{ICLR2021}
        \label{fig:calib-iclr-2021}
    \end{subfigure}

    \begin{subfigure}[t]{0.48\textwidth}
        \centering
        \includegraphics[width=\linewidth,height=0.20\textheight,keepaspectratio]{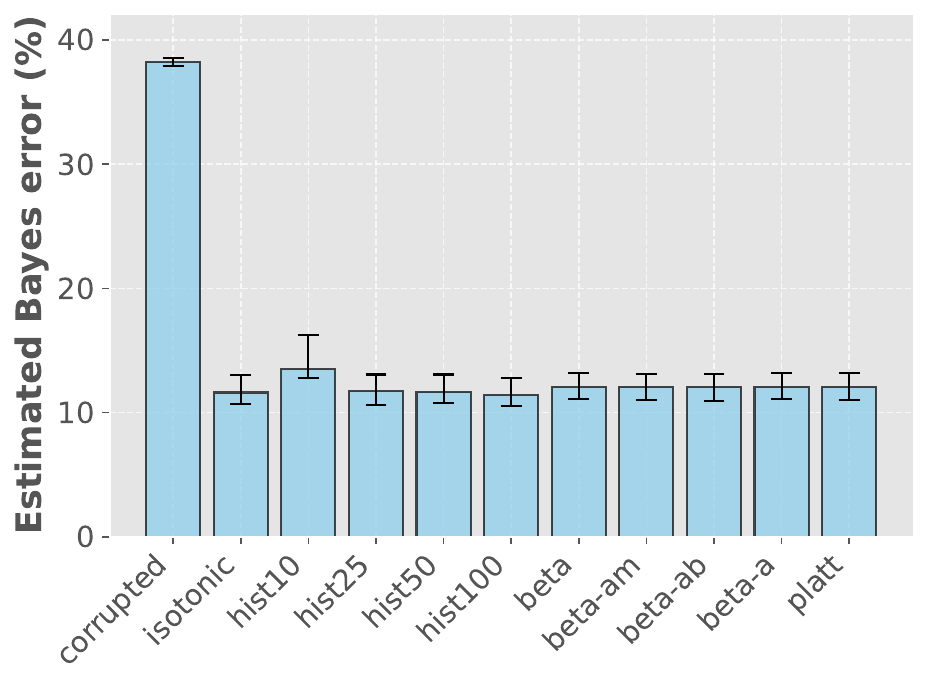}
        \caption{ICLR2022}
        \label{fig:calib-iclr-2022}
    \end{subfigure}
    \hfill
    \begin{subfigure}[t]{0.48\textwidth}
        \centering
        \includegraphics[width=\linewidth,height=0.20\textheight,keepaspectratio]{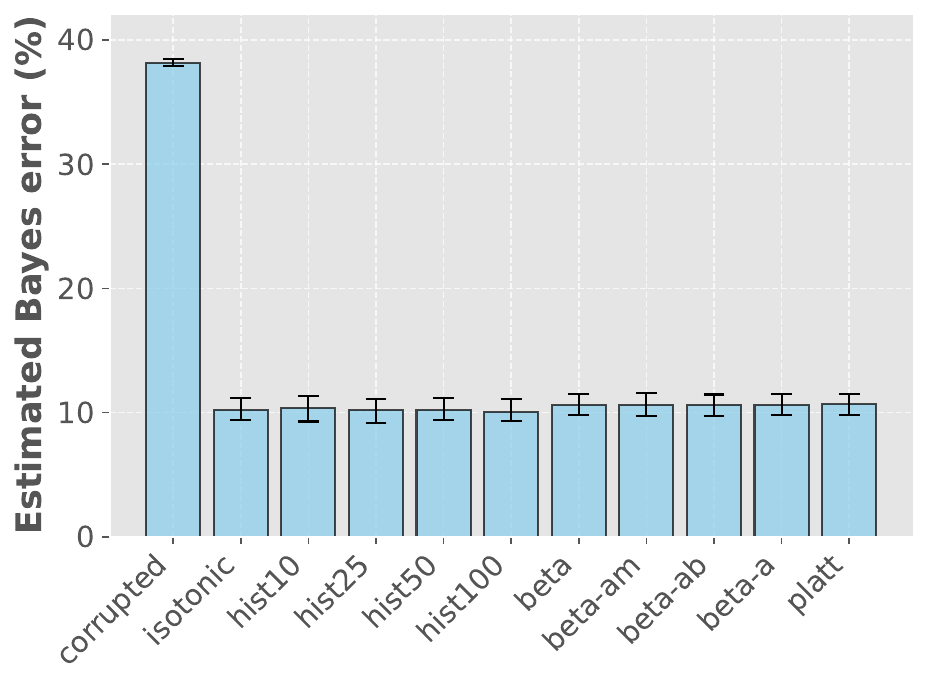}
        \caption{ICLR2023}
        \label{fig:calib-iclr-2023}
    \end{subfigure}

    \begin{subfigure}[t]{0.48\textwidth}
        \centering
        \includegraphics[width=\linewidth,height=0.20\textheight,keepaspectratio]{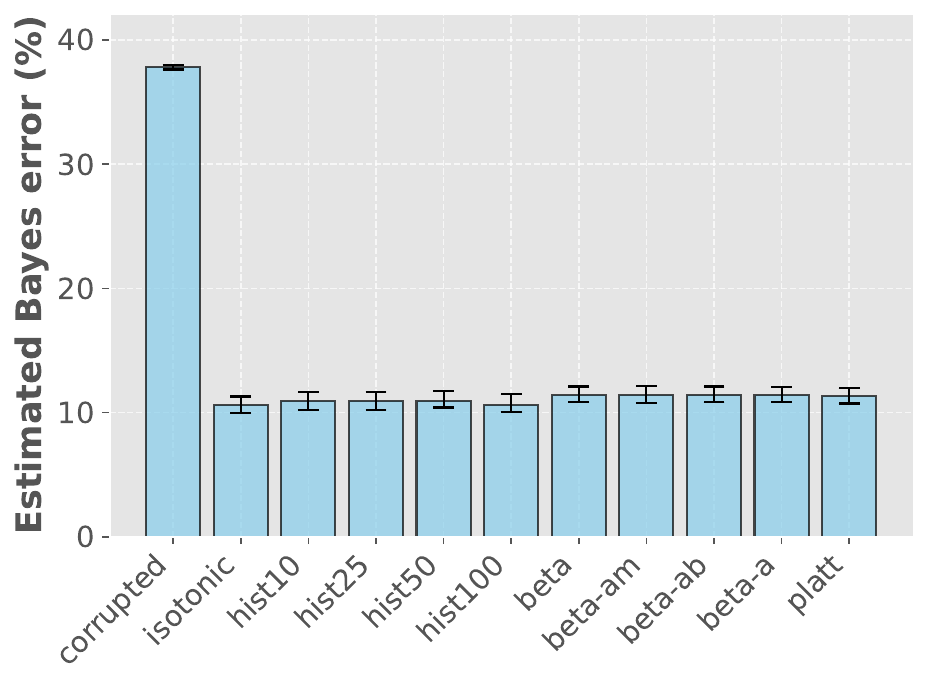}
        \caption{ICLR2024}
        \label{fig:calib-iclr-2024}
    \end{subfigure}
    \hfill
    \begin{subfigure}[t]{0.48\textwidth}
        \centering
        \includegraphics[width=\linewidth,height=0.20\textheight,keepaspectratio]{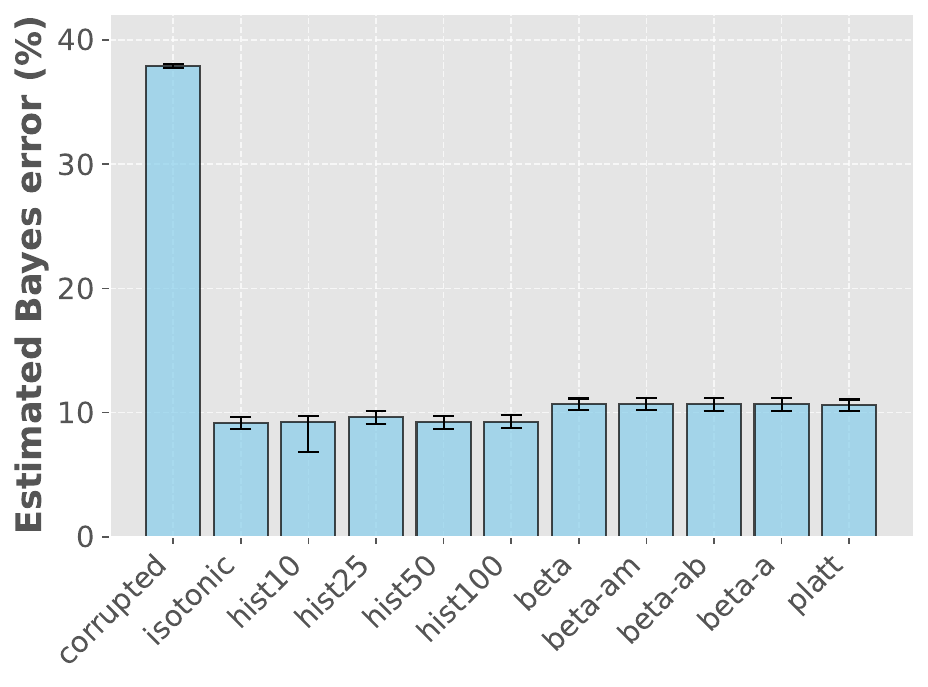}
        \caption{ICLR2025}
        \label{fig:calib-iclr-2025}
    \end{subfigure}

    \caption{
        Estimated Bayes error across various calibration algorithms and datasets (continued).
    }
\end{figure*}

\newpage

\subsection{Evaluating calibration algorithms against real-world datasets with FeeBee}
\label{sec:feebee}

In this section, we present the full details and results
of the FeeBee experiment described in \Cref{sec:feebee:body}.
We first review FeeBee \citep{renggli2021evaluating},
a real-world evaluation framework for Bayes error estimators.
Then, we present experimental results of various calibration algorithms
evaluated using FeeBee.

\subsubsection{Review of FeeBee}
\label{sec:feebee:review}

In the long history of the field of Bayes error estimation,
evaluation of estimators on real-world datasets has been a challenging task.
For synthetic datasets, one can easily compute exact or approximate Bayes error rates;
however, for real-world datasets, it is practically impossible to obtain ground-truth Bayes error rates
as they depend on the unknown data distribution.
Of course, it is trivial to obtain a lower bound and an upper bound of the Bayes error rate:
if we have a classifier with error rate $E$ on a dataset, then the Bayes error rate should be
somewhere between $0$ and $E$.
However, such bounds are not informative enough.
For example, constant estimators that always return any value between $0$ and $E$ are technically valid
from this perspective, but they are obviously useless in practice.

To address this issue, \citet{renggli2021evaluating} proposed an evaluation framework called FeeBee.
The key idea of FeeBee is to generate a series of datasets from a given real-world dataset
by injecting various levels of synthetic label noise.
To be more specific, for a noise level $\rho \in [0, 1]$,
FeeBee generates a new dataset by replacing each original label $Y$
with a uniformly random label $U$ with probability $\rho$:
\begin{equation}
Y_{\rho} \coloneqq Z \cdot U + (1 - Z) \cdot Y
\text{,}
\end{equation}
where $Z \sim \mathrm{Bernoulli}(\rho)$.
Importantly, there is a simple relationship between the Bayes error rates
$\besterr$ on the original dataset and
$\besterr_{\rho}$ on the noise-injected dataset:
\begin{equation}
\besterr_{\rho} = \rho \cdot \frac{1}{2} + (1 - \rho) \cdot \besterr
\text{.}
\end{equation}
Since $0 \leq \besterr \leq E$, we can derive the following bounds on $\besterr_{\rho}$:
\begin{equation}
\label{eq:feebee-bounds}
L(\rho) \coloneqq \frac{\rho}{2}
\leq \besterr_{\rho} \leq
\frac{\rho}{2} + (1 - \rho) E \eqqcolon U(\rho)
\text{.}
\end{equation}

Based on this observation, FeeBee first generates many noise-injected datasets
with different noise levels $\rho \in [0, 1]$, 
and then evaluates a given Bayes error estimator on each of them.
Ideally, the resulting estimates $\widehat{\besterr_{\rho}}$
should lie within the bounds $[L(\rho), U(\rho)]$ given in \eqref{eq:feebee-bounds}.
If the estimates fall outside the bounds, we penalize the estimator
by the amount of violation.
By aggregating the penalties over all noise levels,
we can obtain a single score for the estimator on the given real-world dataset:
\begin{equation}
\mathrm{FeeBee} \coloneqq \int_{0}^{1}
\left[
\paren{
\widehat{\besterr_{\rho}} - U(\rho)
}_+
+
\paren{
L(\rho) - \widehat{\besterr_{\rho}}
}_+
\right]
\,d\rho
\text{,}
\end{equation}
where $(x)_+ \coloneqq \max\{x, 0\}$.
In practice, the integral can be approximated by a finite sum:
for a sufficiently large $N \in \N$,
the approximate FeeBee score can be computed as
\begin{equation}
\mathrm{FeeBee} \approx \frac{1}{N + 1} \sum_{i=0}^{N}
\left[
\paren{
\widehat{\besterr_{\rho_i}} - U(\rho_i)
}_+
+
\paren{
L(\rho_i) - \widehat{\besterr_{\rho_i}}
}_+
\right]
\text{,}
\end{equation}
where $\rho_i \coloneqq \frac{i}{N} \ (i = 0, 1, \dots, N)$.
The lower the FeeBee score is, the better the estimator is.
FeeBee provides a practical way to evaluate Bayes error estimators
on real-world datasets without requiring knowledge of the true Bayes error rates.

\subsubsection{Comparing calibration algorithms using FeeBee}
\label{sec:feebee:details}

Here, we present experimental results where various calibration algorithms
are evaluated using the FeeBee framework.
We use the following real-world datasets:
     CIFAR-10/CIFAR-10H,
     Fashion-MNIST/Fashion-MNIST-H,
     SNLI,
     MNLI,
     AbductiveNLI,
     ICLR2017--2025,
     ICLR2017, \dots, and ICLR2025.
For each dataset, we compare the FeeBee scores of 
the following calibration algorithms:
isotonic calibration (\texttt{isotonic}), 
histogram binning (\texttt{hist-10}, \texttt{hist-25}, \texttt{hist-50} and \texttt{hist-100}),
full three-parameter beta calibration (\texttt{beta}),
beta calibration with $b = a$ (\texttt{beta-am}), 
beta calibration with $m = \frac{1}{2}$ (\texttt{beta-ab}),
beta calibration with $b = a, m = \frac{1}{2}$ (\texttt{beta-a}),
and Platt scaling (\texttt{platt}).
We set $N = 100$ for the approximation of FeeBee scores.

\textbf{Choosing $\bm{E}$} \quad
To compute the FeeBee scores, we need to choose a classifier error rate $E$
for each dataset.
For image classification datasets (CIFAR-10 \& Fashion-MNIST),
we use the error rates of Vision Transformer (ViT) models as we have seen in \Cref{chap:exp}.
For natural language inference datasets (SNLI, MNLI \& AbductiveNLI),
the ChaosNLI GitHub repository provides predictions of some pre-trained models.
We use the best error rates among them as $E$.
For the ICLR peer-review datasets, we do not have any pre-trained models,
so we simply set $E$ to the overall acceptance rate, which is
the error rate of a trivial reviewer who rejects any given paper no matter what.

\textbf{Results} \quad
The full results are shown in
\Cref{tab:feebee}.

\begin{table}[t]
    \iclrrebuttalcaption
    \centering
    \caption{
        FeeBee scores of calibration algorithms across 
        real-world datasets (lower is better). 
        The best scores for each dataset are highlighted 
        in \textbf{bold}, and
        the rank within each column 
        is shown in parentheses.
    }
    \label{tab:feebee}
    {
        \iclrrebuttal
        \small
        \begin{subtable}{\linewidth}
\centering
\begin{tabular*}{\linewidth}{@{\extracolsep{\fill}}l *5{c}}
\toprule
 & \multicolumn{5}{c}{Dataset} \\ \cmidrule{2-6}
Algorithm & CIFAR-10 & Fashion-MNIST & SNLI & MNLI & AbductiveNLI \\
\midrule
{\small \texttt{isotonic}} & 0.00307 (3) & \bfseries 0.00240 (1) & 0.00292 (2) & 0.00147 (2) & 0.00118 (2) \\
{\small \texttt{hist-10}} & 0.00318 (4) & 0.00250 (2) & \bfseries 0.00283 (1) & 0.00210 (5) & 0.00143 (3) \\
{\small \texttt{hist-25}} & \bfseries 0.00253 (1) & 0.00825 (6) & 0.00356 (4) & 0.00322 (7) & 0.00362 (4) \\
{\small \texttt{hist-50}} & 0.00322 (5) & 0.00329 (4) & 0.00520 (5) & 0.00421 (9) & 0.00558 (5) \\
{\small \texttt{hist-100}} & 0.00329 (6) & 0.00373 (5) & 0.00714 (6) & 0.00666 (10) & 0.00813 (6) \\
{\small \texttt{beta}} & 0.02396 (8) & 0.08796 (8) & 0.01392 (8) & 0.00380 (8) & 0.05400 (9) \\
{\small \texttt{beta-am}} & 0.02401 (10) & 0.09055 (10) & 0.01452 (9) & 0.00208 (4) & 0.05204 (8) \\
{\small \texttt{beta-ab}} & 0.02335 (7) & 0.08737 (7) & 0.01476 (10) & \bfseries 0.00143 (1) & 0.04949 (7) \\
{\small \texttt{beta-a}} & 0.02400 (9) & 0.08878 (9) & 0.01370 (7) & 0.00223 (6) & 0.05537 (10) \\
{\small \texttt{platt}} & 0.00278 (2) & 0.00262 (3) & 0.00305 (3) & 0.00154 (3) & \bfseries 0.00081 (1) \\
\bottomrule
\end{tabular*}
\end{subtable}

\medskip
\begin{subtable}{\linewidth}
\centering
\begin{tabular*}{\linewidth}{@{\extracolsep{\fill}}l *5{c}}
\toprule
 & \multicolumn{5}{c}{Dataset} \\ \cmidrule{2-6}
Algorithm & ICLR2017-2025 & ICLR2017 & ICLR2018 & ICLR2019 & ICLR2020 \\
\midrule
{\small \texttt{isotonic}} & 0.00013 (5) & 0.00508 (7) & 0.00166 (6) & 0.00120 (6) & 0.00085 (5) \\
{\small \texttt{hist-10}} & 0.00011 (3) & 0.00469 (6) & 0.00232 (7) & 0.00160 (7) & 0.00106 (6) \\
{\small \texttt{hist-25}} & 0.00016 (7) & 0.01086 (8) & 0.00517 (8) & 0.00352 (8) & 0.00137 (8) \\
{\small \texttt{hist-50}} & 0.00033 (9) & 0.01829 (9) & 0.00863 (9) & 0.00720 (9) & 0.00162 (9) \\
{\small \texttt{hist-100}} & 0.00056 (10) & 0.02915 (10) & 0.01586 (10) & 0.01114 (10) & 0.00181 (10) \\
{\small \texttt{beta}} & 0.00012 (4) & 0.00109 (2) & 0.00060 (2) & 0.00039 (2) & 0.00108 (7) \\
{\small \texttt{beta-am}} & 0.00014 (6) & 0.00120 (3) & 0.00062 (3) & \bfseries 0.00035 (1) & 0.00078 (4) \\
{\small \texttt{beta-ab}} & 0.00010 (2) & \bfseries 0.00082 (1) & \bfseries 0.00049 (1) & 0.00108 (5) & 0.00076 (3) \\
{\small \texttt{beta-a}} & 0.00017 (8) & 0.00200 (5) & 0.00097 (5) & 0.00066 (4) & 0.00056 (2) \\
{\small \texttt{platt}} & \bfseries 0.00001 (1) & 0.00135 (4) & 0.00067 (4) & 0.00059 (3) & \bfseries 0.00030 (1) \\
\bottomrule
\end{tabular*}
\end{subtable}

\medskip
\begin{subtable}{\linewidth}
\centering
\begin{tabular*}{\linewidth}{@{\extracolsep{\fill}}l *5{c}}
\toprule
 & \multicolumn{5}{c}{Dataset} \\ \cmidrule{2-6}
Algorithm & ICLR2021 & ICLR2022 & ICLR2023 & ICLR2024 & ICLR2025 \\
\midrule
{\small \texttt{isotonic}} & 0.00098 (6) & 0.00064 (6) & 0.00034 (6) & 0.00022 (6) & 0.00024 (6) \\
{\small \texttt{hist-10}} & 0.00156 (7) & 0.00090 (7) & 0.00074 (7) & 0.00044 (7) & 0.00042 (7) \\
{\small \texttt{hist-25}} & 0.00183 (8) & 0.00205 (8) & 0.00143 (8) & 0.00090 (8) & 0.00060 (8) \\
{\small \texttt{hist-50}} & 0.00388 (9) & 0.00386 (9) & 0.00265 (9) & 0.00133 (9) & 0.00093 (9) \\
{\small \texttt{hist-100}} & 0.00621 (10) & 0.00684 (10) & 0.00509 (10) & 0.00270 (10) & 0.00158 (10) \\
{\small \texttt{beta}} & 0.00042 (4) & 0.00023 (3) & 0.00030 (5) & 0.00005 (2) & 0.00009 (3) \\
{\small \texttt{beta-am}} & 0.00026 (3) & 0.00031 (5) & 0.00020 (2) & 0.00006 (3) & 0.00015 (5) \\
{\small \texttt{beta-ab}} & 0.00019 (2) & 0.00021 (2) & 0.00021 (3) & \bfseries 0.00003 (1) & 0.00007 (2) \\
{\small \texttt{beta-a}} & \bfseries 0.00017 (1) & \bfseries 0.00010 (1) & \bfseries 0.00008 (1) & 0.00006 (4) & \bfseries 0.00007 (1) \\
{\small \texttt{platt}} & 0.00046 (5) & 0.00029 (4) & 0.00023 (4) & 0.00011 (5) & 0.00011 (4) \\
\bottomrule
\end{tabular*}
\end{subtable}

    }
\end{table}

\newpage

\subsection{Order breakage}
\label{sec:order-break}

\Cref{thm:isotonic-bayes-error-bound} and \Cref{thm:noisy-iso}
assume that the corruption function $f$ is order-preserving
although the latter theorem allows
random noise to be added after the order-preserving transformation.
To analyze how much the estimation performance degrades
when the order-preserving assumption is violated,
we conducted experiments on synthetic datasets
where we can control the degree of order breakage.

Let $f$ be the corruption function used in \Cref{chap:exp}.
We define a new, non-order-preserving corruption
$f_\sigma$ as follows:
\begin{equation}
f_\sigma(\eta) = \mathrm{sigmoid}(
    \mathrm{logit}(f(\eta))
    + z
)
\text{,}
\quad
\text{where }
z \sim \mathcal{N}(0, \sigma^2)
\text{.}
\end{equation}
Here, $\sigma$ controls the amount of fluctuation added
after the order-preserving transformation $f$.
By increasing $\sigma$,
we can increase the degree of order breakage.
We first consider
the case where corrupted soft labels are obtained as
$\tilde{\eta}_{i} = f_\sigma(\eta_{i})$.
We also consider the ``non-monotonic skew + random noise'' setting,
i.e.,
we obtain corrupted soft labels as
an average of $m$ independent hard labels
sampled from posteriors skewed by the above
non-order-preserving corruption:
\begin{gather}
\tilde{\eta}_{i} =
\frac{1}{m} \sum_{j=1}^{m} y_{i}^{(j)}
\text{,}
\\
\text{where }
y_{i}^{(j)} \sim \mathrm{Bernoulli}\paren{
f_\sigma(\eta_{i})
}
\text{,\quad}
z \sim \mathcal{N}(0, \sigma^2)
\text{.}
\end{gather}

To quantify the degree of order breakage,
we use the Kendall tau \citep{kendall1938new} between $\eta_{i}$
and $f_{\sigma}(\eta_{i})$.
The Kendall tau or Kendall's rank correlation coefficient
is a non-parametric measure of ordinal correspondence or monotonicity between two variables.
It takes values in $[-1, 1]$.
If the relationship between two variables is completely increasing,
the Kendall tau becomes $1$.
If they are in a completely decreasing relationship, it takes
a value of $-1$.
Given the Kendall tau $\tau$,
the probability of order breakage (in our case,
the frequency that we have $\eta_{i} \leq \eta_{j}, \ f_\sigma(\eta_{i}) > f_\sigma(\eta_{j})$ or vice versa for a randomly picked pair $i < j$) can be obtained as
$\frac{1-\tau}{2}$.

We conducted experiments as below using the same Gaussian mixture model as in \Cref{chap:exp}.
For various order breakage levels $\sigma = 10^{-10}, 10^{-9}, \dots, 10^0$,
we estimated the Bayes error
from a dataset containing $n = 10,000$ corrupted 
soft labels generated as
$\tilde{\eta}_{i} = f_\sigma(\eta_{i})$
or from $m=50$ hard labels
sampled from posteriors skewed by $f_\sigma$.
\Cref{fig:order-break-synthetic-kendall} shows
the estimated Bayes error as a function of the
Kendall tau between $\eta_{i}$ and $f_\sigma(\eta_{i})$.
The black dashed line indicates the Bayes error estimated
using the clean/true soft labels and is supposed to be
a good approximation of the true Bayes error.

As expected, 
the estimation performance degrades
as the degree of order breakage increases
(i.e., as the Kendall tau decreases).
However, in the noiseless setting (\Cref{fig:order-break-synthetic-logit-gaussian}),
all the estimators produced estimates almost 
indistinguishable from the true Bayes error
when the Kendall tau is sufficiently large
(say, when $\tau \geq 0.95$ or 
when the order breakage probability is less than $2.5\%$).
For the noisy setting (\Cref{fig:order-break-synthetic-logit-gaussian-binomial-noise}),
the results are a bit different.
Overall, the estimation performance improves
as the Kendall tau increases,
but for beta calibration and its variants,
the estimates never get very close to the true Bayes error
even when the Kendall tau is nearly $1$.
Other calibration methods, including isotonic calibration,
produced estimates fairly close to the true Bayes error
(but not as close as in the noiseless setting)
for sufficiently high Kendall tau values.

\begin{figure}[p]
    \iclrrebuttalcaption
    \centering
    \begin{subfigure}[t]{0.9\linewidth}
        \centering
        \includegraphics[width=0.85\linewidth]{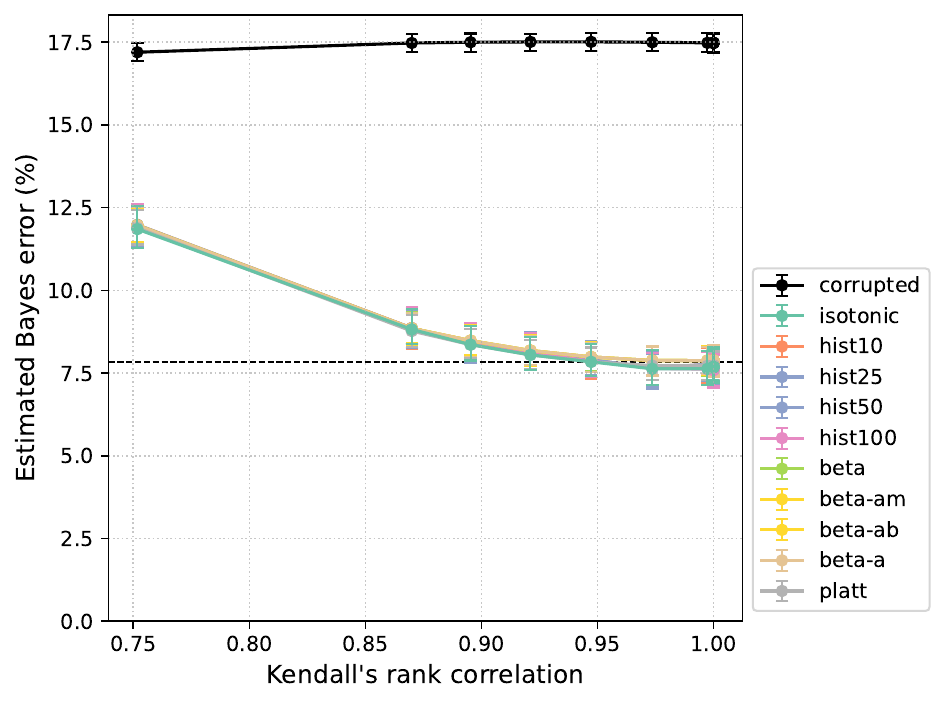}
        \caption{Non-order-preserving corruption without additional noise.}
        \label{fig:order-break-synthetic-logit-gaussian}
    \end{subfigure}

    \vspace{0.35em}

    \begin{subfigure}[t]{0.9\linewidth}
        \centering
        \includegraphics[width=0.85\linewidth]{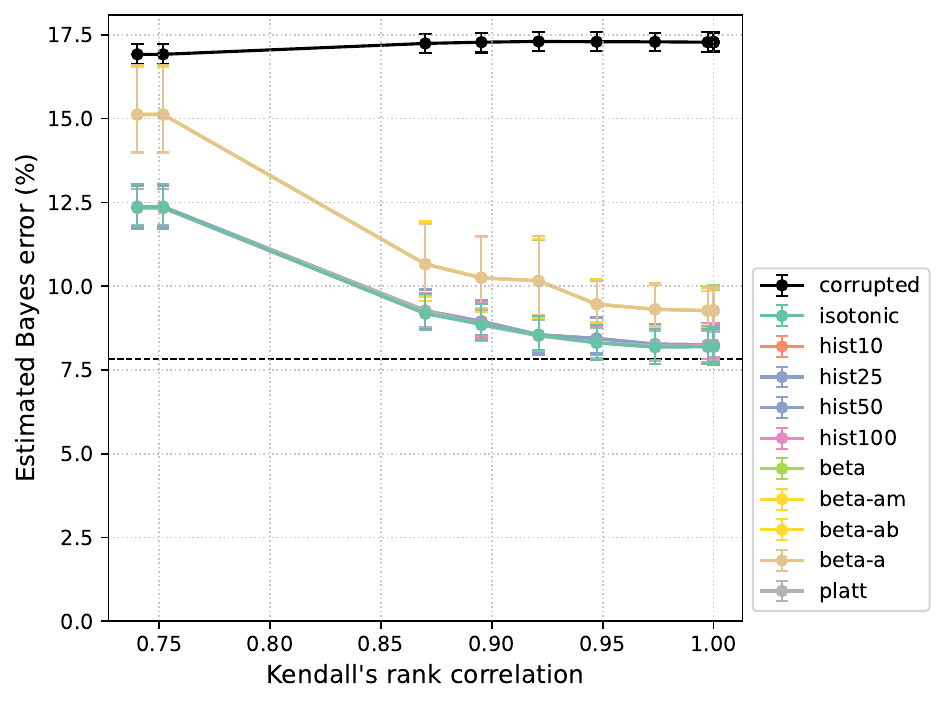}
        \caption{Non-order-preserving corruption with additional noise, i.e., the case where corrupted soft labels are obtained by averaging $m=50$ independent hard labels sampled from posteriors skewed by the non-order-preserving corruption.}
        \label{fig:order-break-synthetic-logit-gaussian-binomial-noise}
    \end{subfigure}
    \caption{Kendall tau and order breakage on synthetic logit Gaussian datasets with and without binomial noise.}
    \label{fig:order-break-synthetic-kendall}
\end{figure}

}

\end{document}